\newtheorem{prop}{Proposition}
\newcommand{\dd}{\mathop{}\! \mathrm{d}}
\algrenewcommand\algorithmicindent{0.5em}%
\DeclareMathOperator*{\argmin}{arg\,min}
\DeclarePairedDelimiter\floor{\lfloor}{\rfloor}
\begin{document}

%

%

\twocolumn[

\aistatstitle{Poisoning Bayesian Inference via Data Deletion and Replication}

\aistatsauthor{ Matthieu Carreau \And Roi Naveiro \And William N. Caballero }

\aistatsaddress{CUNEF Universidad \\ Nantes Université, \\ École Centrale Nantes, \\ CNRS LS2N, UMR 6004, \\ F-44000 Nantes, France 
 \And  CUNEF Universidad \And University of Arkansas} ]

\begin{abstract}

Research in adversarial machine learning (AML) has shown that statistical models are vulnerable to maliciously altered data. However, despite advances in Bayesian machine learning models, most AML research remains concentrated on classical techniques. Therefore, we focus on extending the white-box model poisoning paradigm to attack generic Bayesian inference, highlighting its vulnerability in adversarial contexts. A suite of attacks are developed that allow an attacker to steer the Bayesian posterior toward a target distribution through the strategic deletion and replication of true observations, even when only sampling access to the posterior is available.
Analytic properties of these algorithms are proven and their performance is empirically examined in both synthetic and real-world scenarios. With relatively little effort, the attacker is able to substantively alter the Bayesian's beliefs and, by accepting more risk, they can mold these beliefs to their will. By carefully constructing the adversarial posterior, surgical poisoning is achieved such that only targeted inferences are corrupted and others are minimally disturbed.

\end{abstract}

\section{INTRODUCTION}


Statistical analyses traditionally assume that the data-generation process is completely determined by a set of unknown parameters. However, adversarial machine learning (AML) challenges this assumption by allowing data to be systematically perturbed by a self-interested adversary, thereby incorporating a strategic element to the model-learning and prediction tasks. This insurgent perspective has permeated the algorithmic-culture of statistics but, with few exceptions \citep[e.g., see][]{gonzalez2021hypothesis}, is less explored in the data-modeling culture \citep{breiman2001statistical}. The impact of such omittance is conspicuously relevant in Bayesian settings since all inferences are underpinned by a posterior distribution calculated from the observed data. Thus, this research develops white-box poisoning attacks against generic Bayesian inference, underscoring its vulnerability in adversarial settings. 

We consider an oblivious Bayesian who fails to recognize the presence of an adversary. No opponent modeling has been accomplished, and the adversary is not considered in the likelihood or prior. The adversary desires to poison the true data such that the resulting posterior is drawn towards an alternative distribution of their choosing. Subject to a set of perturbation bounds, the attacker intends to accomplish this goal by only replicating or deleting truly observed data points. The resulting problem is a stochastic integer program characterized by an intractable objective function.

Adopting the perspective of the attacker, we develop a suite of attack methods under these conditions. Even though our objective function is intractable, we are able to compute an unbiased estimate of its gradient and embed it within myriad solution methods. Multiple variants of a two-stage stochastic gradient descent (SGD) attack are developed, where the continuous relaxation of the problem is optimized first, followed by the identification of a high-quality integer solution; several theoretical properties are explored.
Additionally, we introduce an alternative family of single-stage attacks that systematically searches along a perturbed gradient estimate, ensuring integrality constraints are always met. Empirical illustrations demonstrate the damaging effects of data poisoning on \textit{oblivious} Bayesian inference i.e., when the attacker's presence is overlooked. The effectiveness of each attack is further explored through systematic application against varied Bayesian models, showing substantive improvement over the benchmark fast-gradient sign method attack.




\section{RELATED WORK}

Among other factors, AML attacks are often typified by the attacker's intent and their knowledge \citep{rios2023adversarial}. Poisoning attacks corrupt the learning process whereas evasion (decision-time) attacks fool learned models in operation. Similarly, white-box attacks assume the attacker knows the form of the statistical model being learned and the data being used for that purpose, whereas gray- and black-box attacks assume partial knowledge and complete ignorance of these elements, respectively. Attacks of varied forms against support vector machines and neural networks are well-represented in the literature, e.g., see \citet{biggio2012poisoning}, \citet{xiao2015support}, \citet{kurakin2016adversarial}, and \citet{yang2017generative}. Numerous applications have been explored, varying from computer-vision \citep{kurakin2018adversarial} to large-language-model applications \citep{wan2023poisoning}. \citet{biggio2018wild} provide a thorough review of early AML research, subsequently updated by \citet{li2022review} and \citet{vorobeychik2022adversarial}, whereby Bayesian approaches are notably underrepresented. Likewise, \citet{cina2023wild} provide a tailored review on poisoning attacks in which the dearth of Bayesian emphasis persists. Although some AML research has focused on generative methods \citep[e.g., see][]{mei2015security} and Bayesian neural networks \citep[e.g., see][]{carbone2020robustness}, they are the exception, not the rule.

Nevertheless, although usually performed for distinct purposes, the corruption of sample data is not a wholly new concept in statistics. \citet{cook1977detection} considered the deletion of data points in a linear regression context for outlier detection. A Bayesian analog was developed by \citet{johnson1983predictive} to determine the effect of specified data points on posterior predictive distributions. Moreover, following the example of \citet{huber1992robust}, robust statistics often considers the effect of a contaminating distribution on inference procedures. Such work continues contemporaneously. \cite{broderick2020automatic} develop a robustness metric for $Z$-estimators by identifying the minimum proportion of sample data that must be deleted to reverse the statistician's conclusions. Recent work also examines the sensitivity of analogous MCMC-based analysis to the limited removal of data \citep{nguyen2024sensitivity}. This perspective is conceptually linked to the identification and use of coresets that seeks to remove a large proportion of the data set while preserving the statistician's conclusions \citep{zhang2021bayesian, yang2023towards}. 

Despite these conceptual similarities, the bulk of such previous work considers data corruption as a means to an end (e.g., for data reduction). It neglects that statistical inference occurs within a broader multi-agent setting. This perspective is espoused by \citet{schorfheide2012use}, \citet{gonzalez2021hypothesis}, \citet{bates2022principal} and \citet{bates2023incentive}. Akin to \citet{kamenica2011bayesian}, recent Bayesian perspectives in this vein reduce (partially) to information-design problems; they vary primarily upon the underlying inference task and the means through which information is modified. Individual tasks (e.g., hypothesis testing) with domain-specific influence have been considered, but study of adversarial data corruption on generic Bayesian inference is lacking. 





\section{BACKGROUND}

The \textit{oblivious} Bayesian, targeted by the adversary, is an agent (referred to as the defender, $D$) performing Bayesian inference on an unknown quantity $\theta \in \mathbb{R}^d$. Upon observing data $\mathbf{X}=(X_1, ..., X_n)^\top \in \mathcal{X} \subseteq \mathbb{R}^{n \times p}$, the posterior distribution encodes all information about $\theta$. Under untainted data, the posterior is computed as follows:
\begin{eqnarray*}
    \pi(\theta | \mathbf{X} ) = \frac{1}{Z} \exp \left( \sum_{i=1}^n \log \pi(X_i | \theta) \right) \pi(\theta)
\end{eqnarray*}
where $\pi(\theta)$ is the prior distribution, $\pi(X_i \vert \theta)$ is the likelihood function, and $Z$ is the normalization constant. 
However, computing the exact posterior is generally not feasible, so approximation methods must be used. Two widely employed families of approximation techniques are sampling-based (e.g., MCMC) and optimization-based (e.g., variational Bayes) methods. For the remainder of this manuscript, we assume that $D$ approximates the posterior using MCMC methods.

The adversary (attacker, $A$) poisons the inferences made by $D$ through the removal and/or replication of data points from $\mathbf{X}$ to steer the posterior towards a target distribution, i.e., the \textit{adversarial posterior}. This attack is represented by a vector $w \in \mathbb{Z}^n_{\geq 0}$ of length $n$ of non-negative integers, where $w_i = k > 1$ indicates that the $i$-th data point is repeated $k$ times, $w_i = 0$ indicates that the data point is removed, and $w_i = 1$ indicates that the data point is unchanged. The posterior induced by attack $w$ is thus given by:

\begin{eqnarray} \label{eq:weighted_posterior}
    \pi_w(\theta | \mathbf{X} ) = \frac{1}{Z(w)} \exp \left( \sum_{i=1}^n w_i \log \pi(X_i | \theta) \right) \pi(\theta)
\end{eqnarray}

The goal of the attacker is to find $w$ that will drive the posterior as close as possible to the target distribution while making minimal changes to remain undetected, as formalized in the next section.

\section{PROBLEM FORMULATION}

$A$ desires to select $w \in \mathbb{Z}^n_{\geq 0}$ that steers the posterior distribution towards the adversarial posterior, $\pi_A(\theta)$. The similarity between the tainted posterior $\pi_w(\theta | \mathbf{X})$ and the adversarial posterior $\pi_A(\theta)$ is evaluated using the forward Kullback-Leibler (KL) divergence, a choice we justify subsequently. To avoid detection, the attacker cannot change many data points. We formalize this by allowing a maximum number $B$ of \emph{data point manipulations} (deletions and replications). We also limit the number of repetitions for any given data point. Thus, the attacker's problem is given by:

\begin{eqnarray}\label{eq:main}
\min_w && \text{KL}(\pi_A(\theta) \parallel \pi_w(\theta | \mathbf{X})) \\
\text{s.t.} && \|w - \mathbf{1}\|_1 \leq B \nonumber \\
&& \|w \|_\infty \leq L  \nonumber \\
&& w \in \mathbb{Z}^n_{\geq 0} \nonumber
\end{eqnarray}

Herein, $\mathbf{1}$ is a  length-$n$ vector of ones, $B \in \mathbb{Z}_{\geq 0}$ is the maximum allowable manipulations, and $L \in \mathbb{Z}_{> 0}$ is the maximum repetitions of a given data point. 

Through algebraic manipulation, it is straightforward to see that minimizing the KL divergence in problem \eqref{eq:main} is equivalent to minimizing:
\begin{eqnarray*}
- w^\top \cdot \mathbb{E}_{\pi_A(\theta)}\left[ f_X(\theta)\right] + \log Z(w)
\end{eqnarray*}
where $f_X(\theta) = \left[ \log \pi(X_1 | \theta), \dots, \log\pi(X_n | \theta) \right]$. In general, this objective function cannot be evaluated exactly because the log-normalization constant, $\log Z(w)$, may be intractable. Moreover, even if $\log Z(w)$ was known, computing the expectation $\mathbb{E}_{\pi_A(\theta)}\left[ f_X(\theta)\right]$ cannot generally be accomplished analytically. The requirement that $w$ be an integer vector further complicates the optimization problem as well. Therefore, the difficulties in expeditiously identifying an optimal attack are apparent. 

Nevertheless, problem \eqref{eq:main} is endowed with useful analytical properties that suggest reasonable heuristics to approximate its solution. 

\begin{prop} \label{prop1}
    The objective function in problem \eqref{eq:main} is convex in $w$.
\end{prop}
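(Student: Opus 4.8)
The plan is to split the objective into its two summands and treat them separately. Writing $g(w) = -\,w^\top \mathbb{E}_{\pi_A(\theta)}[f_X(\theta)] + \log Z(w)$, the first term is affine in $w$, since $\mathbb{E}_{\pi_A(\theta)}[f_X(\theta)]$ is a fixed vector independent of $w$, and affine functions are convex. Convexity of $g$ therefore reduces entirely to convexity of the log-normalizer $w \mapsto \log Z(w)$, where
$$ Z(w) = \int \exp\!\bigl( w^\top f_X(\theta) \bigr)\, \pi(\theta)\, \dd\theta . $$

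First I would recognize $\log Z(w)$ as the log-partition function of an exponential family in which $w$ is the natural parameter, $f_X(\theta)$ the vector of sufficient statistics, and $\pi(\theta)$ the base measure. Convexity of the log-partition function in its natural parameters is classical, and I would give the direct argument via Hölder's inequality, which avoids any differentiability hypothesis. Fix feasible $w^{(1)}, w^{(2)}$ and $\lambda \in [0,1]$, and set $w^{(\lambda)} = \lambda w^{(1)} + (1-\lambda) w^{(2)}$. Factoring the integrand as $\exp\bigl((w^{(\lambda)})^\top f_X\bigr)\pi = \bigl[\exp((w^{(1)})^\top f_X)\,\pi\bigr]^{\lambda}\,\bigl[\exp((w^{(2)})^\top f_X)\,\pi\bigr]^{1-\lambda}$ and applying Hölder's inequality with conjugate exponents $1/\lambda$ and $1/(1-\lambda)$ yields $Z(w^{(\lambda)}) \le Z(w^{(1)})^{\lambda}\, Z(w^{(2)})^{1-\lambda}$. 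Taking logarithms gives $\log Z(w^{(\lambda)}) \le \lambda \log Z(w^{(1)}) + (1-\lambda)\log Z(w^{(2)})$, which is exactly the convexity inequality.

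To expose why the result is natural, I would also sketch the second-order route: differentiating twice under the integral sign shows $\nabla^2 \log Z(w) = \mathrm{Cov}_{\pi_w}\bigl(f_X(\theta)\bigr)$, the covariance matrix of the sufficient statistics under the tainted posterior $\pi_w$. A covariance matrix is positive semidefinite, so $\log Z$ is convex. I expect the main obstacle to lie in the regularity needed to justify this route rigorously, namely that $Z(w) < \infty$ over the relevant set and that differentiation may be interchanged with integration (a dominated-convergence argument with integrable envelopes for $f_X$ and its pairwise products). This is precisely why I would lead with the Hölder argument, which requires only finiteness of $Z$ and no smoothness. Finally, as $g$ is the sum of an affine function and the convex function $\log Z$, it is convex, completing the proof.
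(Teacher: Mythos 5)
Your proof is correct, but it takes a genuinely different route from the paper. The paper argues entirely through second-order information: it observes that the tainted posteriors $\pi_w(\theta\mid\mathbf{X})$ form an exponential family with natural parameter $w$, sufficient statistics $f_X(\theta)$, and log-partition function $\log Z(w)$, invokes the standard identities $\nabla_w \log Z(w) = \mathbb{E}_{\pi_w(\theta\mid\mathbf{X})}[f_X(\theta)]$ and $\nabla_w^2 \log Z(w) = \mathrm{Cov}_{\pi_w(\theta\mid\mathbf{X})}(f_X(\theta), f_X(\theta))$, and concludes convexity from positive semi-definiteness of the covariance --- i.e., exactly the argument you relegate to a sketch. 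Your primary argument via H\"older's inequality is the classical zeroth-order proof of log-convexity of the partition function, and it buys something real: it needs only finiteness of $Z$ on the feasible set, with no differentiation under the integral sign and no integrability of $f_X$ or its products, which is precisely the regularity the paper's route quietly assumes. What the paper's route buys, and why it is the natural choice in context, is that the gradient identity is not merely a stepping stone to convexity: it is the central computational object of the whole paper, since the unbiased sampling-based gradient estimate $\frac{1}{P}\sum_i f_X(\theta_i) - \frac{1}{Q}\sum_j f_X(\theta_j)$ that drives SGD-R2, Adam-R2, 2O-R2, and the ISCD attacks is read off directly from it, and the covariance formula for the Hessian is reused in the second-order heuristics and in Proposition 2's strong-convexity condition. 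So the paper's proof does double duty, while yours isolates convexity under weaker hypotheses; both are valid, and your observation about the hidden regularity assumptions is a fair criticism of the second-order route.
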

\small
\begin{proof}
    The gradient of the objective function is given by:
    \begin{eqnarray*}
    & \nabla_w \big(\log Z(w) - w^\top \cdot \mathbb{E}_{\pi_A(\theta)}\left[ f_X(\theta)\right] \big) = \\
    & \qquad  \qquad \qquad \mathbb{E}_{\pi_w(\theta \vert \mathbf{X})}\left[ f_X(\theta)\right] - \mathbb{E}_{\pi_A(\theta)}\left[ f_X(\theta)\right]
    \end{eqnarray*}
    This holds because distributions of form \eqref{eq:weighted_posterior} constitute a subset of an exponential family with parameter $w$, log-partition function $\log Z(w)$, and sufficient statistics given by the log-likelihoods of each data point \citep{campbell2019sparse}. It is well known that the gradient of the log-partition function with respect to its natural parameter is the mean of the sufficient statistic. Hence, we have $\nabla_w \log Z(w) = \mathbb{E}_{\pi_w(\theta \vert \mathbf{X})}\left[ f_X(\theta)\right]$.
    Similarly, the Hessian of the objective function is given by:
    \begin{eqnarray} \label{eq:cov}
    \nabla_w \left( \nabla_w \log Z(w) \right) = \text{Cov}_{\pi_w(\theta | \mathbf{X})}\left( f_X(\theta), f_X(\theta)\right)
    \end{eqnarray}
    Since the Hessian equals the covariance matrix of the sufficient statistics with respect to $\pi_w$, it is positive semi-definite $\forall w$, and the objective function is convex.
\end{proof}
\normalsize


Moreover, the proof of Proposition \ref{prop1} implies a method for computing unbiased estimates of the gradient of our objective function with respect to the weights $w$. This suggests the use of projected SGD for solving a continuous relaxation of problem \eqref{eq:main}. The gradient of our objective function can be written as a sum of two expectations, allowing us to compute unbiased estimates of the gradient through sampling. Specifically, if we sample $\left \{\theta_i \right\}_{i=1}^P \overset{\text{iid}}{\sim} \pi_w(\theta | \mathbf{X})$ and $\left \{\theta_j \right\}_{i=1}^Q \overset{\text{iid}}{\sim} \pi_A(\theta)$, we can approximate the gradient as follows:
\begin{equation*}
    \frac{1}{P} \sum_{i=1}^{P} f_X(\theta_i) - \frac{1}{Q} \sum_{j=1}^{Q} f_X(\theta_j)
\end{equation*}

It can be shown that projected SGD is assured to converge to an optimal solution of the continuous relaxation under fairly general conditions. 

 \begin{prop} \label{prop2}
Let $\hat{\mu}_w = \frac{1}{P} \sum_{i=1}^{P} f_X(\theta_i)$ and $\hat{\eta} = \frac{1}{Q} \sum_{j=1}^{Q} f_X(\theta_j)$, and suppose $\left \{\theta_i \right\}_{i=1}^P \overset{\text{iid}}{\sim} \pi_w(\theta | \mathbf{X})$ and $\left \{\theta_j \right\}_{i=1}^Q \overset{\text{iid}}{\sim} \pi_A(\theta)$. Define $\mathcal{W}$ as a convex set of feasible $w$, and assume projected SGD is used to find iterates $w^t$ using the $L^2$ projection. If 
\begin{enumerate}
    \item $g(w) =\mathbb{E}_{\pi_w(\theta \vert \mathbf{X})}\left[ f_X(\theta)\right] - \mathbb{E}_{\pi_A(\theta)}\left[ f_X(\theta)\right]$ exists and is finite, $\forall w \in \mathcal{W}$, 
    \item the eigenvalues of $\text{Cov}_{\pi_w(\theta | \mathbf{X})}\left( f_X(\theta), f_X(\theta)\right)$ are lower-bounded by some $c>0$, $\forall w \in \mathcal{W}$,
    \item the eigenvalues of $\text{Cov}_{\pi_w(\theta|\mathbf{X})}(\hat{\mu}_w, \hat{\mu}_w)$, $\forall w \in \mathcal{W}$, and $\text{Cov}_{\pi_A(\theta)}(\hat{\eta}, \hat{\eta})$ are bounded above by $a$ and $b$, respectively, and
    \item the learning rate $\gamma_t = \frac{1}{ct}$,
\end{enumerate}

then 

$$\mathbb{E}\left[\lVert w^t - \tilde{w} \rVert^2 \right]\le \max \left\{ \frac{M^2}{c^2t}, \frac{\lVert w^0 - \tilde{w} \rVert^2}{t}\right\}.
$$ 

\noindent where $\tilde{w}$ is the optimal solution to the continuous relaxation, and $M^2=n(a+b) + \max_{w \in \mathcal{W}} \ \lVert g(w) \rVert^2 $.
\end{prop}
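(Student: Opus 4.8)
The plan is to run the textbook analysis of projected stochastic gradient descent on a strongly convex objective and specialize the generic constants to this problem. Write $F(w)=\log Z(w)-w^\top\mathbb{E}_{\pi_A(\theta)}[f_X(\theta)]$ for the objective, let $D_t=\lVert w^t-\tilde w\rVert^2$, and let $\mathcal{F}_t$ be the history of the iterates through step $t$. Denote the stochastic gradient by $\hat g(w^t)=\hat\mu_{w^t}-\hat\eta$, whose conditional mean is $g(w^t)$ by Proposition \ref{prop1}. The projected update is $w^{t+1}=\Pi_{\mathcal{W}}\big(w^t-\gamma_t\hat g(w^t)\big)$ with $\Pi_{\mathcal{W}}$ the Euclidean projection onto the convex set $\mathcal{W}$; since $\tilde w\in\mathcal{W}$ is a fixed point of $\Pi_{\mathcal{W}}$, non-expansiveness of the projection gives $D_{t+1}\le\lVert w^t-\gamma_t\hat g(w^t)-\tilde w\rVert^2$.

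First I would expand this squared norm and take the conditional expectation given $\mathcal{F}_t$. Conditional unbiasedness of $\hat g(w^t)$ turns the cross term into $-2\gamma_t\langle g(w^t),w^t-\tilde w\rangle$, so that
\begin{equation*}
\mathbb{E}[D_{t+1}\mid\mathcal{F}_t]\le D_t-2\gamma_t\langle g(w^t),w^t-\tilde w\rangle+\gamma_t^2\,\mathbb{E}\big[\lVert\hat g(w^t)\rVert^2\mid\mathcal{F}_t\big].
\end{equation*}
Two structural facts collapse this into a scalar recursion. Assumption 2 forces the Hessian \eqref{eq:cov} to satisfy $\nabla^2F\succeq cI$, so $F$ is $c$-strongly convex; combined with the variational optimality inequality $\langle g(\tilde w),w^t-\tilde w\rangle\ge0$ for the constrained minimizer $\tilde w$, this yields $\langle g(w^t),w^t-\tilde w\rangle\ge cD_t$. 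For the last term I would use the decomposition $\mathbb{E}[\lVert\hat g(w^t)\rVert^2\mid\mathcal{F}_t]=\lVert g(w^t)\rVert^2+\operatorname{tr}\big(\text{Cov}(\hat g(w^t)\mid\mathcal{F}_t)\big)$, observe that $\hat\mu_{w^t}$ and $\hat\eta$ are sampled independently so their covariances add, and bound each trace by the dimension $n$ times the per-component eigenvalue bounds $a$ and $b$ of Assumption 3. With $\lVert g(w^t)\rVert^2\le\max_{w\in\mathcal{W}}\lVert g(w)\rVert^2$, this is exactly where $M^2=n(a+b)+\max_{w\in\mathcal{W}}\lVert g(w)\rVert^2$ appears, giving $\mathbb{E}[\lVert\hat g(w^t)\rVert^2\mid\mathcal{F}_t]\le M^2$.

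Substituting these two bounds and taking full expectations produces the recursion $\mathbb{E}[D_{t+1}]\le(1-2c\gamma_t)\mathbb{E}[D_t]+\gamma_t^2M^2$. Plugging in $\gamma_t=1/(ct)$ and inducting on $t$ then gives $\mathbb{E}[D_t]\le Q/t$ with $Q=\max\{M^2/c^2,D_0\}$, which is the claimed bound: the inductive step follows from $M^2/c^2\le Q$ together with the elementary inequality $t^2-1\le t^2$, and the base case uses that a single step from $w^0$ already satisfies $\mathbb{E}[D_1]\le M^2/c^2\le Q$. I expect the main obstacle to be the second-moment bound and its constant $M^2$: one must track that $f_X(\theta)$ is $n$-dimensional, so each covariance contributes $n$ times its eigenvalue bound to the trace, that the two Monte Carlo estimators are independent so variances add, and that the bias term $\lVert g(w^t)\rVert^2$ is separated out correctly. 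A secondary subtlety is that $\gamma_t=1/c$ at $t=1$ makes the naive contraction factor negative; this is benign because strong convexity together with $\lVert g(w^0)\rVert^2\le M^2$ forces $D_0\le M^2/c^2$, keeping the bound nonnegative and consistent. The remaining ingredients—projection non-expansiveness, unbiasedness, and strong convexity—are standard once Proposition \ref{prop1} and the stated assumptions are in hand.
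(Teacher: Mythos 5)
Your proof is correct, and its problem-specific content coincides exactly with the paper's: both obtain $c$-strong convexity from the eigenvalue lower bound applied to the Hessian--covariance identity of Proposition \ref{prop1}, both use conditional unbiasedness of $\hat g$, and both derive the second-moment bound $\mathbb{E}\left[\lVert \hat g(w)\rVert^2\right] \le n(a+b) + \max_{w\in\mathcal{W}}\lVert g(w)\rVert^2 = M^2$ via the same three ingredients (mean--variance decomposition, independence of the two Monte Carlo estimators so that their covariances add, and bounding each trace by $n$ times the eigenvalue cap). Where you genuinely diverge is in how the generic convergence machinery is handled: the paper treats the strongly convex projected-SGD rate as a black box, citing Nemirovski et al.\ (2009) and Shapiro et al.\ (2021), and devotes its entire proof to verifying that theorem's five hypotheses (non-empty compact convex $\mathcal{W}$, $c$-strong convexity, unbiased gradients, $\mathbb{E}\left[\lVert\hat g\rVert^2\right]\le M^2$, and $\gamma_t = 1/(ct)$), whereas you re-derive the rate from scratch through projection non-expansiveness, the one-step recursion $\mathbb{E}[D_{t+1}]\le(1-2c\gamma_t)\mathbb{E}[D_t]+\gamma_t^2 M^2$, and induction with the $t^2-1\le t^2$ step. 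The paper's route is shorter and makes the provenance of the $\max\{\cdot,\cdot\}$ bound explicit; yours is self-contained and surfaces details the citation hides, notably the variational optimality inequality $\langle g(\tilde w), w - \tilde w\rangle \ge 0$, the base case at $t=1$ where $\gamma_1 = 1/c$ makes the contraction coefficient negative (benign, since $D_t\ge 0$), and the sharp observation that the stated assumptions already force $D_0 \le M^2/c^2$, so the maximum in the bound is in fact attained by its first term.
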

\begin{proof} \small
See Section 1 of supplementary material.
\end{proof} \normalsize


Thus, projected SGD converges to the optimal of the relaxation at a rate of $\mathcal{O}\left(\nicefrac{1}{t} \right)$ under the conditions specified in Proposition \ref{prop2}. Convergence can also be guaranteed under different conditions, such as with alternative learning rate schedules. For further details on projected SGD in similar contexts, we refer the reader to \citet{nemirovski2009robust} and \citet[Section 5.9,][]{shapiro2021lectures}. These convergence results typically assume that iid samples are drawn from $\pi_w(\theta | \mathbf{X})$ and $\pi_A(\theta)$. This assumption may not always hold in practice, as sampling often relies on MCMC methods when analytical expressions for these distributions are not available. Despite this, Section \ref{sec:empirical_evaluation} suggests that high-quality solutions can be attained even in the absence of independent samples.
 
Interestingly, computing the gradient of the objective function in problem \eqref{eq:main} does not require a closed-form for the adversarial posterior, only the ability to sample from it. This is not true of the reverse KL divergence, thereby motivating the form of equation \eqref{eq:main}. 

\begin{prop}
The gradient of the reverse KL divergence in $w$, i.e., $\nabla_w \text{KL}\left[\pi_w(\theta | \mathbf{X}) \parallel \pi_A(\theta) \right]$, equals

\begin{align*}
      -\mathbb{E}_{\pi_w(\theta \vert \mathbf{X})} \left[ f_X(\theta) \right] & \mathbb{E}_{\pi_w(\theta \vert \mathbf{X})} \left[\log \left( \frac{\pi(\theta)}{\pi_A(\theta)} \right) \right]  \\
    & +\mathbb{E}_{\pi_w(\theta \vert \mathbf{X})}\left[\log \left( \frac{\pi(\theta)}{\pi_A(\theta)} \right) f_X(\theta) \right] \\& + \text{Cov}_{\pi_w(\theta | \mathbf{X})} \left[ f_X, f_X^\top w \right] .
\end{align*}
    
\end{prop}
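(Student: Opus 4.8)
The plan is to write the reverse KL explicitly in terms of the exponential-family parametrization and then differentiate term by term, reusing the log-partition identity from Proposition~\ref{prop1}. Substituting $\log \pi_w(\theta\vert\mathbf{X}) = -\log Z(w) + w^\top f_X(\theta) + \log\pi(\theta)$ into the definition of the reverse KL decomposes it into three summands: the term $-\log Z(w)$, the bilinear term $w^\top \mathbb{E}_{\pi_w(\theta\vert\mathbf{X})}[f_X(\theta)]$, and the term $\mathbb{E}_{\pi_w(\theta\vert\mathbf{X})}[\log(\pi(\theta)/\pi_A(\theta))]$. Each summand depends on $w$ both explicitly and through the measure $\pi_w$, so both the product rule and a rule for differentiating $\pi_w$-expectations will be needed.

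The key lemma I would establish first is the covariance identity for the $w$-derivative of a generic $\pi_w$-expectation. Since $\partial_{w_j}\log\pi_w(\theta\vert\mathbf{X}) = f_{X,j}(\theta) - \mathbb{E}_{\pi_w}[f_{X,j}]$ --- using $\partial_{w_j}\log Z(w) = \mathbb{E}_{\pi_w}[f_{X,j}]$ from Proposition~\ref{prop1} --- differentiating $\mathbb{E}_{\pi_w}[h(\theta)] = \int h(\theta)\,\pi_w(\theta\vert\mathbf{X})\,\dd\theta$ under the integral sign yields
\begin{equation*}
\partial_{w_j}\mathbb{E}_{\pi_w}[h] = \mathbb{E}_{\pi_w}[h\,f_{X,j}] - \mathbb{E}_{\pi_w}[h]\,\mathbb{E}_{\pi_w}[f_{X,j}] = \text{Cov}_{\pi_w}(h, f_{X,j}),
\end{equation*}
which in vector form reads $\nabla_w\mathbb{E}_{\pi_w}[h] = \text{Cov}_{\pi_w}(h, f_X)$.

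With this identity in hand I would differentiate the three summands. The first gives $-\nabla_w\log Z(w) = -\mathbb{E}_{\pi_w}[f_X]$. The second, by the product rule together with the lemma applied to each component $h = f_{X,i}$, gives $\mathbb{E}_{\pi_w}[f_X] + \text{Cov}_{\pi_w}(f_X, f_X^\top w)$, where the covariance vector has $j$-th entry $\sum_i w_i\,\text{Cov}_{\pi_w}(f_{X,j}, f_{X,i})$. The third, applying the lemma with $h = \log(\pi/\pi_A)$, gives $\mathbb{E}_{\pi_w}[\log(\pi/\pi_A)\,f_X] - \mathbb{E}_{\pi_w}[f_X]\,\mathbb{E}_{\pi_w}[\log(\pi/\pi_A)]$. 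Summing the three contributions, the $\pm\mathbb{E}_{\pi_w}[f_X]$ terms from the first two summands cancel, leaving precisely the claimed expression.

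The remaining algebra is routine; the main obstacle is rigorously justifying the interchange of differentiation and integration in the covariance lemma. This rests on a dominated-convergence argument that holds under the same integrability and smoothness assumptions on the exponential family that already underlie the differentiability of $\log Z(w)$ in Proposition~\ref{prop1}, so I would state these regularity conditions explicitly rather than dwell on the measure-theoretic technicalities. A secondary point demanding care is the dimensional bookkeeping in the product rule, ensuring that $\text{Cov}_{\pi_w}(f_X, f_X^\top w)$ is correctly read as the vector arising from $\sum_i w_i\,\nabla_w\mathbb{E}_{\pi_w}[f_{X,i}]$ rather than as a matrix.
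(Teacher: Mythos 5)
Your proposal is correct and takes essentially the same route as the paper's proof: the same three-term decomposition of the reverse KL via $\log \pi_w(\theta\vert\mathbf{X}) = w^\top f_X(\theta) - \log Z(w) + \log\pi(\theta)$, the same product rule, and the same cancellation of the $\pm\mathbb{E}_{\pi_w(\theta\vert\mathbf{X})}[f_X(\theta)]$ terms, arriving at the identical final assembly. The only difference is organizational --- you fold both differentiation steps into one reusable covariance lemma $\nabla_w\mathbb{E}_{\pi_w}[h] = \text{Cov}_{\pi_w}(h, f_X)$, applied to $h = f_{X,i}$ and $h = \log\left(\pi(\theta)/\pi_A(\theta)\right)$, whereas the paper reaches the same two facts by invoking the exponential-family identities $\nabla_w \log Z(w) = \mathbb{E}_{\pi_w}[f_X(\theta)]$ and $\nabla^2_w \log Z(w) = \text{Cov}_{\pi_w}(f_X, f_X)$ for the bilinear term, and performing the differentiation-under-the-integral (with the score identity $\nabla_w \pi_w = \pi_w \nabla_w \log \pi_w$) only for the prior-ratio term.
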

\begin{proof} \small
See Section 2 of supplementary material.
\end{proof} \normalsize

%

%
\noindent
Thus, if the reverse KL divergence is substituted in equation \eqref{eq:main}, it is feasible to compute sampling-based unbiased estimates of the objective function's gradient; however, it requires evaluating $\pi_A(\theta)$ which may not be available. By using the forward KL divergence, we avoid this complexity and increase applicability.

\section{SOLUTION METHODS}

Leveraging the gradient estimate identified previously, this section sets forth the rounded-relaxation and integer-steps-coordinate-descent attack families, along with a modified baseline attack from the literature. 

Beginning with the rounded-relaxation family, we note that Propositions \ref{prop1} and \ref{prop2} imply a natural heuristic for solving problem \eqref{eq:main}. This involves first solving its continuous relaxation via projected SGD, and then solving a constrained rounding problem to obtain an integer feasible solution. This heuristic is detailed in Algorithm \ref{alg:sgd}. For illustrative clarity, define the feasible set for the relaxed problem as
\begin{equation*}
    \mathcal{W} = \left\{w \in \mathbb{R}^n \mid w \succeq 0, \|w\|_\infty \leq L, \|w - \mathbf{1}\|_1 \leq B \right\}
\end{equation*}
and the $L^2$ projection operator as 

$$
\Pi_\mathcal{W}\left(w\right) = \argmin_{w' \in \mathcal{W}} \lVert w' - w\rVert^2_2.
$$

\begin{algorithm}[htb]
\caption{SGD Rounded Relaxation (SGD-R2)}\label{alg:sgd}
\begin{algorithmic}
\State \textbf{Input:} Initial point $w$, learning rate schedule $(\gamma_t)_{t=0}^\infty$, number of samples $P$ and $Q$
\State \textbf{Initialize:} Set iteration count $t = 0$
\Repeat
    \State Sample $\left( \theta_i \right)_{i=1}^P \sim \pi_w(\theta | \mathbf{X})$ 
    \State Sample $\left( \theta_j \right)_{j=1}^Q \sim \pi_A(\theta)$
    \State Compute gradient estimate:
    \[
    \hat{g} \gets \frac{1}{P} \sum_{i=1}^{P} f_X(\theta_i) - \frac{1}{Q} \sum_{j=1}^{Q} f_X(\theta_j)
    \]
    \State Update $w$ using a projected SGD step:
    \[
    w_{\text{proj}} \gets \Pi_\mathcal{W}\left(w - \gamma_t \hat{g}\right) 
    \]
    \State Set $w \gets w_{\text{proj}}$
    \State Increment iteration count $t \gets t + 1$
\Until{stopping criterion is met}
\State Solve the constrained rounding problem
\[
w^* = \argmin_{w' \in \mathbb{Z}^n_{\geq 0} \cap \mathcal{W}} \Vert w - w'\Vert_2^2
\]
\State \Return $w^*$
\end{algorithmic}
\end{algorithm}

Note that in each iteration of Algorithm~\ref{alg:sgd}, we must generate samples from $ \pi_w(\theta | \mathbf{X}) $, with $ w $ changing at every step. These samples are typically obtained using MCMC methods, which directly impact the per-iteration time complexity of the algorithm. When MCMC is required, the sampling step becomes the most computationally intensive part of any $\hat{g}$-based search, effectively dictating the overall time complexity based on the needed sample sizes $ P $ and $ Q $. In practice, achieving reasonable effective sample sizes often necessitates large values of $ P $ and $ Q $, though this requirement varies across different MCMC techniques. An important empirical observation is that since $ w $ changes gradually, starting each MCMC run with samples from the previous iteration can significantly reduce the number of burn-in iterations needed for convergence, thereby alleviating some of the computational burden.



Under the conditions presented in Proposition \ref{prop2}, projected SGD will converge to an optimal solution of the continuous relaxation; otherwise, it is intended to identify a high-quality solution. An algorithm termination criterion found to work well in practice entailed stopping the search when an estimate of the objective function's decrease becomes negligible compared to that estimated in the first iteration. Such estimates are based on a Taylor expansion at $w$. Finally, the constrained rounding problem in the last step can be solved optimally using a straightforward procedure, as described in the following proposition.
%
\begin{prop} \label{prop4}
Letting  $ N_{\text{max}} = B - \sum_i \lfloor \Delta_i \rfloor $ and $ \Delta_i = |w_i - 1| $, an optimal solution to the constrained rounding problem can be found by setting 

\begin{equation*}
    w^*_i = 1 + \text{sign}(w_i - 1)(\lfloor \Delta_i \rfloor + \alpha_i), 
\end{equation*}
where $\alpha_i \in \{0,1\}$ are assigned as follows:
\begin{enumerate}
    \item Initialize $\alpha_i=0, \ \forall i$,
    \item If $ \left\Vert \left\{ i \middle| \Delta_i - \lfloor \Delta_i \rfloor > \frac{1}{2} \right\} \right\Vert \le N_{max}$ then for each $i$, set $ \alpha_i = \mathds{1}_{\left( \frac{1}{2}, 1 \right]}(\Delta_i - \lfloor \Delta_i \rfloor)$,
    \item Otherwise, set $ \alpha_i = 1 $ for the $ N_{\text{max}} $ data points with the highest values of $ \Delta_i - \lfloor \Delta_i \rfloor $.
\end{enumerate}
\end{prop}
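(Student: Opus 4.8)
The plan is to reformulate the rounding problem coordinate-wise and then peel off the constraints one at a time. Writing $u_i = w_i - 1$ and $d_i = w'_i - 1$, the problem becomes minimizing the separable objective $\sum_i (u_i - d_i)^2$ over integer vectors $d$ subject to the box constraint $d_i \in [-1, L-1]$ (equivalent to $0 \le w'_i \le L$) and the $L^1$ constraint $\sum_i |d_i| \le B$; note $\Delta_i = |u_i|$. Since the only coupling across coordinates is the single budget constraint $\sum_i |d_i| \le B$, the strategy is to first confine each coordinate's optimal value to a small candidate set, and then allocate the budget by a simple greedy rule. Throughout I will use that the incoming $w$ is a projected SGD iterate, hence $w \in \mathcal{W}$.

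First I would establish sign agreement: in an optimal solution $\mathrm{sign}(d_i) = \mathrm{sign}(u_i)$ whenever $u_i \ne 0$, and $d_i = 0$ when $u_i = 0$. If, say, $u_i > 0$ but $d_i < 0$, replacing $d_i$ by $0$ strictly decreases the coordinate cost (since $0$ is closer to $u_i$ than any negative value) while also decreasing $|d_i|$, so it preserves feasibility and improves the objective; the symmetric argument handles $u_i < 0$. This reduces the problem to choosing nonnegative integer magnitudes $m_i = |d_i|$ minimizing $\sum_i (\Delta_i - m_i)^2$ subject to $\sum_i m_i \le B$, with $m_i$ bounded by the box. The box in fact never binds once magnitudes are taken among the nearest integers of $\Delta_i$, because $w \in \mathcal{W}$ forces $\Delta_i \le L-1$ when $w_i > 1$ and $\Delta_i \le 1$ when $w_i < 1$, so $\lceil \Delta_i \rceil$ stays inside the admissible range.

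The crux is to show that some optimal magnitude assignment satisfies $m_i \in \{\lfloor \Delta_i \rfloor, \lceil \Delta_i \rceil\}$. Feasibility of the all-floors point is what makes this possible: since $w \in \mathcal{W}$ gives $\sum_i \Delta_i = \|w - \mathbf{1}\|_1 \le B$ with $B$ an integer, we obtain $\sum_i \lfloor \Delta_i \rfloor \le B$, so $N_{\max} = B - \sum_i \lfloor \Delta_i \rfloor \ge 0$. I would then run an exchange argument: any $m_j > \lceil \Delta_j \rceil$ can be decreased (lowering both cost and budget use), and any $m_j < \lfloor \Delta_j \rfloor$ can be raised toward $\lfloor \Delta_j \rfloor$ — directly if the budget is slack, or otherwise by simultaneously lowering some coordinate $m_k$ that exceeds $\lfloor \Delta_k \rfloor$. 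The quantitative heart of this step is the pair of estimates that raising a magnitude still below $\Delta_j$ saves at least $1$ in cost, whereas lowering a coordinate from $\lceil \Delta_k \rceil$ to $\lfloor \Delta_k \rfloor$ increases cost by at most $2(\Delta_k - \lfloor \Delta_k \rfloor) - 1 < 1$; together these make every such exchange non-worsening, so an optimum of the claimed form exists. I expect this budget-tight exchange, together with the boundary cases ($\Delta_i$ integer, $w_i = 0$, $w_i = 1$, which the formula handles automatically through $\alpha_i = 0$), to be the main obstacle and the part requiring the most care.

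Finally, writing $m_i = \lfloor \Delta_i \rfloor + \alpha_i$ with $\alpha_i \in \{0,1\}$ turns the budget constraint into $\sum_i \alpha_i \le N_{\max}$ and the objective into a constant plus $\sum_i \big[ (1 - f_i)^2 - f_i^2 \big]\, \alpha_i$, where $f_i = \Delta_i - \lfloor \Delta_i \rfloor$; equivalently we maximize $\sum_i (2 f_i - 1)\,\alpha_i$. Because the per-coordinate gain $2 f_i - 1$ is positive exactly when $f_i > 1/2$ and the constraint merely caps the number of activated coordinates, the optimum of this selection problem is greedy: activate the coordinates of largest $f_i$ among those exceeding $1/2$, taking all of them when there are at most $N_{\max}$ (rule 2) and otherwise the top $N_{\max}$ (rule 3), and never activating a coordinate with $f_i \le 1/2$ since its gain is nonpositive. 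Reassembling $w^*_i = 1 + \mathrm{sign}(w_i - 1)(\lfloor \Delta_i \rfloor + \alpha_i)$ then recovers the stated solution, completing the argument.
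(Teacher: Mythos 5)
Your proof is correct and follows essentially the same route as the paper's: a coordinate-wise change of variables in terms of $\Delta_i$ and the fractional parts $\varepsilon_i$, an exchange/swap argument (relying on $N_{\text{max}} = B - \sum_i \lfloor \Delta_i \rfloor \geq 0$ for the budget-tight case) showing an optimal rounding moves each coordinate only to $\lfloor \Delta_i \rfloor$ or $\lceil \Delta_i \rceil$, and a final reduction to a linear selection problem under a cardinality cap, solved greedily by activating coordinates with $\varepsilon_i > \frac{1}{2}$. The only organizational difference is that you carry the non-negativity and $L^\infty$ constraints throughout (arguing via $w \in \mathcal{W}$ that they never bind on floor/ceiling candidates), whereas the paper first solves the problem with only the $L^1$ constraint and then verifies post hoc that its optimum automatically lies in $\mathcal{W}$; the substance is the same.
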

\begin{proof} \small
See Section 3 of supplementary material. 
\end{proof} \normalsize

We propose two variants of Algorithm~\ref{alg:sgd} designed to improve empirical runtime. We observed that the convergence of Algorithm~\ref{alg:sgd} for solving the continuous relaxation problem can slow down when $ w $ reaches the boundary of the feasible set $ \mathcal{W} $. This slowdown appears to be due to the gradient becoming nearly orthogonal to the boundary, resulting in minimal changes in $ w $ after projection. To accelerate convergence, we replace SGD with the Adam optimizer within Algorithm~\ref{alg:sgd}. Under suitable conditions, Adam converges at a rate of $\mathcal{O}\left(\nicefrac{1}{\sqrt{t}} \right)$; see \citet{kingma2014adam} and \citet{reddi2019convergence}. Since Adam scales the gradient based on a second-moment estimate, it allows for larger steps in $ w $ at each iteration. Specifically, we consider $ w - \Pi_{\mathcal{W}}(w - \gamma_t \hat{g}) $ as an estimate of the gradient, apply an Adam update, and then project the result back onto the feasible set $ \mathcal{W} $. We refer to this heuristic as Adam Rounded Relaxation (Adam-R2).

A faster variation of Algorithm \ref{alg:sgd} involves using the second-order Taylor expansion of the objective function. The Hessian $\mathbf{H}$ can be easily estimated from the samples $\left( \theta_j \right)_{j=1}^Q$ using equation \eqref{eq:cov}. 
Indeed, the strong law of large numbers guarantees that the estimate  $\mathbf{\hat{H}}$ converges almost surely to $\mathbf{H}$ and is thus strongly consistent. 
The update for $w$ is then given by:
\begin{equation*}
    w \gets \argmin_{w' \in \mathcal{W}} \hat{g}^\top(w'-w) + \frac{1}{2}(w'-w)^\top\mathbf{\hat H}(w'-w)
\end{equation*}
This approach leverages curvature information to provide a more informed update direction and is referred to as Second Order Rounded Relaxation (2O-R2).  

Having explored the R2 family of attacks, we pivot our attention to two simpler techniques. The first serves as a modified baseline from the literature, and the latter is an alternative family to the R2 methods.

\paragraph{Fast Gradient Sign Method (FGSM):} This algorithm is adapted from \citet{goodfellow2014explaining}. In the FGSM, we start with $w = \mathbf{1}$ and compute a single estimation $\hat{g}$ of $\nabla_w \mathcal{D}_{KL}[\pi_A(\theta) || \pi_w(\theta | X)]$. If $L\ge 2$, we select the $B$ data points with the highest gradient magnitudes and update their weights by $\pm 1$, based on the sign of the partial derivatives. Alternatively, if $L=1$, only deletions are feasible; the $B$ data points having the most positive gradients are selected and their weights updated by -1.

\begin{algorithm}[htb]
\caption{Integer-Steps Coordinate Descent}\label{alg:iscd}
\begin{algorithmic}
\State Initialize $w \gets \mathbf{1}$
\Repeat
    \State Estimate the gradient $\hat{g}$ and the Hessian $\mathbf{\hat{H}}$ 
    \State Choose a data point:
    \begin{align*}
        j \gets \argmin_i & \Big\{ - |\hat{g}_i| + \frac{1}{2}\mathbf{\hat{H}}_{i,i} \ \Big| \ 1 \leq i \leq n, \\
        & \qquad w - \text{sign}(\hat{g}_i) e_i \in \mathcal{W} \Big\}
    \end{align*}
    \State Update the weight vector:
    $$
    w \gets w - \text{sign}(\hat{g}_j) e_j
    $$
\Until{
    $ \forall i$ for which $\ w - \text{sign}(\hat{g}_i) e_i \in \mathcal{W}$ we have $- |\hat{g}_i| + \frac{1}{2} \mathbf{\hat{H}}_{i,i} > 0 $, i.e., the estimated increase in the objective function is positive for all feasible neighbors or a maximum number of iterations is reached.
}
\State \Return $w$
\end{algorithmic}
\end{algorithm}

\paragraph{Integer-Steps Coordinate Descent (ISCD):} 
At each iteration of ISCD, we select a new weight vector from the set of feasible $2n$ neighbors of the current vector based on the $L^1$ distance, i.e., $\{w \pm e_i\}$, where $e_i$ is the $i$-th vector of the canonical basis of $\mathbb{R}^n$.
The selection process is based on the estimated decrease in the objective function, using either a first-order or second-order Taylor expansion, respectively referred to as 1O-ISCD and 2O-ISCD. Algorithm \ref{alg:iscd} details the second-order method. The first-order method is obtained by setting $\mathbf{\hat{H}} = 0$.
The algorithm stops when selecting any feasible neighbor is expected to increase the objective function. However, this stopping criterion may never be reached, especially if the $L^1$ constraint is not tight. Thus, we also impose a maximum number of iterations, set slightly greater than $B$.

FGSM-based approaches are widely used for generating adversarial examples and serve as benchmarks in the field. However, because most applications assume access to true gradients, we adapt our FGSM attack to utilize stochastic gradients instead. Similarly, the ISCD attacks integrate our FGSM construct within an iterative search, aiming to exploit and update gradient information more efficiently. The ISCD attacks can be viewed as the converse of SGD-R2 and its variants: while SGD-R2 follows the gradient direction but sacrifices integrality, ISCD maintains integrality by searching along a modified gradient estimate. Finally, when MCMC is required, the time complexity of both ISCD and FGSM methods is governed by the estimation of $\hat{g}$. We further explore the convergence of these attacks in Section 5.3 of the supplementary material.


\section{EMPIRICAL EVALUATION} \label{sec:empirical_evaluation}

In this section, we compare the proposed heuristics and evaluate their efficacy using both simulated and real-world examples. For all heuristics, gradient estimation is performed via sampling. When a closed-form posterior is not available, we employ MCMC methods using the No-U-Turn Sampler from the \textit{NumPyro} library \citep{phan2019composable}.
We assess the performance of different attacks using two types of metrics: (1) the KL divergence between the adversarial and induced posteriors, computed exactly when possible or approximated otherwise; and (2) various summaries of the induced posterior. These metrics are evaluated with respect to $ B $, the number of permitted data manipulations, which serves as a proxy for attack intensity.
Notice that our attacks target inference on parameters and, as such, we test it against models with interpretable parameters. While deep Bayesian models could be attacked using our framework, their parameters often lack interpretability, making them less compelling targets for the types of manipulations we study.
All computations were performed on a server equipped with 88 \textit{Intel(R) Xeon(R) E5-2699 v4} CPUs and 252 GB of memory. The code, including the hyperparameters used for each heuristic, is available at \url{https://github.com/Matthieu-Carreau/Poisoning_Bayesian_Inference}.
The computational times for the attacks presented in the following sections are reported in Section 7 of the supplementary material.



\subsection{Simulation Study: Bayesian Linear Regression} \label{subsec:simstudy}

We consider a linear regression model with a single predictor. The simulated data consists of 100 observations, where the predictor is normally distributed. The response  is generated using a linear model with an intercept $ \beta_0 = 0.5 $, a slope $ \beta_1 = 0.3 $, and Gaussian noise with a standard deviation of 0.5.
%
%
%
We consider the conjugate case such that $\beta \,|\, \sigma^2 \sim \mathcal{N}(\mu_0, \sigma^2 \Lambda_0^{-1})$, $\sigma^2 \sim \text{Inv-Gamma}(a_0, b_0)$, $\mu_0 = [0, 0]^\top$, $\Lambda_0 = \frac{1}{100}I_2$, $a_0 = 2$, and $b_0 = 2$. Given these priors, the observed data $\mathcal{D}$, and the weight vector $w$, the tainted posterior distribution for $\beta$ and $\sigma^2$ follows a normal-inverse-gamma (NIG) distribution. The posterior parameters $\mu_n$, $\Lambda_n$, $a_n$, and $b_n$ are functions of $w$. Section 4 of the supplementary material provides derivations.
\begin{figure*}[h!]
\begin{multicols}{4}
    \includegraphics[width=\linewidth]{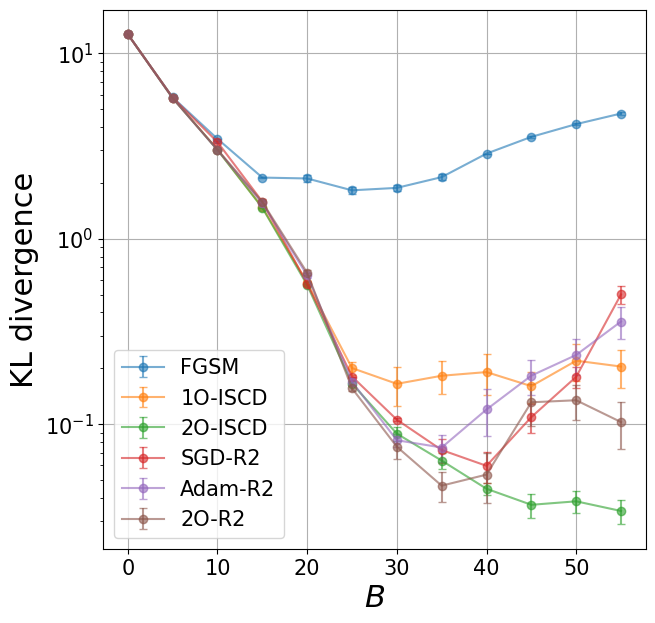}
    \subcaption{}\label{subfig:heuristics_nig_KL}\par 
    \includegraphics[width=\linewidth]{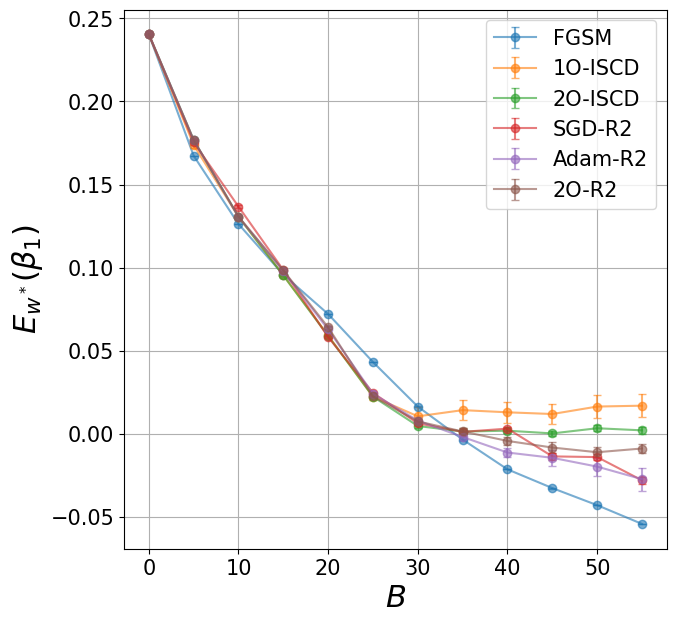}
    \subcaption{}\label{subfig:heuristics_nig_mean}\par 
    \includegraphics[width=\linewidth]{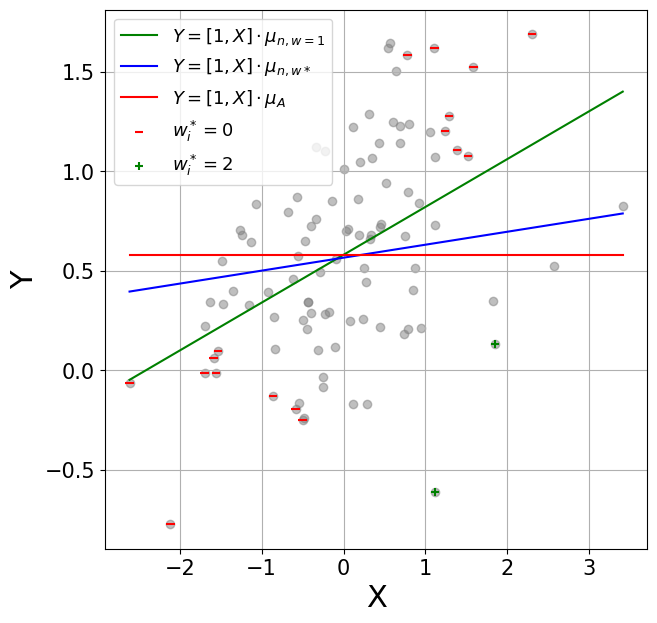}
    \subcaption{}\label{subfig:attacks_nig}\par 
    \includegraphics[width=\linewidth]{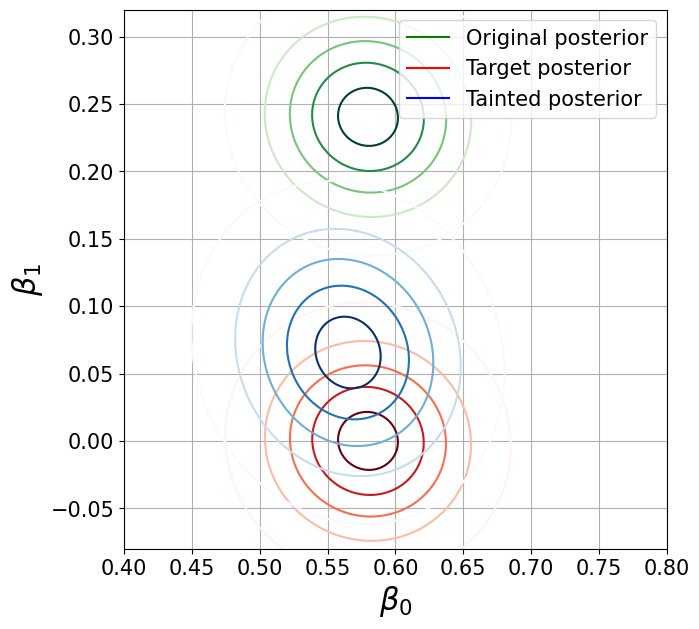}
    \subcaption{}\label{subfig:posterior_comp_nig}\par 
    \end{multicols}
\caption{(a) KL divergence vs $B$ for several heuristics. (b) $\mathbb{E}_{w^*}\left[\beta_1\right]$ vs $B$. (c) Datapoints removed (in red) and duplicated (in green) for $B=20$ under the 2O-R2 heuristic. (d) Original, induced and adversarial marginal posteriors over $(\beta_0, \beta_1)$ for $B=20$ using the 2O-R2 heuristic.}
\label{fig:attacks_nig}
\end{figure*}

The adversarial posterior is also specified as a normal-inverse-gamma distribution. The attacker's goal is to shift the posterior mean of $ \beta_1 $ towards zero while minimally affecting other aspects of the posterior inference, such as the MAP estimates of other parameters and the widths of credible intervals. To achieve this, we set the adversarial posterior parameters to match those of the original posterior, except that the second coordinate of the mean vector $ \mu_A $ is set to zero.



Although the KL divergence between two NIG distributions has a closed-form expression, permitting exact computation of the gradient and Hessian with respect to $ w $; we demonstrate that our attacks do not rely on this analytical information. Instead, we compute the gradient via sampling, highlighting the generality of our approach. We applied our heuristics to solve the attacker problem for $ L = 2 $, adjusting the attack intensity by varying $ B $ from 5 to 55. Each experiment was repeated 30 times. Figures \ref{subfig:heuristics_nig_KL} and \ref{subfig:heuristics_nig_mean} display the mean and two standard errors for the KL divergence and the mean of the induced posterior, respectively, as functions of $ B $. Additional experiments regarding the convergence speed of different heuristics are provided in Section 5.3 of the supplementary material.


The results indicate that the FGSM heuristic performs significantly worse than the other methods in terms of the objective function, particularly for $B > 30$, where the KL starts increasing with attack intensity. However, FGSM still shifts the posterior of $\beta_1$ in the desired direction. Unlike other heuristics, FGSM does not stop when the target mean of 0 is reached; instead, the mean continues to decrease.
The differences between the other heuristics become more pronounced for $B \geq 30$, where each alternative achieves a KL divergence around 0.1. Notably, the second-order heuristics outperform the first-order ones, emphasizing the value of curvature information. 
For $30 \leq B \leq 40$, the 2O-ISCD and 2O-R2 heuristics consistently deliver the best results. Interestingly, R2-based heuristics show an increase in KL divergence after $B=40$, a trend not observed with 2O-ISCD. As discussed in Section 5.2 of the supplementary material, this behavior likely derives from the rounding procedure.

Figures \ref{subfig:attacks_nig} and \ref{subfig:posterior_comp_nig} respectively illustrate the points removed and duplicated under the attack identified by the 2O-R2 heuristic with $B=20$, as well as the corresponding original, induced, and adversarial marginal posteriors over ($\beta_0$, $\beta_1$). Points with high leverage on the regression line are removed or duplicated to shift the slope toward the attacker's target. This suggests that outliers strongly influence attack efficacy, indicating that a noisy dataset is easier to manipulate; see Section 5.4 of the supplementary material for further discussion. Remarkably, manipulating only 20\% of the data points results in a significant overlap between the induced and adversarial posteriors, demonstrating the attack's efficiency and its ability to target only specified aspects of the posterior. If the attacker increases $B$ to 30, the adversarial and tainted means are almost identical without further posterior disruption (see Section 5.1 of supplementary material). 

A notable application of our framework is creating attacks aimed at manipulating posterior uncertainty. Experiments on this are provided in Section 5.5 of the supplementary material.

\subsection{Bayesian Linear Regression for Boston Housing Price Inference}

To evaluate our heuristics on real-world data, we use the Boston Housing Dataset \citep{Harrison1978Boston}. Herein, the defender performs inference on the parameters of a linear model to predict house prices. The training set comprises $n=404$ entries, each representing a house, with $d=13$ covariates describing their characteristics and a response variable, \textit{MEDV}, representing the median house price.

\begin{figure*}[ht!]
\begin{multicols}{5}
    \includegraphics[width=\linewidth]{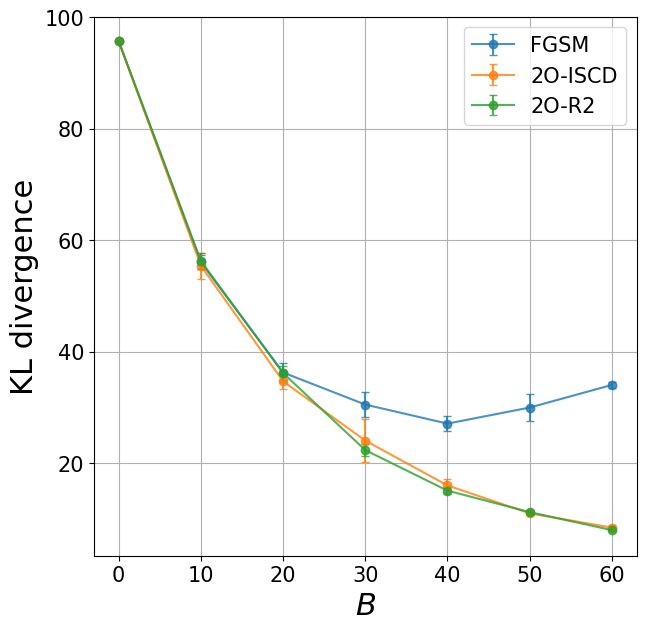}
    \subcaption{}\label{subfig:heuristics_hs_KL}\par 
    \includegraphics[width=\linewidth]{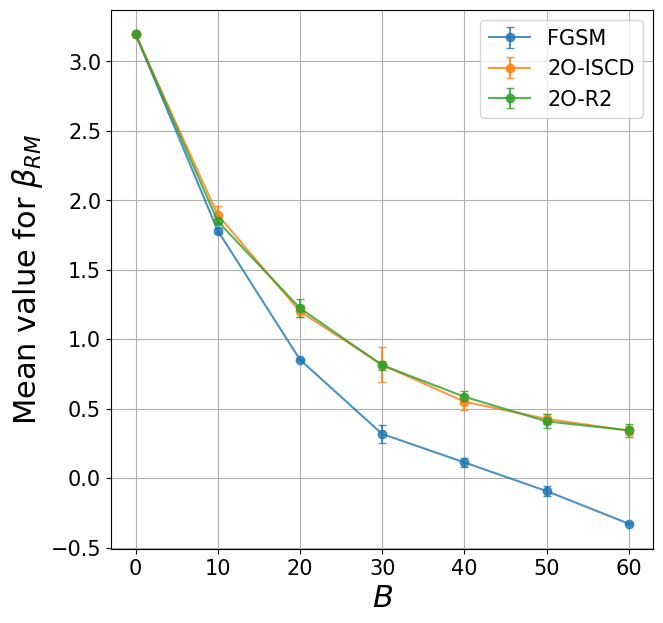}
    \subcaption{}\label{subfig:heuristics_hs_mean}\par 
    \includegraphics[width=\linewidth]{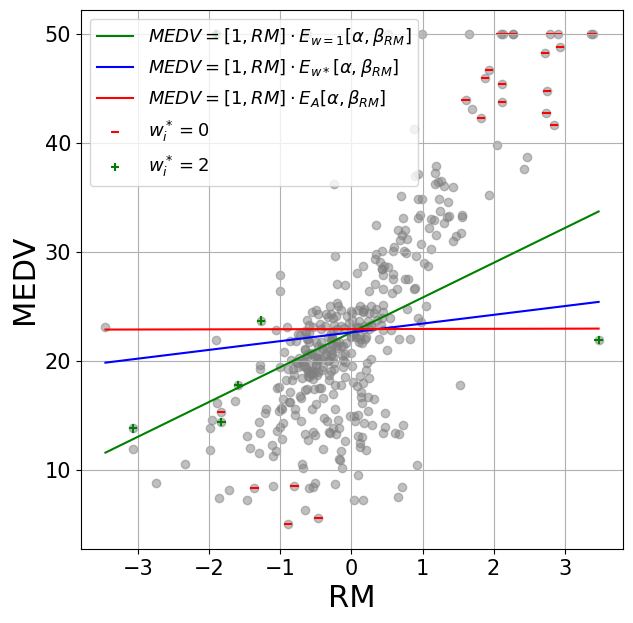}
    \subcaption{}\label{subfig:attacks_hs}\par 
    \includegraphics[width=\linewidth]{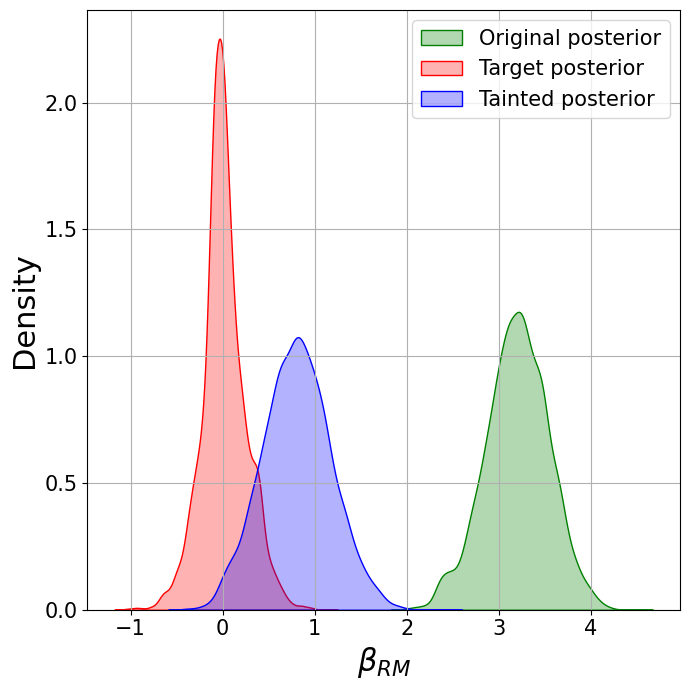}
    \subcaption{}\label{subfig:posterior_comp_hs}\par 
    \includegraphics[width=\linewidth]{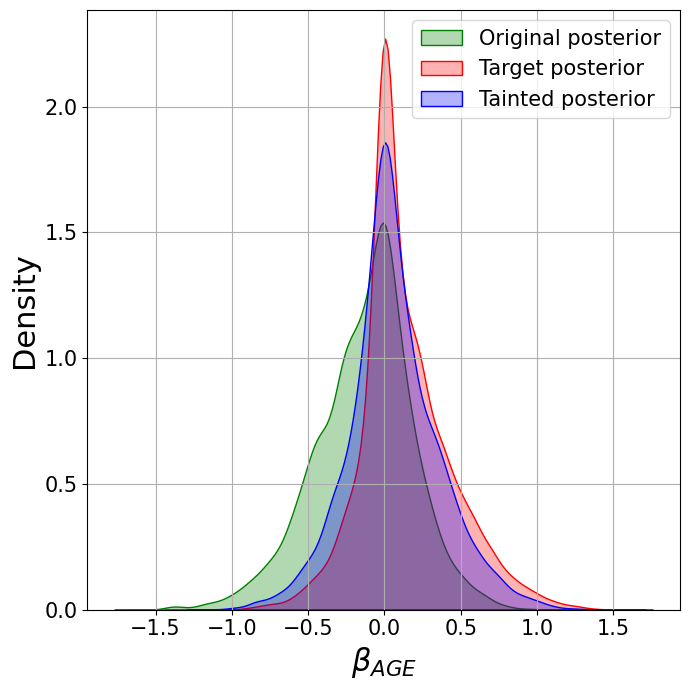}
    \subcaption{}\label{subfig:posterior_comp_hs_bis}\par 
    \end{multicols}
\caption{Attacks against the horseshoe model to steer the RM coefficient towards 0. (a) KL divergence between VI approximations. (b) $\mathbb{E}_{w^*}\left[\beta_{RM}\right]$ (c) Datapoints removed (in red) and duplicated (in green) for $B=30$ under the 2O-ISCD heuristic. (d) Original, induced and adversarial marginal posteriors over $\beta_{RM}$. (e) Original, induced and adversarial marginal posteriors over $\beta_{AGE}$.}
\label{fig:HS}
\end{figure*}

We assume the defender employs a sparsity-inducing horseshoe prior \citep{carvalho09aHorseshoe} on the regression parameters. The model includes an intercept $\alpha $, a vector of regression coefficients $ \beta \in \mathbb{R}^d $, a global shrinkage parameter $ \tau $, a vector of local shrinkage parameters $ \lambda \in \mathbb{R}^d $, and a noise level $ \sigma $. Full specification of likelihood and prior is provided in Section 6.1 of the supplementary material. The attacker's main objective is to influence the inference on $ \beta_{RM} $, the coefficient associated with the number of rooms in a house, by shifting it toward zero. Note that, in this case, there is no analytical expression for the posterior or the objective function, but our framework can still be utilized. To define the adversarial posterior, we first draw samples from the true posterior via MCMC, set the parameters $ \sigma_A $, $ \alpha_A $, and $ \beta_A $ equal to the means of the samples, and set $ \beta_{A, RM} = 0 $. A synthetic dataset $ \tilde{D} $ is then generated with these estimates, and $ \pi_A $ is defined as the posterior given $ \tilde{D} $, sampled via MCMC.



We performed attacks using FGSM, 2O-ISCD, and 2O-R2, varying the constraint $B$ across $\{10, 20, 30, 40, 50, 60\}$, and repeated each experiment five times. For each resulting weight vector $w^*$, we evaluated the attack's effectiveness using three metrics: the KL divergence between the tainted and adversarial posteriors, approximated via variational methods, as well as $\mathbb{E}_{w^*}[\beta_{\text{RM}}]$, and $\pi_{w^*}(\beta_{\text{RM}} < 0)$, calculated with MCMC samples. Averages plus/minus two standard errors of the first two metrics as functions of $B$ are shown in Figures \ref{subfig:heuristics_hs_KL} and \ref{subfig:heuristics_hs_mean}, with results for the third metric in Section 6.1 of the supplementary material.
As $B$ increases, both the KL divergence and the mean value of $\beta_{RM}$ decrease. Notably, the FGSM heuristic reduces the objective function at a slower rate compared to 2O-ISCD and 2O-R2. Beyond $B = 40$, FGSM becomes overly aggressive, overshooting the adversarial posterior mean. This results in the tainted posterior shifting downward, assigning a higher probability to $\beta_{RM} < 0$ than the other two heuristics. Based on KL divergence, 2O-ISCD and 2O-R2 perform better by avoiding such overshooting.



To visualize the impact of the attacks, Figure \ref{subfig:attacks_hs} shows the data points that were deleted and duplicated in the (\textit{RM}, \textit{MEDV}) plane under a poisoning attack with $B=30$ using the 2O-ISCD heuristic. The changes made align with the goal of reducing the inferred value of $\beta_{RM}$. Specifically, the removed points are either expensive houses with many rooms or cheap houses with few rooms. Additionally, one outlier, which has a medium \textit{MEDV} but a high \textit{RM}, is duplicated.

Figures \ref{subfig:posterior_comp_hs} and \ref{subfig:posterior_comp_hs_bis} display the original, target, and induced marginal posteriors for $\beta_{RM}$ and $\beta_{AGE}$ (i.e., coefficient for a house's age). Figure \ref{subfig:posterior_comp_hs} shows that the attack effectively shifts the posterior for $\beta_{RM}$ toward zero; manipulating approximately 7\% of the data reduces the posterior mean of $\beta_{RM}$ from 3.2 to 0.8. Alternatively, if 12\% of the data is corrupted, the posterior density of $ \beta_{RM}$ is highly concentrated near 0; see Section 6.1 supplementary material for details. 
Figure \ref{subfig:posterior_comp_hs_bis} demonstrates the surgical nature of the attack: whereas $\beta_{RM}$ is effectively targeted, the marginal posterior for $\beta_{AGE}$ remains largely unchanged, reflecting the attacker's intent to leave other parameters unaffected. Additional marginal posterior plots in Section 6.1 of the supplementary material further highlight the targeted nature of this attack. 


We also ran the attack using the NIG prior, with results presented in Section 6.1.2 of the supplementary material. Interestingly, in both cases, similar data points were replicated and removed. Thus, when ample data is available, the attacker may not need to know the true prior. By knowing only the likelihood, the attacker can craft high-quality attacks using an assumed prior, extending our method to a gray-box setting. Further results analyzing the effects of different priors on attack performance are provided in Section 6.1.3 of the supplementary material.

Finally, we would like to emphasize a key distinction between our attacks and other data deletion strategies, such as those proposed by \citet{broderick2020automatic}. Previous work is often more single-minded than our own. Authors remove a few data points to significantly alter inference on a specific parameter, regardless of the effects on other parameters. In contrast, our method targets the full joint posterior distribution. We demonstrated our methodology in a scenario where the objective was to shift the posterior for a specific parameter in a given direction while otherwise preserving it. If the attacker were allowed to modify the posteriors of other parameters, even more effective attacks could be designed. 

Moreover, in some instances, substantive effects on posterior inference can still be achieved through minimal data perturbation. Such effects are highly tailorable via the definition of the adversarial posterior and may induce dramatic real-world consequences.



\subsection{Case Study: Mexico Microcredit}

\begin{figure*}[h]
\centering
\begin{multicols}{3}
    \includegraphics[width=0.7\linewidth]{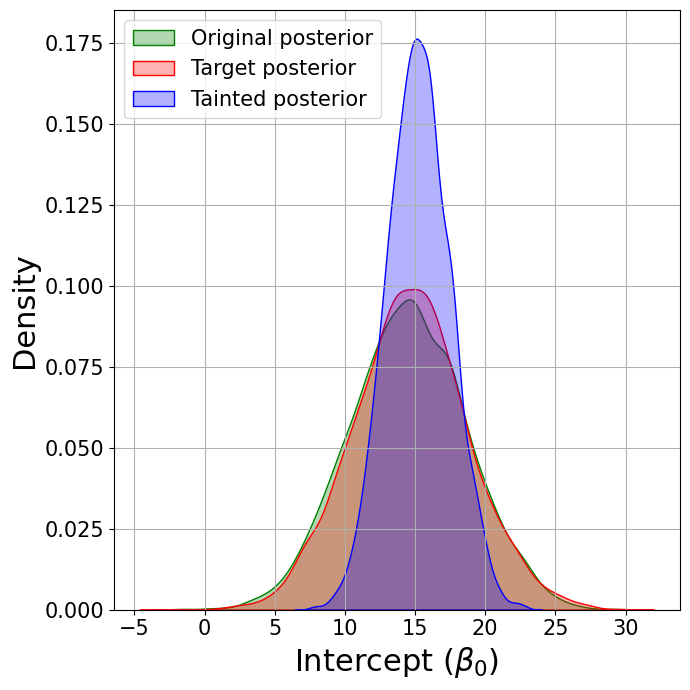}
    \subcaption{Posterior for $\beta_0$}\par
    \includegraphics[width=0.7\linewidth]{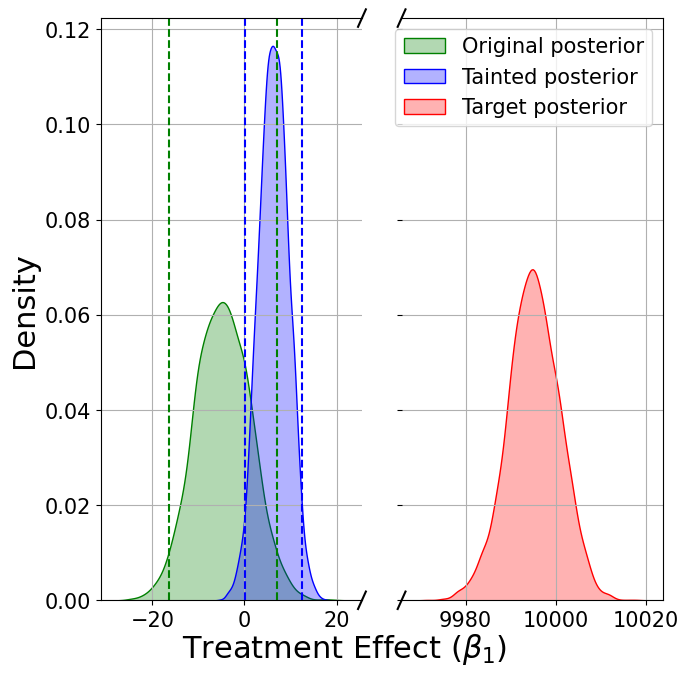}
    \subcaption{Posterior for $\beta_1$}\par
    \includegraphics[width=0.7\linewidth]{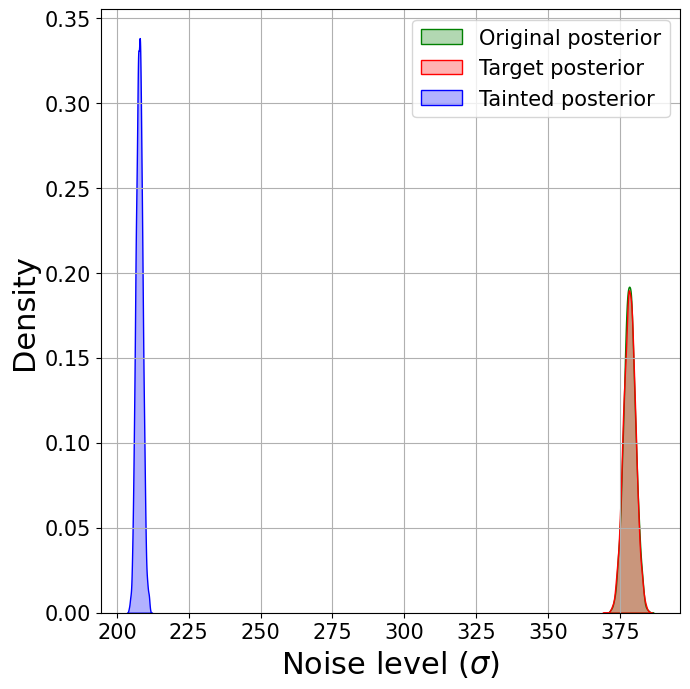}
    \subcaption{Posterior for $\sigma$}\par
\end{multicols}
\caption{Attack on the microcredit linear regression for $B=20$ and $L=2$.}
\label{fig:microcredit_B=20_lambda=10000}
\end{figure*}

In this section, we illustrate the potential policy implications of our poisoning attacks by examining a randomized controlled trial on microcredit in Mexico \citep{angelucci2015microcredit}. In this trial, 16{,}560 businesses were randomly assigned to a microcredit treatment ($x_i=1$) or control ($x_i=0$), with the goal of assessing how the treatment influenced business profit $y_i$. Following \citet{nguyen2024sensitivity}, we fit  
\begin{eqnarray*}
    y_i \;&\sim&\; \mathcal{N}(\,\beta_0 + \beta_1\,x_i,\;\sigma^2),\\
\beta_0,\,\beta_1,\, \log(\sigma)\,&\sim&\, t(3,\,0,\,1000),
\end{eqnarray*}
where $\beta_1$ represents the average treatment effect (ATE) of microcredit. Under untainted data, the posterior mean of $\beta_1$ is $-4.71$ and its 95\% credible interval slightly crosses zero, suggesting no reliable boost to profits due to microcredits, or even a mild negative impact. This naturally leads to a policy decision \emph{against} expanding microcredit.

We carried out a poisoning attack on this dataset using 2O-ISCD, allowing a maximum of $B=20$ manipulations, and specifying an adversarial posterior strongly favoring a positive $\beta_1$. Figure~\ref{fig:microcredit_B=20_lambda=10000} shows the original, target, and induced posteriors for all model parameters. Remarkably, manipulating only 20 observations (just 0.12 \% of the data) shifts the posterior so that $\beta_1$ now has a mean of $6.28$ with a 95\% credible interval $[0.02,\ 12.43]$.  Faced with this inference, a policymaker would likely conclude that microcredit substantially increases profits and thus opt for a program expansion, reversing the original conclusion. 

Note that the same attack also significantly alters the posterior for $\sigma$, illustrating how small but carefully chosen data modifications can affect multiple parameters simultaneously. Additional experiments in Section 6.2 of the supplementary material examine how different adversarial posteriors and attack budgets $B$ can further refine these shifts. Overall, these results highlight (1) the impact poisoning can have on decisions driven by Bayesian estimates and (2) the scalability of our attack methods to large-sample settings.

\subsection{Bayesian Logistic Regression for Spam Classification}

Additional experimentation was performed on the spam classification data set of \citet{sms_spam_collection_228}. 
Detailed results and discussion are included in Section 6.3 of the supplemental material.



\section{CONCLUSIONS}

This work introduced novel poisoning attacks on generic Bayesian inference, involving the deletion and replication of data samples to steer the posterior toward an adversarial target. Our attacks are applicable to any Bayesian model, provided MCMC sampling is possible. We formalized the problem as an integer program with an intractable objective function and proposed multiple heuristics for its solution. These methods were effective in various Bayesian settings, demonstrating that minimal data perturbations can significantly shift the posterior. Besides generating effective attacks, our framework can also assess Bayesian inference robustness against worst-case data manipulation, providing a clear metric for evaluating the impact of data perturbations on posterior estimates.

Regarding possible defenses to the proposed attacks, one might argue that removing duplicates is a simple defense. However, real-world datasets often include genuine duplicates, risking the loss of valid data. Even if duplicates are removed, our framework adapts seamlessly to data deletions alone, an attack far harder to detect; highlighting the need for more research on robust defenses against adversarial manipulation in Bayesian inference.

Another promising avenue for future research is adapting our work for Bayesian case influence analysis. By setting the adversarial posterior to the prior, one can identify data points whose deletion causes a minimal prior-to-posterior update. Moreover, while our attacks allow precise poisoning, they also require a fully specified adversarial posterior. Future work may thus study how to design adversarial posteriors to achieve specific goals. Ideas such as entropic tilting might be relevant in this context. Additionally, extending our approach to larger, more complex Bayesian models (e.g., hierarchical, spatial, spatio-temporal) is crucial. While scalability is challenging with MCMC, integrating variational inference methods could enable applications to larger models efficiently.

\subsubsection*{Acknowledgements}

This research was funded, in whole or in part, by the French National Research Agency (ANR) under the project OWL "ANR-23-IAS3-0003-01".


\bibliography{biblio}

\clearpage

\clearpage

\thispagestyle{empty}
\onecolumn

\setcounter{section}{0}

\aistatstitle{Supplementary Material for: Poisoning Bayesian Inference via Data Deletion and Replication}

\section{PROOF OF PROPOSITION 2}

Herein, we derive convergence conditions for solving the continuous relaxation within the SGD-R2 method. \citet{nemirovski2009robust} and \citet{shapiro2021lectures} show that, when solving

$$
\min_{w \in \mathcal{W}'} \left\{f(w) = \mathbb{E}\left[F(w,\theta) \right]\right\}
$$

\noindent such that 
\begin{enumerate}
    \item $\mathcal{W}' \subset \mathbb{R}^n$ is a non-empty, closed, bounded, and convex set, and
    \item$f$ is differentiable and $c$-strongly convex on $\mathcal{W}'$
\end{enumerate}

\noindent projected SGD converges to an optimal solution when the following are satisfied:

\begin{enumerate}
    \setcounter{enumi}{2}
    \item it is possible to sample independent $\hat{g}(w)$ such that $\mathbb{E}[\hat{g}(w)] = \nabla f(w)$,
    \item $\mathbb{E}\left[\lVert \hat{g}(w)\rVert^2 \right] \le M^2$, where $\lVert\cdot\rVert$ is the Euclidean distance, and
    \item $\gamma_t =\frac{1}{ct}$.
\end{enumerate}

\noindent That is, denoting $\tilde{w}$ as the optimal solution and $w^0$ as the deterministic starting point, for any iteration $t$

$$\mathbb{E}\left[\lVert w^t - \tilde{w} \rVert^2 \right]\le \max \left\{ \frac{M^2}{c^2t}, \frac{\lVert w^0 - \tilde{w} \rVert^2}{t}\right\}.
$$ 

With regard to the continuous relaxation within SGD-R2, we consider each of these conditions in turn and relate them to those provided in proposition 2.  

\begin{itemize}
    \item \textbf{Condition 1}: By construction, $\mathcal{W}$ is non-empty, closed, bounded and convex. 
    \item \textbf{Condition 2}: Since
        \begin{eqnarray*}
    g(w) = \nabla_w \big(\log Z(w) - w^\top \cdot \mathbb{E}_{\pi_A(\theta)}\left[ f_X(\theta)\right] \big) =
    & \mathbb{E}_{\pi_w(\theta \vert \mathbf{X})}\left[ f_X(\theta)\right] - \mathbb{E}_{\pi_A(\theta)}\left[ f_X(\theta)\right]
        \end{eqnarray*}
    the objective function is differentiable on $\mathcal{W}$ if the above difference exists and is finite for all $w \in \mathcal{W}$. Moreover, because
    \begin{eqnarray} \label{eq:cov2}
    \nabla_w \left( \nabla_w \log Z(w) \right) = \text{Cov}_{\pi_w(\theta | \mathbf{X})}\left( f_X(\theta), f_X(\theta)\right)
    \end{eqnarray}
    the SGD-R2 objective function is $c$-strongly convex on $\mathcal{W}$ if the eigenvalues of $\text{Cov}_{\pi_w(\theta | \mathbf{X})}\left( f_X(\theta), f_X(\theta)\right)$ are lower-bounded by some $c>0$ $\forall w \in \mathcal{W}$.
    \item \textbf{Condition 3}: Independently sample $\left( \theta_i \right)_{i=1}^P \sim \pi_w(\theta | \mathbf{X})$ and $\left( \theta_j \right)_{j=1}^Q \sim \pi_A(\theta)$, forming the unbiased estimate 
    $$\hat{g}(w) = \frac{1}{P} \sum_{i=1}^{P} f_X(\theta_i) - \frac{1}{Q} \sum_{j=1}^{Q} f_X(\theta_j).$$ 
    \item \textbf{Condition 4}: Denoting  $\mu_w= \mathbb{E}_{\pi_w(\theta|\mathbf{X})}[f_X(\theta)]$ and $\eta= \mathbb{E}_{\pi_A(\theta)}[f_X(\theta)]$, notice that
 
    \begin{align*}
    \mathbb{E}_{\substack{\theta_i \sim \pi_w(\theta|\mathbf{X}) \\
    \theta_j \sim \pi_A(\theta)}}[\lVert  \hat{g}(w) \rVert^2] &= \mathbb{E}_{\substack{\theta_i \sim \pi_w(\theta|\mathbf{X}) \\
    \theta_j \sim \pi_A(\theta)}} \left[ \left\lVert \frac{1}{P} \sum_{i=1}^{P} f_X(\theta_i) - \frac{1}{Q} \sum_{j=1}^{Q} f_X(\theta_j) \right\rVert^2 \right] \\
     & = \mathbb{E}_{\pi_{w}(\theta|\mathbf{X})}\left[\left\lVert \frac{1}{P} \sum_{i=1}^{P}  f_X(\theta_i) \right\rVert^2 \right] - 2 \mu^\top_w \eta + \mathbb{E}_{\pi_{A}(\theta)}\left[\left\lVert \frac{1}{Q} \sum_{j=1}^{Q} f_X(\theta_j) \right\rVert^2 \right].
    \end{align*}

\noindent Letting $\hat{\mu}_w = \frac{1}{P} \sum_{i=1}^{P} f_X(\theta_i)$ and $\hat{\eta} = \frac{1}{Q} \sum_{j=1}^{Q} f_X(\theta_j)$ then $\mathbb{E}_{\pi_w(\theta|\mathbf{X})} [\hat{\mu}_w] = \mu_w$ and $\mathbb{E}_{\pi_A(\theta)} [\hat{\eta}] = \eta$ 

\begin{align*}
\mathbb{E}_{\pi_{w}(\theta|\mathbf{X})}\left[\left\lVert \frac{1}{P} \sum_{i=1}^{P}  f_X(\theta_i) \right\rVert^2 \right] &= tr(\text{Cov}_{\pi_w(\theta|\mathbf{X})}(\hat{\mu}_w, \hat{\mu}_w)) + \lVert\mu_w \rVert^2 \\
\mathbb{E}_{\pi_{A}(\theta)}\left[\left\lVert \frac{1}{Q} \sum_{j=1}^{Q} f_X(\theta_j) \right\rVert^2 \right] &=tr\left(\text{Cov}_{\pi_A(\theta)}(\hat{\eta}, \hat{\eta})\right) + \lVert\eta \rVert^2 
\end{align*}

Thus, if the eigenvalues of $\text{Cov}_{\pi_w(\theta|\mathbf{X})}(\hat{\mu}_w, \hat{\mu}_w)$ and $\text{Cov}_{\pi_A(\theta)}(\hat{\eta}, \hat{\eta})$ are bounded above by $a$ and $b$, respectively,

\begin{align*}
    \mathbb{E}_{\substack{\theta_i \sim \pi_w(\theta|\mathbf{X}) \\
    \theta_j \sim \pi_A(\theta)}}[\lVert  \hat{g}(w) \rVert^2] &= tr(\text{Cov}_{\pi_w(\theta|\mathbf{X})}(\hat{\mu}_w, \hat{\mu}_w)) + tr\left(\text{Cov}_{\pi_A(\theta)}(\hat{\eta}, \hat{\eta})\right) +\lVert \mu_w - \eta \rVert^2 \\
    & \le n(a+b) + \lVert \mu_w - \eta \rVert^2 \\
    & \le n(a+b) + \max_{w \in \mathcal{W}} \ \lVert \mu_w - \eta \rVert^2  \\
    & = n(a+b) + \max_{w \in \mathcal{W}} \ \lVert g(w) \rVert^2  \\
    & = M^2
\end{align*}

    \item \textbf{Condition 5}: Satisfied by construction.
\end{itemize}

\newpage

\section{PROOF OF PROPOSITION 3}

In this section, we derive the expression for the gradient of the reverse KL divergence with respect to the weights $w$. Notice that the induced posterior for weights $w$ can be written as

$$
\pi_w(\theta \vert \mathbf{X}) =  \exp \left[ w^\top \cdot f_X(\theta) - \log Z(w) \right] \cdot \pi(\theta).
$$

Thus,

\begin{align*}
    \text{KL}(\pi_w(\theta | \mathbf{X}) \parallel \pi_A(\theta) ) &= \mathbb{E}_{\pi_w(\theta \vert \mathbf{X})}\left[\log \exp \left[ w^\top \cdot f_X(\theta) - \log Z(w) \right]\right] + \mathbb{E}_{\pi_w(\theta \vert \mathbf{X})}\left[\log \frac{\pi(\theta)} {\pi_A(\theta)}\right], \\
    &= w^\top \mathbb{E}_{\pi_w(\theta \vert \mathbf{X})}[ f_X(\theta)] - \mathbb{E}_{\pi_w(\theta \vert \mathbf{X})}[\log Z(w)] + \mathbb{E}_{\pi_w(\theta \vert \mathbf{X})}\left[\log \frac{\pi(\theta)} {\pi_A(\theta)}\right],\\
    &= w^\top \nabla_w \log Z(w) - \log Z(w) + \mathbb{E}_{\pi_w(\theta \vert \mathbf{X})}\left[\log \frac{\pi(\theta)} {\pi_A(\theta)}\right], 
\end{align*}

\noindent  and, by the product rule 

\begin{align}
    \nabla_w \text{KL} &= w^\top \nabla^2_w \log Z(w) + \nabla_w \log Z(w) - \nabla_w \log Z(w) + \nabla_w \mathbb{E}_{\pi_w(\theta \vert \mathbf{X})}\left[\log \frac{\pi(\theta)} {\pi_A(\theta)}\right]   \notag\\
    &= w^\top \nabla^2_w \log Z(w) + \nabla_w \mathbb{E}_{\pi_w(\theta \vert \mathbf{X})}\left[\log \frac{\pi(\theta)} {\pi_A(\theta)}\right],
\end{align}

\noindent Assuming the associated regularity conditions, we can interchange integration and differentiation such that 

\begin{align}
    \nabla_w \text{KL} &= w^\top  \nabla^2_w \log Z(w) + \int \log \left(\frac{\pi( \theta  )}{ \pi_A( \theta  )} \right) \nabla_w \pi_w(\theta | \mathbf{X}) \dd \theta.
\end{align}

\noindent By the identity $\nabla_w \pi_w(\theta | \mathbf{X}) =\pi_w(\theta | \mathbf{X}) \nabla_w \log (\pi_w(\theta | \mathbf{X} ))$, we can write

\begin{align*}
    \nabla_w \text{KL}
    &= w^\top  \nabla^2_w \log Z(w) + \int \log \left(\frac{\pi( \theta  )}{ \pi_A( \theta  )} \right) \pi_w(\theta | \mathbf{X}) \nabla_w \log \pi_w(\theta | \mathbf{X}) \dd \theta \\
    &= w^\top  \nabla^2_w \log Z(w) + \mathbb{E}_{\pi_w(\theta \vert \mathbf{X})} \left[\log \left(\frac{\pi( \theta  )}{ \pi_A( \theta  )} \right) \nabla_w \log \pi_w(\theta | \mathbf{X}) \right]
\end{align*}

\noindent As previously seen in deriving the KL divergence formula, the log of our posterior is

\begin{align*}
   \log \left[\exp \left[ w^\top \cdot f_X(\theta) - \log Z(w) \right] \cdot \pi(\theta) \right] &=  w^\top \cdot f_X(\theta) - \log Z(w) + \log \pi(\theta) 
\end{align*}

\noindent implying

\begin{align*}
    \nabla_w \text{KL}
    &= w^\top  \nabla^2_w \log Z(w) + \mathbb{E}_{\pi_w(\theta \vert \mathbf{X})} \left[\log \left(\frac{\pi( \theta  )}{ \pi_A( \theta  )} \right) \left( f_X(\theta)  - \nabla_w  \log Z(w) \right) \right]
\end{align*}

\noindent Recalling that $\mathbb E_{\pi_w(\theta \vert \mathbf{X})} [f_X(\theta)] = \nabla_w \log Z(w)$, properties of the expectation operator yield the following:

\begin{align*}
    \nabla_w \text{KL}
    &= w^\top  \nabla^2_w \log Z(w) + \mathbb{E}_{\pi_w(\theta \vert \mathbf{X})} \left[\log \left(\frac{\pi( \theta  )}{ \pi_A( \theta  )} \right) \left( f_X(\theta)  - \mathbb{E}_{\pi_w(\theta \vert \mathbf{X})} [f_X(\theta)]  \right) \right] \\
    & = w^\top  \nabla^2_w \log Z(w) + \mathbb{E}_{\pi_w(\theta \vert \mathbf{X})} \left[\log \left(\frac{\pi( \theta  )}{ \pi_A( \theta  )} \right) \left( f_X(\theta) \right) \right] - \mathbb{E}_{\pi_w(\theta \vert \mathbf{X})} \left[ \mathbb{E}_{\pi_w(\theta \vert \mathbf{X})} [f_X(\theta)] 
    \log \left(\frac{\pi( \theta  )}{ \pi_A( \theta  )}\right)
    \right] \\
    &= w^\top  \nabla^2_w \log Z(w) + \mathbb{E}_{\pi_w(\theta \vert \mathbf{X})} \left[\log \left(\frac{\pi( \theta  )}{ \pi_A( \theta  )} \right) \left( f_X(\theta) \right) \right] -   \mathbb{E}_{\pi_w(\theta \vert \mathbf{X})} \Big[f_X(\theta)\Big] \mathbb{E}_{\pi_w(\theta \vert \mathbf{X})}\left[
    \log \left(\frac{\pi( \theta  )}{ \pi_A( \theta  )}\right) \right]
\end{align*}

Finally, because in the exponential family the Hessian of the log-partition function is the covariance matrix of the sufficient statistics, we have

\begin{align*}
    \nabla_w \text{KL}
    &= w^\top \text{Cov}_{\pi_w(\theta \vert \mathbf{X})} \left[f_X(\theta), f_X(\theta) \right] \ + \mathbb{E}_{\pi_w(\theta \vert \mathbf{X})} \left[\log \left(\frac{\pi( \theta  )}{ \pi_A( \theta  )} \right) \left( f_X(\theta) \right) \right] \\& \qquad  \qquad \qquad -   \mathbb{E}_{\pi_w(\theta \vert \mathbf{X})} \Big[f_X(\theta)\Big] \mathbb{E}_{\pi_w(\theta \vert \mathbf{X})}\left[
    \log \left(\frac{\pi( \theta  )}{ \pi_A( \theta  )}\right) \right]\\
    & =  \text{Cov}_{\pi_w(\theta \vert \mathbf{X})} \left[f_X(\theta), w^\top f_X(\theta) \right] \ + \mathbb{E}_{\pi_w(\theta \vert \mathbf{X})} \left[\log \left(\frac{\pi( \theta  )}{ \pi_A( \theta  )} \right) \left( f_X(\theta) \right) \right] \\ & \qquad \qquad \qquad-   \mathbb{E}_{\pi_w(\theta \vert \mathbf{X})} \Big[f_X(\theta)\Big] \mathbb{E}_{\pi_w(\theta \vert \mathbf{X})}\left[
    \log \left(\frac{\pi( \theta  )}{ \pi_A( \theta  )}\right) \right]
\end{align*}

\newpage

\section{PROOF OF PROPOSITION 4}

In this section, we justify the procedure used in final step of the rounded relaxation heuristics.
It consists of solving the following constrained rounding problem:
\begin{eqnarray*}
\min_{w' \in \mathbb{Z}^n} && \Vert w' - w\Vert_2^2 \\
\text{s.t.} && \|w' - \mathbf{1}\|_1 \leq B \nonumber \\
&& \|w' \|_\infty \leq L  \nonumber \\
&& w'  \succeq 0 \nonumber
\end{eqnarray*}
where $w \in \mathcal{W}$ is a solution to the continuous relaxation of the attacker's problem.

The first subsection proves that the procedure given in proposition 4 solves the rounding problem when the last two constraints are ignored.
The second subsection proves that adding these constraints does not change the set of minimizers as the continuous weight vector $w$ already satisfies the 3 constraints.

\subsection{Constrained rounding procedure}

Set aside the last two constraints of the rounding problem and consider the set of integer weight vectors $\mathbb{Z}^n \cap \mathcal{B}_1(\mathbf{1}, B)$, where $\mathcal{B}_1(\mathbf{1}, B)$ is the ball of center $\mathbf{1}$ and radius $B$ for the $L^1$ norm. Likewise, define a variable $\alpha \in \mathbb{Z}^n$ which is related to $w'$ through the vectors $\Delta \in \mathbb{R}^n$ and $\varepsilon \in \mathbb{R}^n$ by the following equations:
\begin{equation*}
    \begin{aligned}
        \Delta_i &= |w_i - 1|, \\
        \varepsilon_i &= \Delta_i - \floor{\Delta_i}, \\
        w'_i &= 1 + \text{sign}(w_i - 1)(\lfloor \Delta_i \rfloor + \alpha_i), \\
    \end{aligned}
\end{equation*}
where the convention $\text{sign}(0) = 1$ is used to ensure a bijection between $w'$ and $\alpha$. 


Using this change of variables, it can be observed that
\begin{equation*} 
    \begin{aligned}
        \left\Vert w' - \mathbf{1} \right\Vert_1 &= \sum_i |\floor{\Delta_i} + \alpha_i|, \\
        \left\Vert w' - w \right\Vert_2^2 &= \sum_i \big(1 + \text{sign}(w_i - 1)(\floor{\Delta_i} + \alpha_i) - w_i)\big)^2, \\
        &= \sum_i \big(\text{sign}(w_i - 1)(\floor{\Delta_i} + \alpha_i) - \text{sign}(w_i - 1)\Delta_i)\big)^2, \\
        &= \sum_i (\varepsilon_i - \alpha_i)^2, \\
    \end{aligned}
\end{equation*}

\noindent and the rounding problem can be reformulated as 
\begin{eqnarray}
\label{eq:problem_alpha}
\min_{\alpha \in \mathbb{Z}^n} &&\Vert \varepsilon - \alpha \Vert_2^2 \\
\text{s.t.} && \sum_i |\floor{\Delta_i} + \alpha_i| \leq B \nonumber.
\end{eqnarray}

Moreover, note that, for any feasible $\alpha \in \mathbb{Z}^n$ such that $\exists i_0, \alpha_{i_0} \notin \{0, 1\}$, there exists some $\tilde{\alpha} \in \mathbb{Z}^n$ having $\left\Vert \varepsilon - \tilde{\alpha} \right\Vert_2^2
< \left\Vert \varepsilon - \alpha \right\Vert_2^2$. Namely, it is a trivial task to verify that the following four rules yield feasible $\tilde{\alpha}$ that are closer to $\varepsilon$. 

\begin{enumerate}
    \item If $\exists \alpha_{i_0} > 1$, then $\forall i \in {1,..., n}$ choose 
    \begin{equation}   
        \tilde{\alpha}_i = \left\{
            \begin{array}{ll}
                \alpha_i & \mbox{if } i \neq i_0 \\
                1 & \mbox{if } i = i_0
            \end{array}
        \right. \nonumber
    \end{equation}
    
    \item If $\exists \alpha_{i_0} < -\floor{\Delta_{i_0}}$, then choose 
    \begin{equation}   
        \tilde{\alpha}_i = \left\{
            \begin{array}{ll}
                \alpha_i & \mbox{if } i \neq i_0 \\
                -\floor{\Delta_i} & \mbox{if } i = i_0
            \end{array}
        \right. \nonumber
    \end{equation}
    
    \item If $-\floor{\Delta_i} \leq \alpha_{i_0} < 0$ and $\sum_i |\floor{\Delta_i} + \alpha_i| \leq B - 1$,  choose 
    \begin{equation}   
        \tilde{\alpha}_i = \left\{
            \begin{array}{ll}
                \alpha_i & \mbox{if } i \neq i_0 \\
                \alpha_i + 1 & \mbox{if } i = i_0
            \end{array}
        \right. \nonumber
    \end{equation}
    
    \item If $-\floor{\Delta_i} \leq \alpha_{i_0} < 0$, and $\sum_i |\floor{\Delta_i} + \alpha_i| = B$, then $\exists i_1, \alpha_{i_1} \geq 1$, choose
    \begin{equation}   
        \tilde{\alpha}_i = \left\{
            \begin{array}{ll}
                \alpha_i & \mbox{if } i \notin \{i_0, i_1\} \\
                \alpha_i + 1 & \mbox{if } i = i_0\\
                \alpha_i - 1 & \mbox{if } i = i_1
            \end{array}
        \right. \nonumber
    \end{equation}
    
\end{enumerate}

The iterated application of these rules implies that the feasible region of problem \eqref{eq:problem_alpha} is a subset of $\{0,1\}^n$, thereby ensuring that for all $i$, $\alpha_i^2 = \alpha_i$ and $|\floor{\Delta_i} + \alpha_i| = \floor{\Delta_i} + \alpha_i$. This allows the constraint and the objective function, respectively, to be expressed as affine functions of $\alpha$:
\begin{equation*} 
    \begin{aligned}
        \sum_i |\floor{\Delta_i} + \alpha_i| &= \left(\sum_i \floor{\Delta_i}\right) + \left(\sum_i \alpha_i\right), \\
        \Vert \varepsilon - \alpha \Vert_2^2 &= \Vert\varepsilon\Vert^2 + \sum_i \alpha_i(1 - 2\varepsilon_i). \\
    \end{aligned}
\end{equation*}
The constraint is now equivalent to $\Vert \alpha \Vert_1 \leq N_{\text{max}} = B - \sum_i \floor{\Delta_i}$ and, since $w \in \mathcal{W}$, $\sum_i \floor{\Delta_i} \leq \left\Vert w - \mathbf{1} \right\Vert_1 \leq B$ and $N_{\text{max}} \geq 0$, one can observe that $\alpha = 0$ is always feasible. 
The problem therefore reduces to the selecting up to $N_{\text{max}}$ indices for which $\alpha_i = 1$ and the remaining are set to $\alpha_i = 0$.

By construction, it is apparent that assigning $\alpha_i=1$ reduces the objective function only when $\varepsilon_i > \frac {1}{2}$. Moreover, by linearity of the objective function and the binary nature of the decision variables, the optimal selection of $\alpha_i$ can be found by selecting the $N_{max}$ indices having the largest $\varepsilon_i$-values exceeding $\frac{1}{2}$. Such is the procedure outlined in proposition 4. Notice that the solution of the rounding problem might not be unique in two cases: if $2\varepsilon_i - 1 = 0$ for some $i$, or in case of equality among the $\varepsilon_i$.

\subsection{Equivalence of the two problems}

As $w \in \mathcal{W}$, any minimizer $w^* \in \mathbb{Z}^n$ of the rounding problem in $\mathcal{B}_1(\mathbf{1}, B)$ is also in $\mathcal{W}$, because it satisfies the non-negativity and $L^\infty$-norm constraints.
Indeed, if we consider a solution $\alpha^* \in \{0, 1\}^n$ and its associated $w^* \in \mathbb{Z}^n$ found as described in the previous section, for any $i$, we have:
\begin{enumerate}
    \item if $\varepsilon_i < \frac{1}{2}$, then $\alpha^*_i = 0$ and $|w^*_i - 1| = \floor{\Delta_i} \leq \Delta_i \leq \Delta_i + \frac{1}{2}$, and
    \item if $\varepsilon_i \geq \frac{1}{2}$, then $|w^*_i - 1| = \floor{\Delta_i} + \alpha^*_i \leq \Delta_i + \alpha^*_i - \frac{1}{2} \leq \Delta_i + \frac{1}{2}$.
\end{enumerate}
Therefore, we always have $|w^*_i - 1| \leq \Delta_i + \frac{1}{2} = |w_i - 1| + \frac{1}{2}$. Noticing that $w_i \in [0, L]$, it must be that $w^*_i \in \left[-\frac{1}{2}, L + \frac{1}{2}\right]$ and, since $w^* \in \mathbb{Z}^n$, this implies $w^*_i \in [0, L]$. Taken collectively, this shows that

\begin{equation*}
    \argmin_{w' \in \mathbb{Z}^n \cap \mathcal{B}_1(\mathbf{1}, B)} \Vert w' - w\Vert_2^2 
    \subseteq \mathbb{Z}^n \cap \mathcal{W}.
\end{equation*}

\noindent and, since $\mathbb{Z}^n \cap \mathcal{W} \subseteq \mathbb{Z}^n \cap \mathcal{B}_1(\mathbf{1}, B)$, it is equivalent to consider the problem in $\mathcal{W}$ or $\mathcal{B}_1(\mathbf{1}, B)$, i.e with or without the non-negativity and $L^\infty$-norm constraints:

\begin{equation*}
    \argmin_{w' \in \mathbb{Z}^n \cap \mathcal{B}_1(\mathbf{1}, B)} \Vert w' - w\Vert_2^2 
    = 
    \argmin_{w' \in \mathbb{Z}^n \cap \mathcal{W}} \Vert w' - w\Vert_2^2.
\end{equation*}

\newpage

\section{WEIGHTED POSTERIOR FOR BAYESIAN LINEAR REGRESSION WITH NORMAL-INVERSE-GAMMA PRIOR}

This section provides the derivation of the weighted posterior parameters for Bayesian linear regression with a Normal-Inverse-Gamma prior, based on a perturbed dataset. The perturbation is represented by the weight vector $w \in \mathbb{R}^n$, or equivalently, the diagonal matrix $W = \text{diag}(w) \in \mathbb{R}^{n \times n}$.

The probabilistic model we are assuming is
\begin{equation*}
    \begin{aligned}
    \sigma^2 &\sim \text{Inv-Gamma}(a_0, b_0), \\
    \beta \,|\, \sigma^2 &\sim \mathcal{N}(\mu_0, \sigma^2 \Lambda_0^{-1}), \\
    Y_i \,|\, X_i, \beta, \sigma^2 &\sim \mathcal{N}( X_i \beta, \sigma^2),
\end{aligned}
\end{equation*}

where $\sigma^2 \sim \text{Inv-Gamma}(a_0, b_0)$ is equivalent to $\frac{1}{\sigma^2} \sim \text{Gamma}(a_0, b_0)$.
The pdf of $\sigma^2=s$ is given by 
\begin{equation*}
    \quad f(s)=\frac{b_0^{a_0}}{\Gamma(\alpha)} s^{-a_0-1} e^{-\frac{b_0}{s}}, \ \forall s> 0.
\end{equation*}

The joint NIG prior over the parameters is equal to
\begin{align*}
\pi\left(\beta, \sigma^{2}\right)  
&= \text{Inv-Gamma}\left(\sigma^{2} \mid a_{0}, b_{0}\right) \cdot \mathcal{N}\left(\beta \mid \mu_{0}, \sigma^{2} \Lambda_0^{-1}\right) \\
&= \frac{b_0^{a_0}}{\Gamma\left(a_{0}\right)}
\left(\frac{1}{\sigma^{2}}\right)^{a_{0}+1} e^{-\frac{b_{0}}{\sigma^{2}}}
\frac{\left| \Lambda_0\right|^{\frac{1}{2}}}{(2 \pi \sigma^{2})^{\frac{d}{2}}} \exp\left(-\frac{1}{2 \sigma^{2}} (\beta - \mu_{0})^{\top}\Lambda_0(\beta - \mu_{0})\right).
\end{align*}

If we define $f_{X,y}(\beta, \sigma^{2})$ as the vector of log-likelihood contributions for each observation in the unperturbed dataset, then the likelihood of the weighted data, given the parameters $\beta$ and $\sigma^{2}$, can be expressed as:
\begin{align*}
\exp\left(w^{\top} f_{X,y}(\beta, \sigma^{2})\right) 
&= \prod_{i=1}^n \left(\frac{1}{\sqrt{2\pi\sigma^2}}\right)^{w_i} \exp\left(-\frac{(y_i - X_i \beta)^2}{2\sigma^{2}}\right)^{w_i}\\
&= \left(\frac{1}{\sqrt{2\pi\sigma^2}}\right)^{w^{\top}\mathbf{1}} \exp\left(-\frac{1}{2\sigma^{2}} \sum_{i=1}^n w_i(y_i - X_i \beta)^2\right)\\
&= \frac{1}{\left(2\pi\sigma^2\right)^{\frac{w^{\top}\mathbf{1}}{2}} }
\exp\left(-\frac{1}{2\sigma^{2}}  (y - X \beta)^{\top} W (y - X \beta)\right).\\
\end{align*}

Thus, the posterior is proportional to
\begin{align*}
\pi_w\left(\beta, \sigma^{2} \mid X, y\right)
& \propto \; \pi\left(\beta, \sigma^{2}\right) \cdot \exp\left(w^{\top} f_{X,y}(\beta, \sigma^{2})\right) \\
& \propto\left(\frac{1}{\sigma^{2}}\right)^{a_{0}+1 + {\frac{d + w^{\top}\mathbf{1}}{2}}}
\exp\left(-\frac{b_{0}}{\sigma^{2}} -\frac{1}{2\sigma^{2}} Q(\beta)\right),
\end{align*}

where $Q(\beta)$ is defined as:
\begin{align*}\label{eq:multivariate_completion_square}
Q(\beta) &= (\beta - \mu_{0})^{\top}\Lambda_{0}(\beta - \mu_{0}) + (y - X\beta)^{\top}W(y - X\beta)\\
  &= \beta^{\top} \left(\Lambda_{0} + X^{\top}WX\right) \beta - 2\beta^{\top}\left(\Lambda_{0}\mu_{0} + X^{\top}Wy\right)  + \mu_{0}^{\top} \Lambda_{0}\mu_{0} + y^{\top}Wy  \\
  &= \beta^{\top}\Lambda_n\beta - 2\beta^{\top} \Lambda_n\mu_n + c\\
  &= (\beta - \mu_n)^{\top}\Lambda_n(\beta - \mu_n) - \mu_n^{\top}\Lambda_n\mu_n +c \\   
  &= (\beta - \mu_n)^{\top}\Lambda_n(\beta - \mu_n) +c^{\ast},
\end{align*}

with
\begin{align*}
\Lambda_n &= \Lambda_0 + X^{\top}WX, \\
\mu_n &= \Lambda_n^{-1}\left(\Lambda_0\mu_{0} + X^{\top}Wy \right),\\
c &= \mu_{0}^{\top} \Lambda_0\mu_{0} + y^{\top}Wy, \\
c^{\ast} &= \mu_{0}^{\top} \Lambda_0 \mu_{0} + y^{\top}Wy - \mu_n^{\top}\Lambda_n\mu_n.
\end{align*}

Finally, we can express the posterior as a NIG distribution with parameters $(\mu_n, \Lambda_n, a_n, b_n)$, i.e., 

\begin{align*}
\pi_w\left(\beta, \sigma^{2} \mid X, y\right) 
& \propto \left(\frac{1}{\sigma^{2}}\right)^{a_{0}+1 + \frac{d + w^{\top}\mathbf{1}}{2}} 
\exp\left(-\frac{b_{0}}{\sigma^{2}}-\frac{1}{2 \sigma^{2}} ((\beta - \mu_n)^{\top}\Lambda_n(\beta - \mu_n) +c^{\ast})\right)\\
& \propto \left(\frac{1}{\sigma^{2}}\right)^{a_n+1} e^{-\frac{b_n}{\sigma^{2}}}
\left(\frac{1}{\sigma^2}\right)^{\frac{d}{2}}
\exp\left(-\frac{1}{2 \sigma^{2}} (\beta - \mu_n)^{\top}\Lambda_n(\beta - \mu_n)\right)\\
&= \text{Inv-Gamma}\left(\sigma^{2} \mid a_n, b_n \right) \cdot \mathcal{N}\left(\beta \mid \mu_n, \sigma^{2} \Lambda_n^{-1}\right), \\
\end{align*}

where
\begin{align*}
a_n &= a_{0} + \frac{1}{2}w^{\top}\mathbf{1}, \\
b_n &= b_{0} + \frac{c^{\ast}}{2}= b_{0}+\frac{1}{2}\left(\mu_{0}^{\top} \Lambda_{0} \mu_{0}+y^{\top} W y-\mu_n^{\top} \Lambda_n \mu_n\right)\;.
\end{align*}

\newpage

\section{ADDITIONAL EXPERIMENTS: ATTACKS AGAINST BAYESIAN LINEAR REGRESSION SIMULATION STUDY}

This section presents additional experiments that complement those shown in the simulation case study on attacks targeting Bayesian linear regression in Section 6.1.

\subsection{Tainted posterior for increasing values of $B$}

In relation to the attacks targeting the Bayesian linear regression with a normal-inverse-gamma prior from the simulation study in Section 6.1, Figure \ref{fig:attack_visualization_nig} illustrates the duplicated and removed data points identified by the 2O-R2 heuristic for increasing values of $B$, along with the progression of the tainted posterior as it moves closer to the adversarial posterior. As shown in Figure \ref{subfig:attack_visualization_nig_a}, carefully manipulating just 10\% of the data produces a significant shift in the posterior toward the target, while modifying 30\% of the points is sufficient to qualitatively match the target posterior.

\begin{figure*}[htbp!]
\begin{multicols}{3}
    \includegraphics[width=\linewidth]{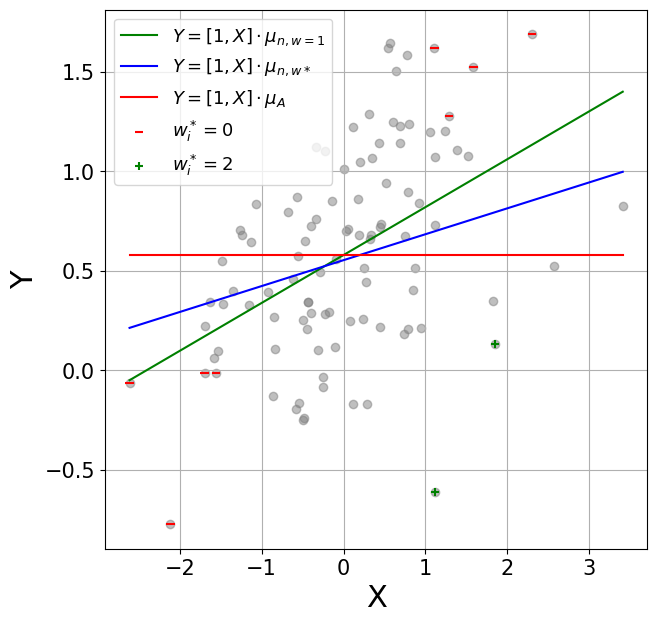}\par 
    \includegraphics[width=\linewidth]{figures/nig_model/points_2O-R2_B=20.png}\par 
    \includegraphics[width=\linewidth]{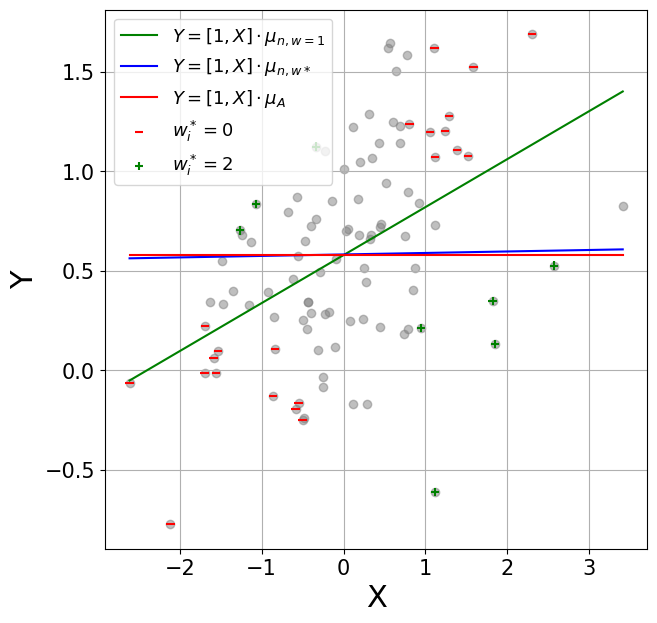}\par 
    \end{multicols}
\begin{multicols}{3}
    \includegraphics[width=\linewidth]{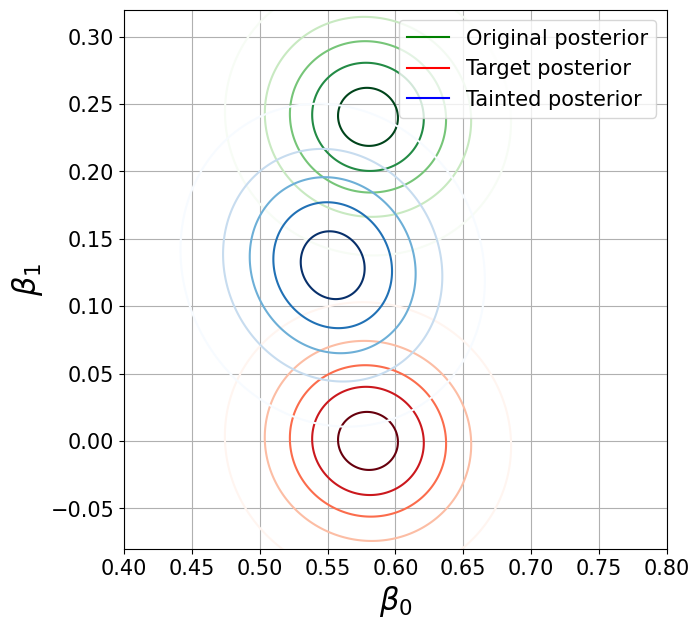}
    \subcaption{$B=10$} \label{subfig:attack_visualization_nig_a}\par
    \includegraphics[width=\linewidth]{figures/nig_model/beta_2O-R2_B=20.png}
    \subcaption{$B=20$}\label{subfig:attack_visualization_nig_b}\par
    \includegraphics[width=\linewidth]{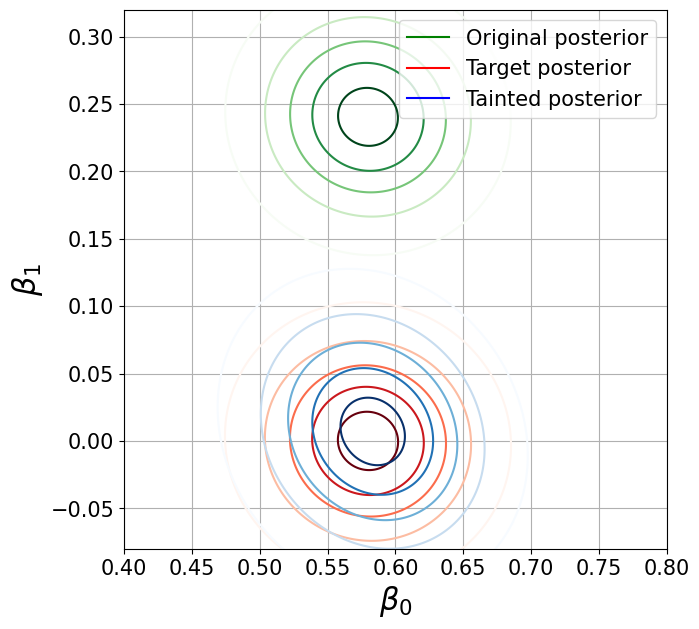}
    \subcaption{$B=30$}\label{subfig:attack_visualization_nig_c}\par
\end{multicols}
\caption{Visualization of the attack found using the 2O-R2 heuristic for $B \in \{10, 20, 30\}$. The first row shows, for each value of $B$, the data points $(X_i, Y_i)$, with those removed or duplicated highlighted. It also displays the regression lines for the equation $y = \mu_0 + \mu_1 x$, where $\mu_1$ corresponds to the posterior mean under untainted data (green), the adversarial posterior mean (red), and the posterior mean under the attack (blue). The second row presents the marginal posterior distributions for $(\beta_0, \beta_1)$, visualized as contour plots from samples of the normal-inverse-gamma model.}
\label{fig:attack_visualization_nig}
\end{figure*}

\subsection{Continuous relaxation problem}

This section studies the impact that rounding has on the solution of the continuous relaxation of the IPA problem. In addition, some variations of the rounded relaxation heuristic are explored. This is done under the experimental setting described in section 6.1. 

\begin{figure*}[h]
\begin{multicols}{3}
    \includegraphics[width=\linewidth]{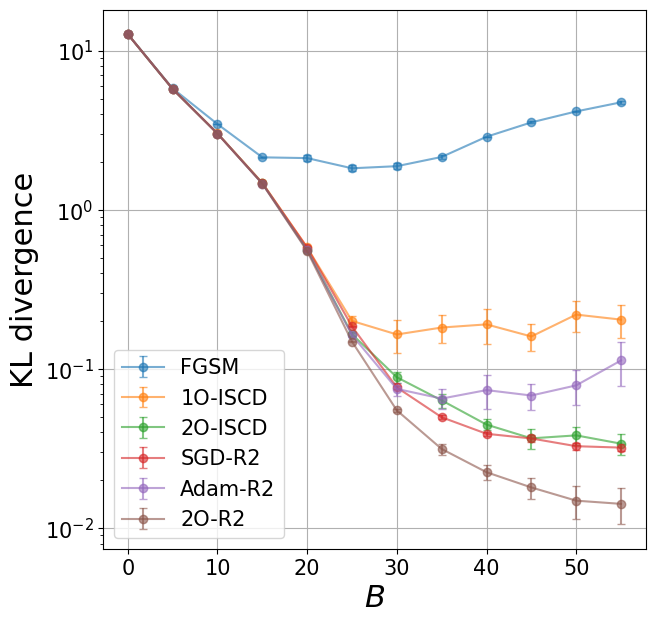}
    \subcaption{Before rounding}\par
    \includegraphics[width=\linewidth]{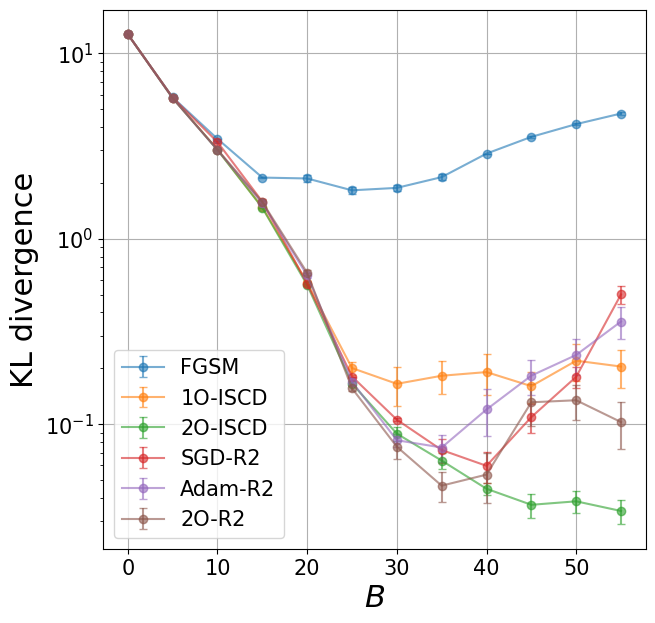}
    \subcaption{After rounding}\par
    \includegraphics[width=\linewidth]{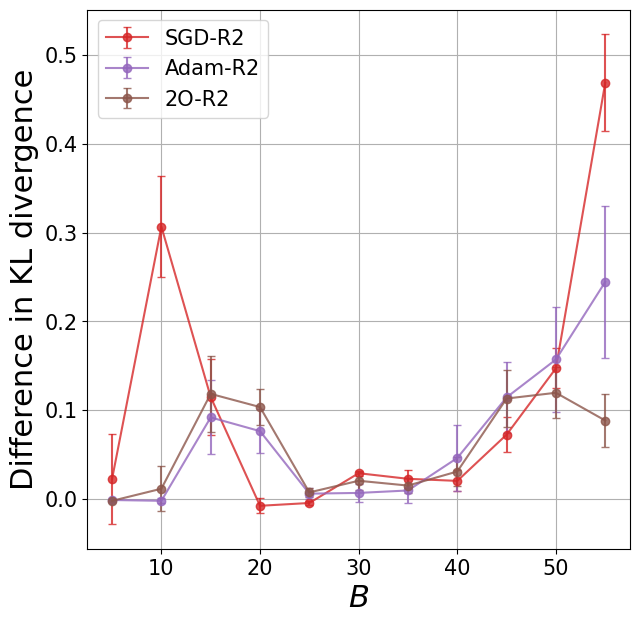}
    \subcaption{Difference due to rounding}\par
\end{multicols}
\caption{Mean plus/minus two standard errors of the objective function obtained with the continuous relaxation heuristics before and after the rounding procedure. The third figure is the difference between the two first ones.}
\label{fig:KL_before_after_rounding}
\end{figure*}

\begin{figure*}[h]
\begin{multicols}{2}
    \centering
    \includegraphics[width=0.6\linewidth]{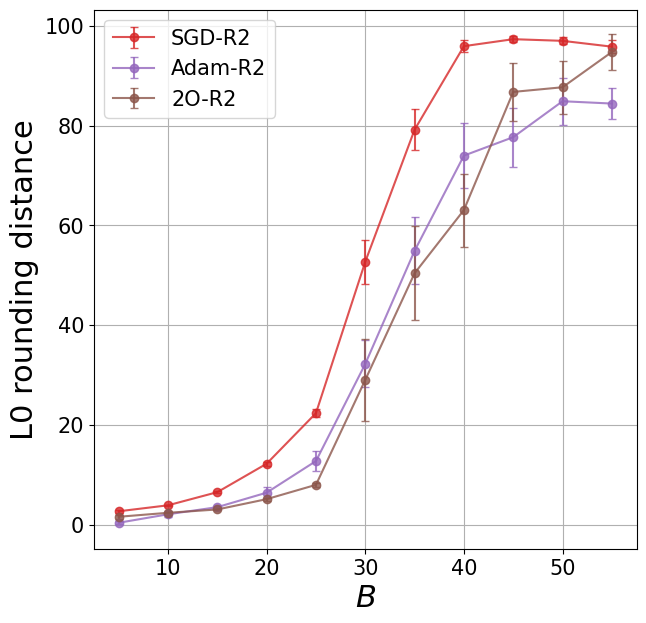}
    \subcaption{L0 distance}\label{subfig:L0_vs_B}\par
    \includegraphics[width=0.6\linewidth]{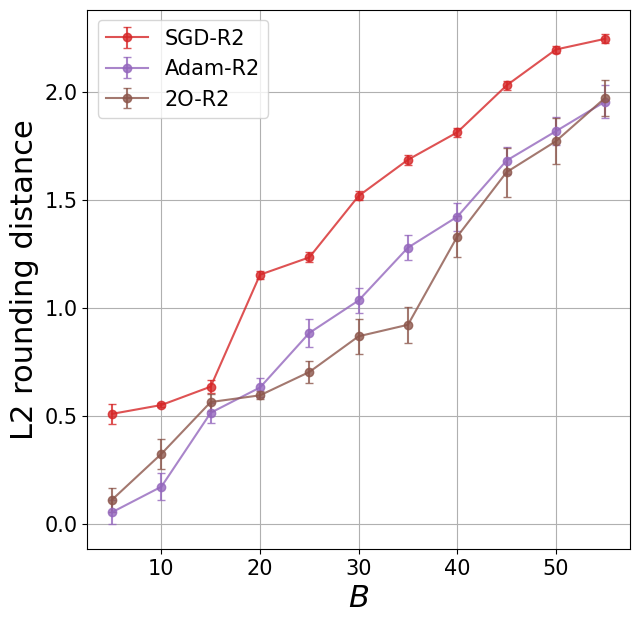}
    \subcaption{L2 distance}\label{subfig:L2_vs_B}\par
\end{multicols}
\caption{Norm between the solution to the continuous relaxation problem and the rounded vector returned. The $L_0$ norm corresponds to the number of coordinates that are changed by the rounding procedure, within a tolerance of $10^{-6}$.}
\label{fig:rounding_distance}
\end{figure*}

The results from Section 6.1 show that the KL divergence obtained from Adam-R2 and 2O-R2 does not decrease with increasing $B$, despite the expanding feasible set. Figure \ref{fig:KL_before_after_rounding} illustrates the objective function values produced by the R2 heuristics both before and after the rounding procedure, along with the difference between them. It is clear that rounding significantly affects the solution, especially for larger values of $B$, which explains why, for some heuristics, the KL divergence increases as $B$ grows. Additionally, Figure \ref{fig:rounding_distance} displays the $L_0$ and $L_2$ norms between the solution to the continuous relaxation problem and the rounded solution for the relaxation-based heuristics. Both norms increase with $B$, suggesting that the worsening performance of R2 heuristics at larger $B$ values is due to the greater impact of the rounding procedure. Specifically, the $L_0$ norm shows that the number of coordinates in the continuous solution that are already integers before rounding decreases with $B$, contributing to the larger impact of rounding for higher $B$.

\subsection{Empirical illustration of convergence speed}
\label{subsec:conv_speed}

This section provides empirical insights into the convergence speed of various heuristics. Figure \ref{fig:convergence_speed} shows the estimated evolution of the objective function across iterations for ISCD and R2 heuristics, applied to the synthetic example from Section 6.1 with $B=30$. These estimations are based on first- and second-order Taylor expansions derived from Monte Carlo estimates of the gradients and Hessians, as the closed-form KL divergence is not used. The objective function is known only up to an additive constant, and we set the origin of the y-axis at the initial KL divergence value.

Due to the convexity of the objective function, first-order Taylor expansions provide lower and upper bounds on the KL divergence decrease between successive weight vectors $w^{t}$ and $w^{t+1}$. Summing these bounds over the iterations gives an estimate of the KL divergence's evolution for each weight vector visited. Similarly, by summing second-order Taylor estimates, we obtain point estimates of the KL divergence difference between $w^{0} = \mathbf{1}$ and $w^{t}$. 
When the decrease between $w^{t}$ and $w^{t+1}$ is estimated using Taylor expansion at $w^{t}$ (or $w^{t+1}$), the resulting estimate is referred to as the ``forward 2O estimate" (or ``backward 2O estimate") in the legend. The plots also display the mean between both second-order estimates at the final iteration, along with this mean plus the KL divergence added by the rounding procedure.

As shown in the figures, 2O- and Adam-R2 heuristics converge in only a few iterations, whereas ISCD exhibits slower convergence. This rapid convergence is a notable advantage of these heuristics, given that each iteration in both methods requires a costly gradient estimation that typically involves running MCMC simulations.

\begin{figure*}[htbp!]
\begin{multicols}{3}
    \includegraphics[width=\linewidth]{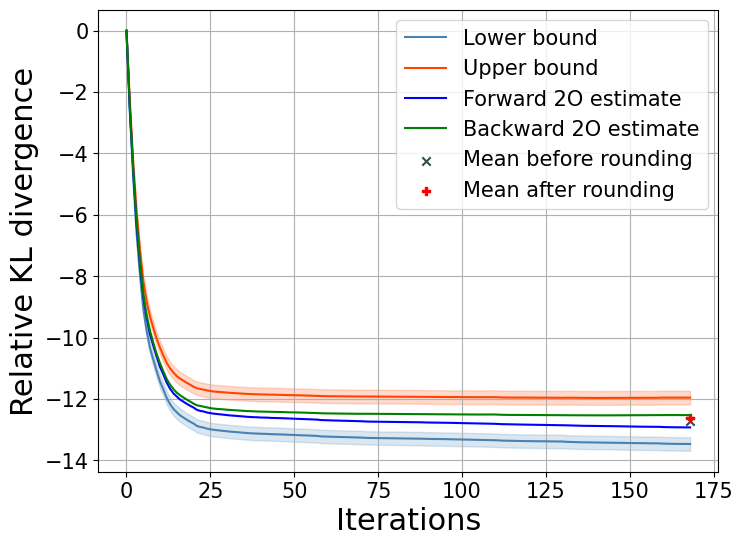}
    \subcaption{SGD-R2 ($\gamma_t=0.1$)}\par 
    \includegraphics[width=\linewidth]{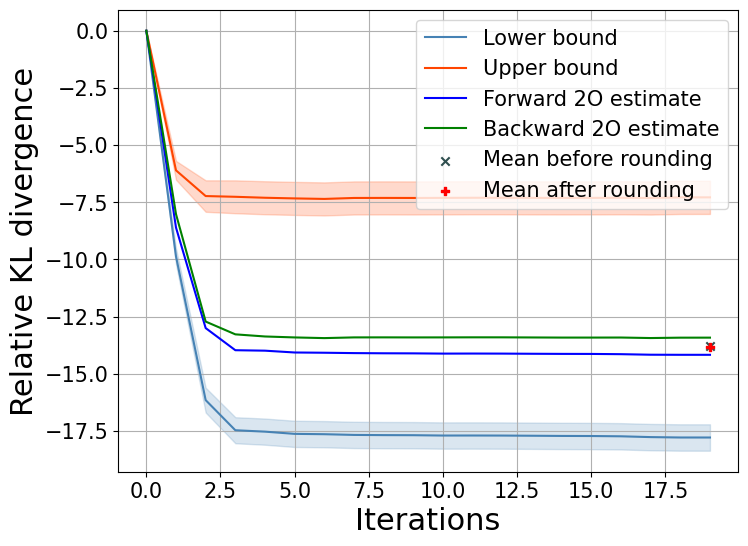}
    \subcaption{Adam-R2}\par 
    \includegraphics[width=\linewidth]{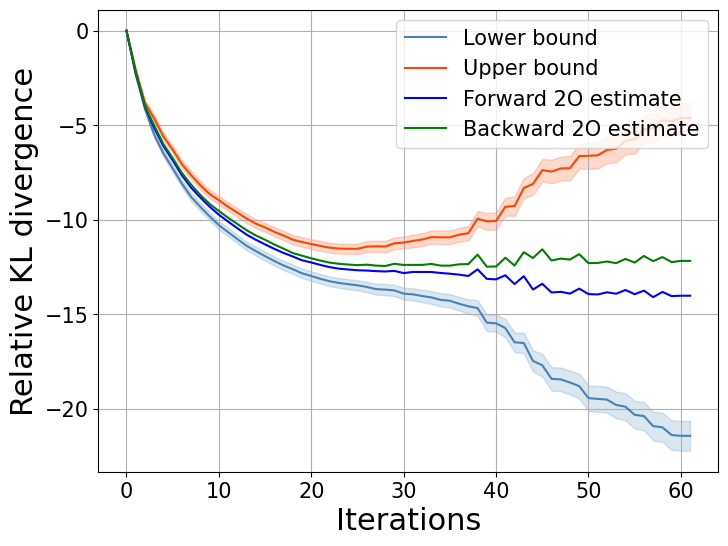}
    \subcaption{1O-ISCD}\par 
    \end{multicols}
\begin{multicols}{3}
    \includegraphics[width=\linewidth]{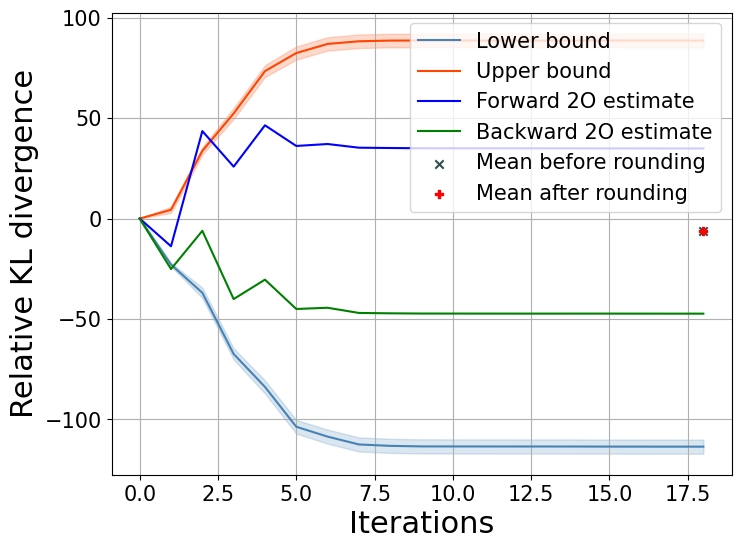}
    \subcaption{SGD-R2 ($\gamma_t=\frac{10}{t+1}$)}\par
    \includegraphics[width=\linewidth]{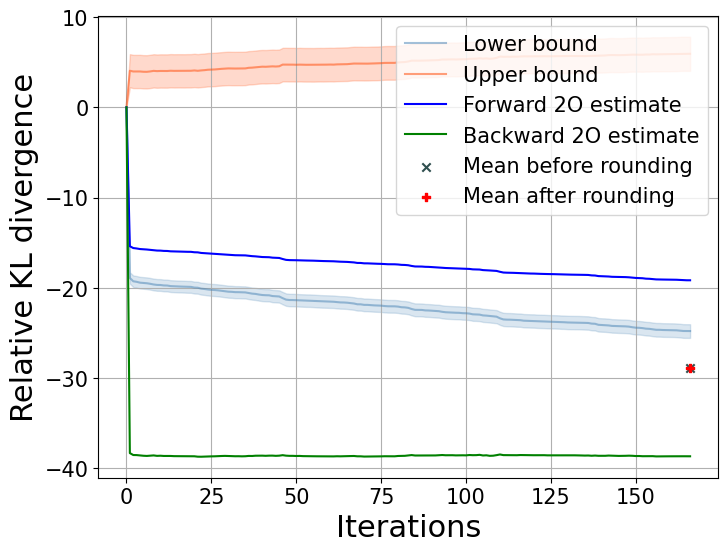}
    \subcaption{2O-R2}\par
    \includegraphics[width=\linewidth]{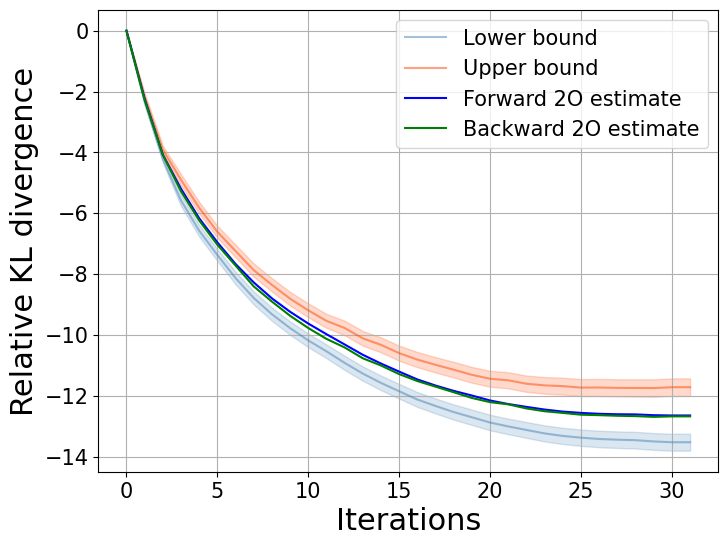}
    \subcaption{2O-ISCD}\par
\end{multicols}
\caption{Estimations of the KL divergence decrease using Taylor expansions for all iterative heuristics.}
\label{fig:convergence_speed}
\end{figure*}

\subsection{Influence of noise level}

\label{subsec:noise_level}

\begin{figure*}[h]
\begin{multicols}{3}
    \includegraphics[width=\linewidth]{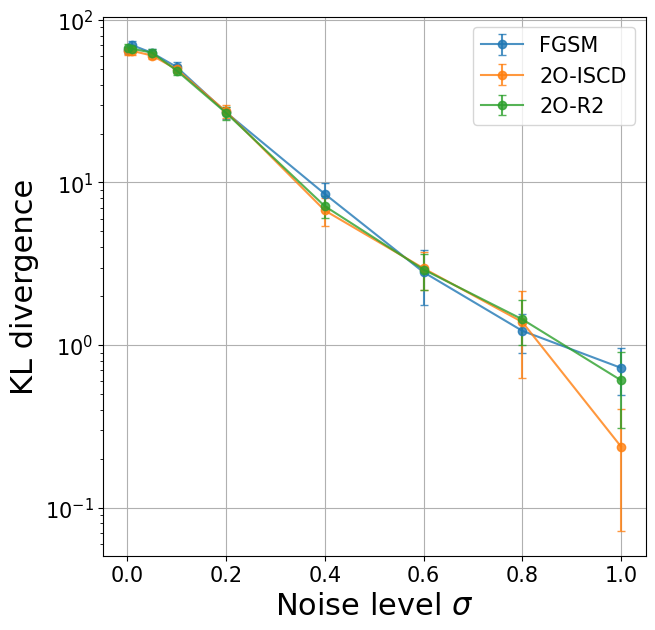}
    \subcaption{KL vs $\sigma$}\par
    \includegraphics[width=\linewidth]{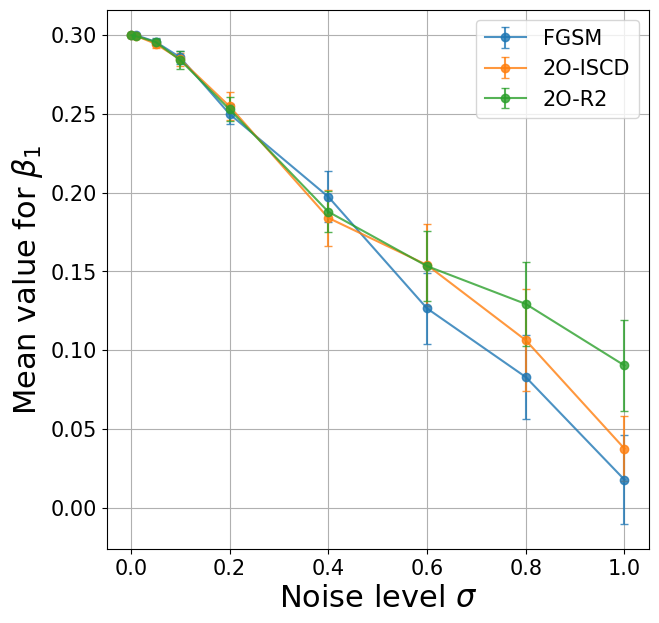}
    \subcaption{$\mathbb{E}_{w^*}\left[\beta_1\right]$ vs $\sigma$}\par
    \includegraphics[width=\linewidth]{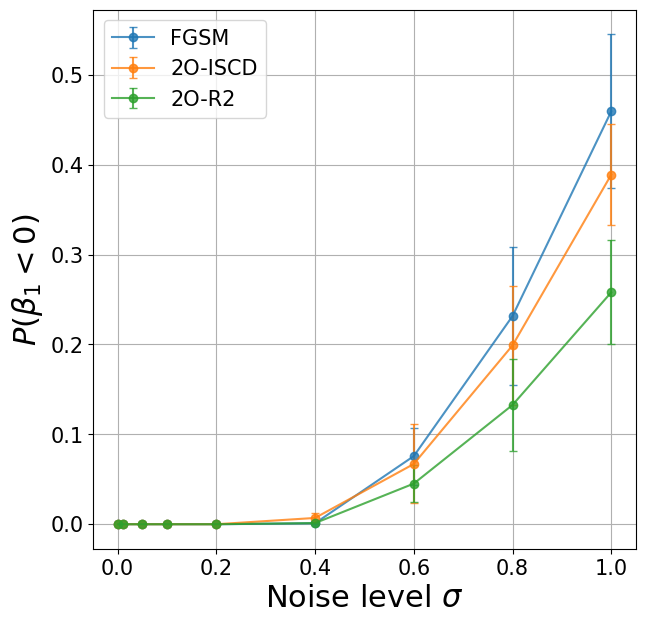}
    \subcaption{$\pi_{w^*}(\beta_1<0)$ vs $\sigma$}\par
\end{multicols}
\caption{Mean metrics for attacks against the NIG model with different noise levels}
\label{fig:noise_level_nig_metrics}
\end{figure*}

\begin{figure*}[htb]
\begin{multicols}{3}
    \includegraphics[width=\linewidth]{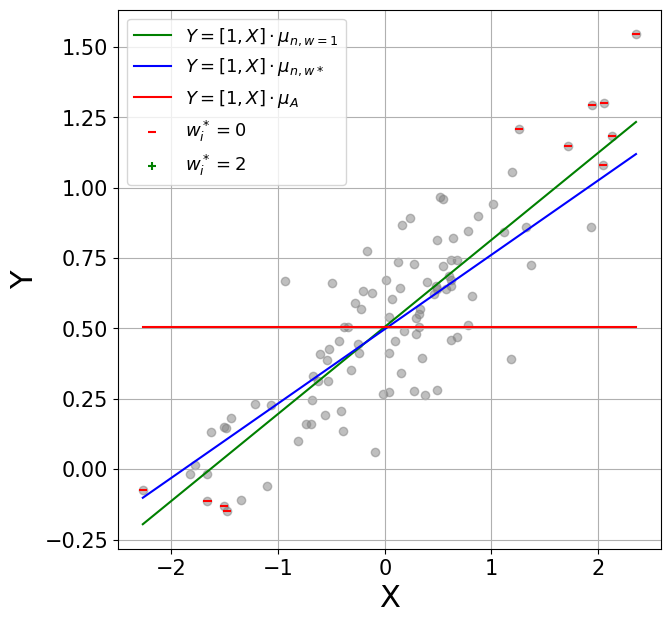}\par 
    \includegraphics[width=\linewidth]{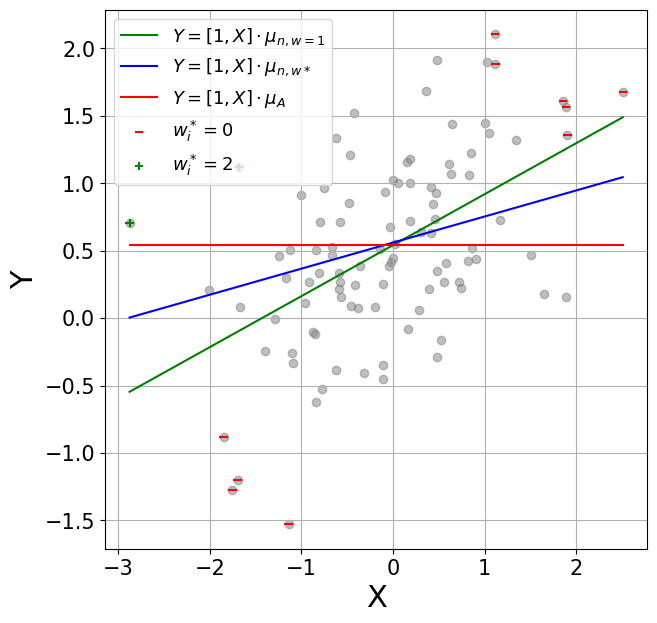}\par 
    \includegraphics[width=\linewidth]{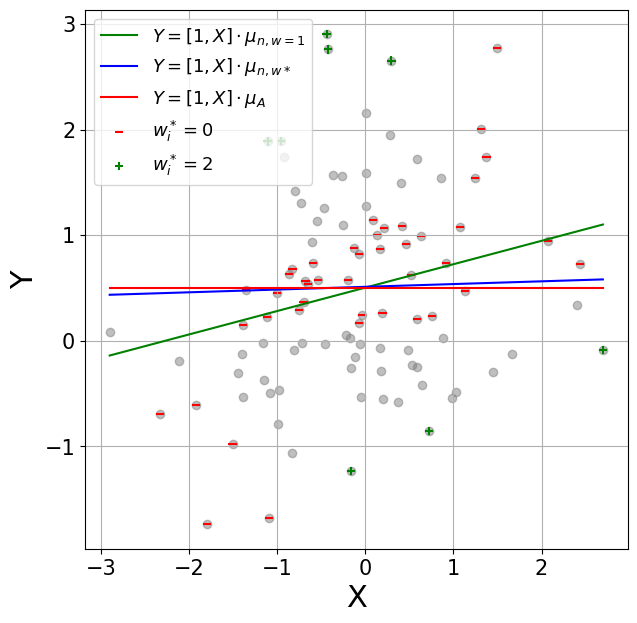}\par 
    \end{multicols}
\begin{multicols}{3}
    \includegraphics[width=\linewidth]{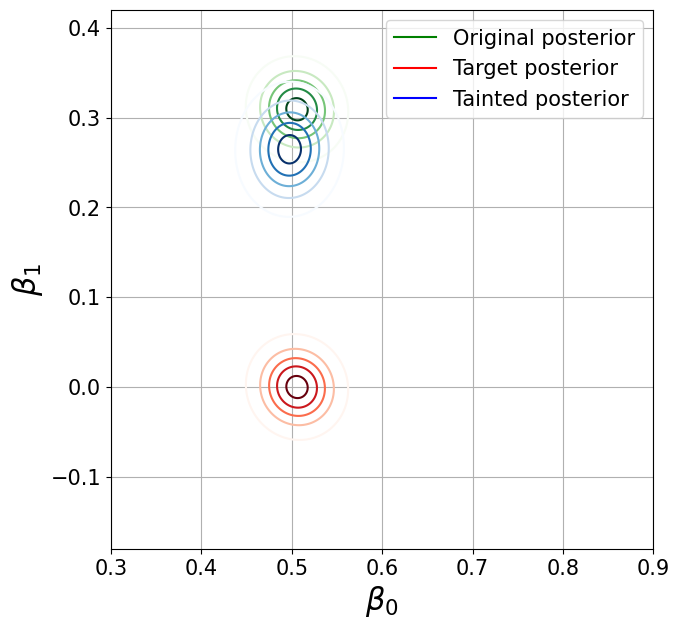}
    \subcaption{$\sigma = 0.2$}\par
    \includegraphics[width=\linewidth]{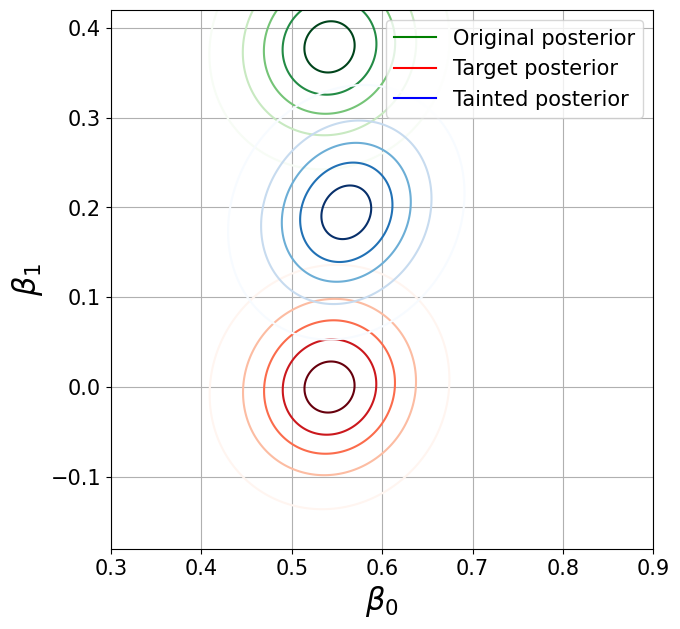}
    \subcaption{$\sigma = 0.6$}\par
    \includegraphics[width=\linewidth]{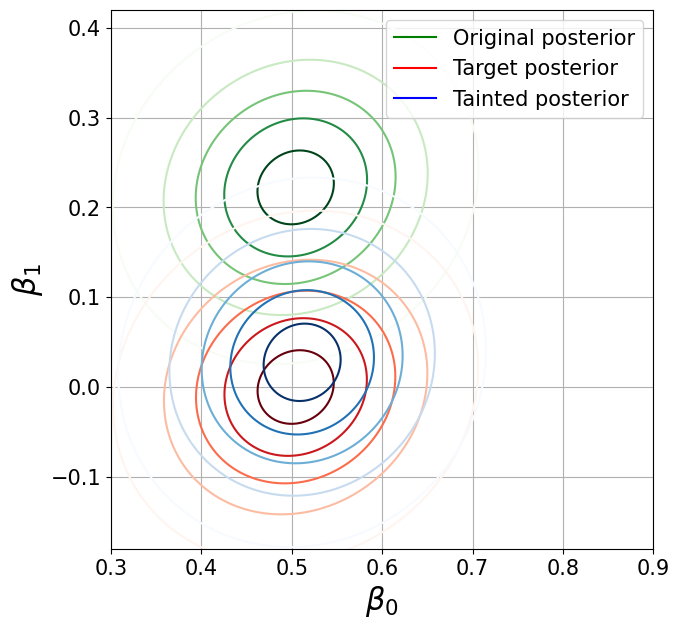}
    \subcaption{$\sigma = 1.0$}\par
\end{multicols}
\caption{Visualization of the attack using 2O-R2 heuristic, for $\sigma \in \{0.2, 0.4, 1.0\}$. The first row displays, for each value $\sigma$, the data samples $(X_i, Y_i)$ and marks the ones are removed or replicated, as well as regression lines of equation $y=\mu_0 + \mu_1 x$  where $\mu_1$ corresponds to the posterior mean under untainted data (green), the adversarial posterior mean (red), and the posterior mean under the attack (blue).
The second row presents the marginal posterior distributions for $(\beta_0, \beta_1)$, visualized as contour plots from samples of the normal-inverse-gamma model.
}
\label{fig:noise_level_nig}
\end{figure*}

In the experiments from Section 6.1 comparing the heuristics, Figure 1 showed that the first data points to remove or duplicate were those furthest from the true regression line $Y = \beta_0 + \beta_1 X$. This suggests that outliers exert a stronger influence on the posterior compared to points closer to the model's prediction, implying that noisier datasets may be easier to attack. Intuitively, since the posterior is proportional to the product of the likelihood functions $(\theta \mapsto \pi(X_i | \theta))_i$, a noisier dataset offers more diversity among these functions, giving the attacker greater flexibility to manipulate the posterior.
To explore this further, we conducted experiments by varying the noise level in the synthetic dataset. The setup was the same as in Section 6.1, except that the attack intensity was fixed at $B = 10$, and the noise standard deviation $\sigma$ was varied between 0.001 and 1.0. Each experiment was repeated 30 times with a new dataset generated for each repetition. 
We evaluated the effectiveness of the attacks using three metrics: the KL divergence between the target distribution and the tainted posterior, the mean of the marginal tainted posterior for the slope parameter $\beta_1$, and the posterior probability of the slope being negative. These metrics are shown in Figure \ref{fig:noise_level_nig_metrics} as functions of $\sigma$. Additionally, Figure \ref{fig:noise_level_nig} visualizes the data perturbations, along with the original, induced, and target marginal posteriors for $\beta$ at three different noise levels.

Across all three metrics, the attacks have minimal effect when the noise level is low, but their impact increases significantly as $\sigma$ rises, with the mean posterior for $\beta_1$ reaching approximately 0.05 when $\sigma = 1.0$. 
This can be intuitively understood from Figure \ref{fig:noise_level_nig}, which shows that at low noise levels, the dataset's limited diversity and absence of outliers restrict the effect that removing or duplicating points has on the posterior. In contrast, with higher noise levels, the presence of more outliers allows the same attack intensity to induce a larger shift.
These findings support the hypothesis that the sensitivity of models to our proposed attacks is influenced by the presence of outliers in the dataset.

\clearpage

\subsection{Attacking uncertainty}
\label{subsec:attacking_uncertainty}

All the attacks in this paper were designed to steer the posterior distribution in a specific direction, primarily focusing on shifting the posterior mean. However, a common reason for choosing Bayesian methods over frequentist ones is their ability to provide reliable uncertainty estimates. An adversary could, therefore, be motivated to target the shape of the posterior, aiming to increase or decrease its variance without altering the mean. By manipulating the posterior variance, the attacker could change the defender's level of uncertainty, which could, in turn, significantly impact their decision-making process.

This final experiment on synthetic data for linear regression assesses the robustness of the NIG model when the attack specifically targets the uncertainty over the slope parameter, $\beta_1$. We use the same synthetic dataset from Section 6.1, along with the same prior parameters.
The attacker's goal is to either increase or decrease the defender's uncertainty regarding $\beta_1$. The adversarial target is defined as a NIG distribution, where the parameters are identical to those of the original posterior, except for the covariance matrix $\Sigma_A=\Lambda_A^{-1}$. The covariance associated with $\beta_1$ is manually adjusted to $\Sigma_{A, 11} = \rho^2 \Sigma_{n, 11}$, where $\Sigma_n=\Lambda_n^{-1}$ and $\rho > 0$ represents the desired scaling factor for the attacker's uncertainty manipulation.
To ensure that $\Sigma_A$ remains positive semi-definite, we first initialize $L_A$ as the Cholesky decomposition of $\Sigma_n$, then modify the targeted coefficient by setting $L_{A, 11} \gets \rho L_{A, 11}$. Finally, $\Sigma_A$ is defined as $\Sigma_A \coloneq L_A L_A^\top$.

We conducted two versions of the experiment, setting $\rho = \frac{1}{10}$ and $\rho = 10$, where the attacker aims to respectively decrease or increase the uncertainty over $\beta_1$. The attacks were evaluated using three metrics: the KL divergence between the adversarial and tainted posteriors, the square root of the targeted variance parameter $\sqrt{\Sigma_{n, 11}}$, and the standard deviation of the marginal posterior over $\beta_1$, denoted as $\text{std}_w(\beta_1)$. Figure \ref{fig:uncertainty_metrics} shows these metrics as functions of attack intensity for both values of $\rho$.

\begin{figure*}[h]
\begin{multicols}{3}
    \includegraphics[width=\linewidth]{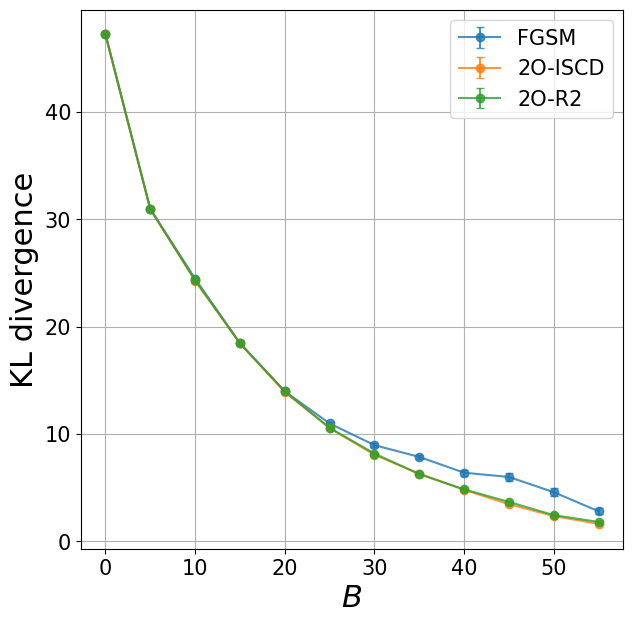}
    \subcaption{KL divergence vs $B$, $\rho=10$}\par
    \includegraphics[width=\linewidth]{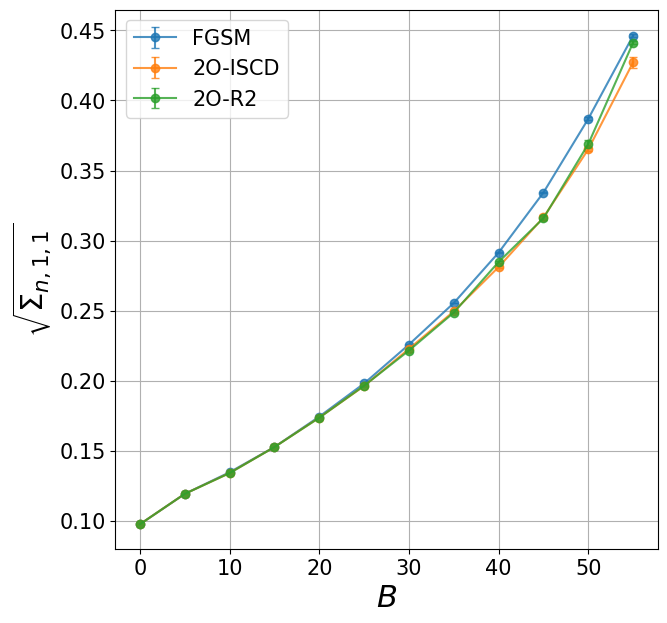}
    \subcaption{$\sqrt{\Sigma_{n, 11}}$ vs $B$, $\rho=10$}\par
    \includegraphics[width=\linewidth]{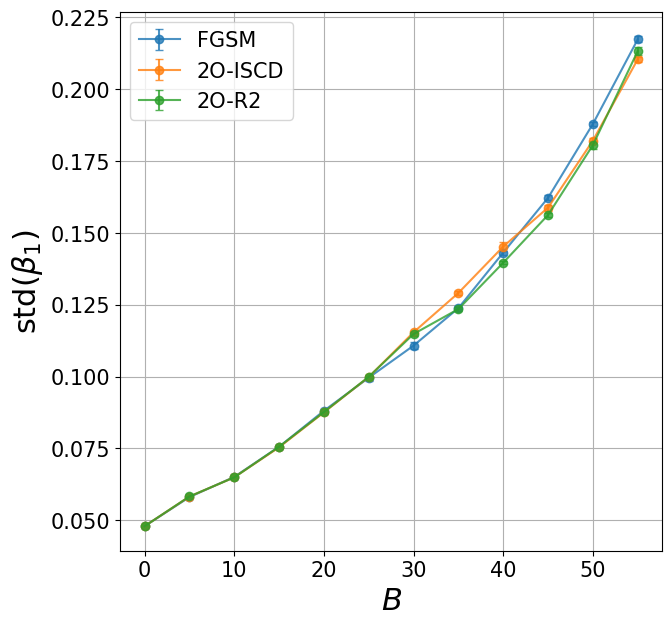}
    \subcaption{$\text{std}_w(\beta_1)$ vs $B$, $\rho=10$}\par
\end{multicols}
\begin{multicols}{3}
    \includegraphics[width=\linewidth]{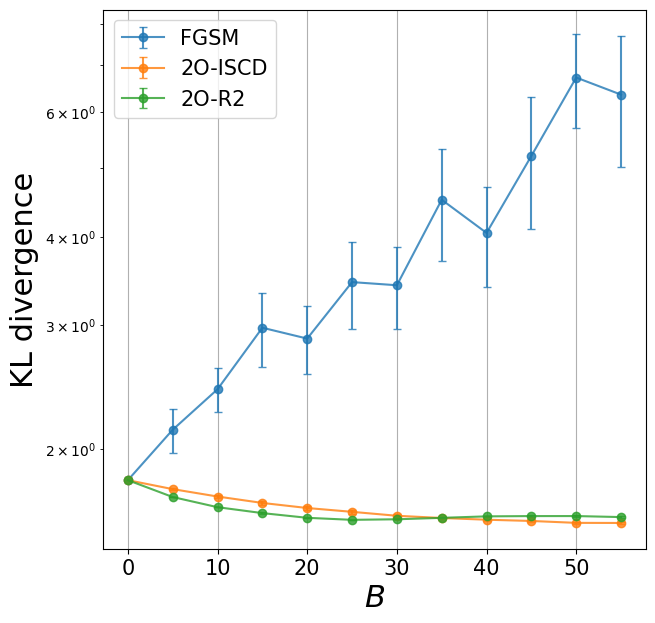}
    \subcaption{KL divergence vs $B$, $\rho=\frac{1}{10}$}\par
    \includegraphics[width=\linewidth]{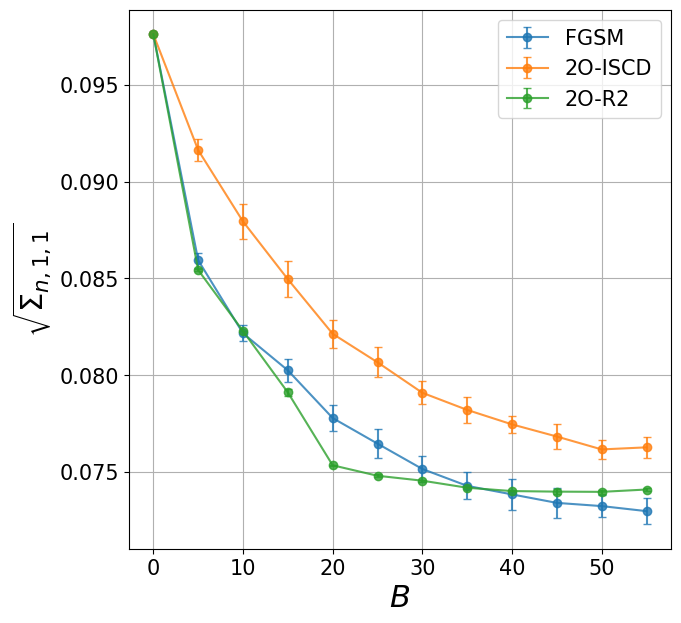}
    \subcaption{$\sqrt{\Sigma_{n, 11}}$ vs $B$, $\rho=\frac{1}{10}$}\par
    \includegraphics[width=\linewidth]{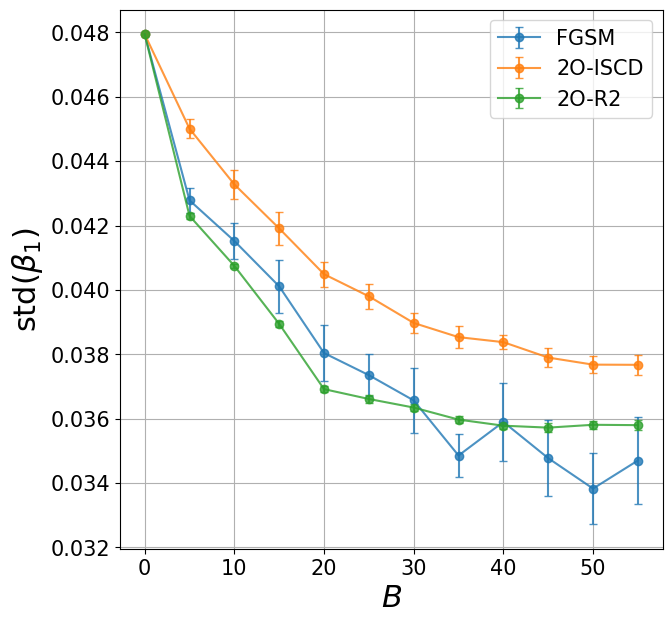}
    \subcaption{$\text{std}_w(\beta_1)$ vs $B$, $\rho=\frac{1}{10}$}\par
\end{multicols}
\caption{Metrics for the attack targeting the uncertainty over $\beta_1$. The two rows correspond respectively to attacks increasing and decreasing uncertainty.}
\label{fig:uncertainty_metrics}
\end{figure*}

\begin{figure*}[h!]
\begin{multicols}{2}
\centering
    \includegraphics[width=0.5\linewidth]{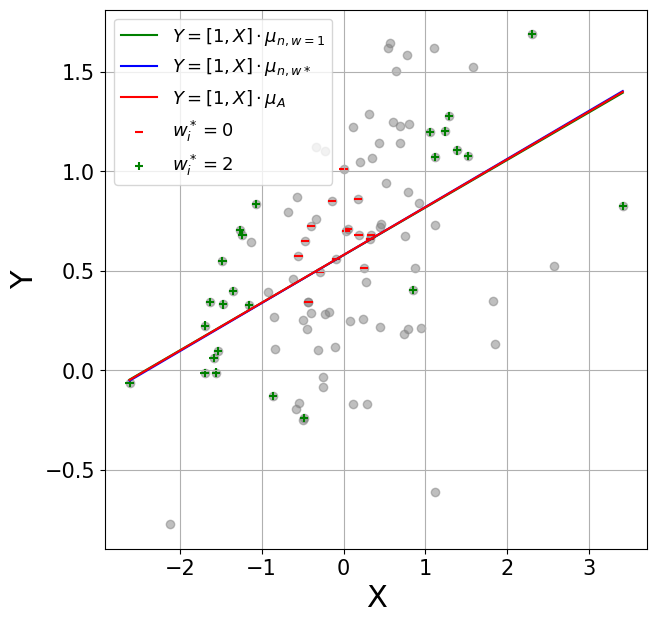}
    \subcaption{Decrease the uncertainty}\par
    \includegraphics[width=0.5\linewidth]{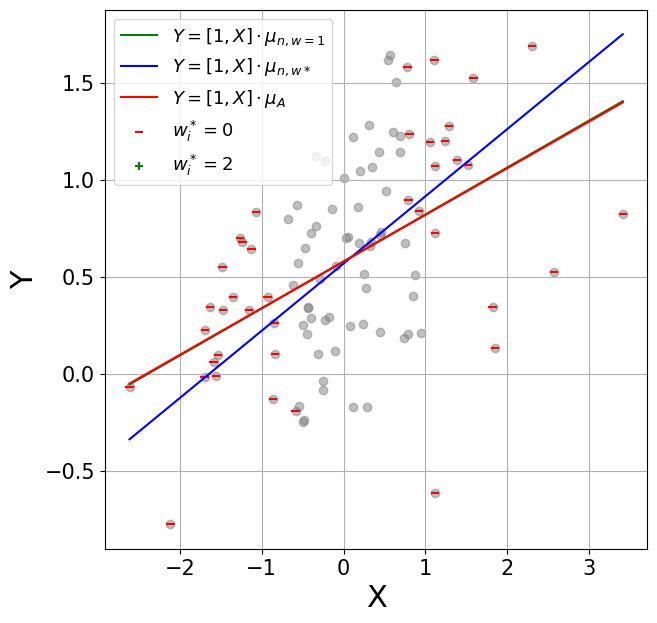}
    \subcaption{Increase the uncertainty}\par
\end{multicols}
\caption{Data modifications to attack the uncertainty by 2O-ISCD heuristic with $B=40$}
\label{fig:datapoints_uncertainty}
\end{figure*}

In the first row, where the attacker seeks to increase uncertainty, all three heuristics yield similar results across all metrics. FGSM performs slightly worse than the others for stronger attacks ($B \geq 30$) in terms of KL divergence. The original posterior standard deviation $\text{std}_{w=\mathbf{1}}(\beta_1) = 0.048$ is doubled with all heuristics for $B = 25$ (25\% of the dataset) and nearly tripled for $B = 40$ (40\% of the dataset).

In the second row, where the goal is to reduce uncertainty, FGSM underperforms in minimizing KL divergence but produces comparable results to other heuristics on the remaining two metrics. This is because these metrics do not account for the posterior location, and while the data modifications reduce uncertainty, they also shift the mean of $\beta_1$ away from the original value which coincides with the target mean. For all heuristics, $\sqrt{\Sigma_{n, 11}}$ is halved for $B \geq 35$, and $\text{std}_w(\beta_1)$ is reduced by about 25\%.

Figure \ref{fig:datapoints_uncertainty} presents examples of data modifications, one aiming to decrease uncertainty and the other to increase it. Many changes occur on points with the highest $|X_i|$, which are either duplicated for $\rho = \frac{1}{10}$ or deleted for $\rho = 10$. These changes can be explained by the closed-form expression for the precision coefficient of $\beta_1$: $\Lambda_{n, 11} = \Lambda_{0, 11} + \sum_{i=1}^n w_i X_i^2$. Deleting points with large $|X_i|$ reduces precision and increases uncertainty over $\beta_1$, while duplicating them has the opposite effect.

In summary, this experiment demonstrated that the proposed method can effectively attack the standard deviation of the posterior distribution, either increasing or decreasing the uncertainty surrounding the parameter of interest.

\newpage

\section{ADDITIONAL EXPERIMENTS: ATTACKS TO BAYESIAN LINEAR REGRESSION WITH REAL DATA}

This section presents additional experiments that complement the results discussed in Section 6.2.

\subsection{Boston houses dataset}

We provide additional information and experiments on poisoning attacks targeting Bayesian linear regression for house price prediction.

\subsubsection{Horseshoe prior regression specification}

For the experiments in Section 6.2, a Bayesian linear regression model was used with a horseshoe prior to induce sparsity. The full sampling model is specified as follows:

\begin{align*}
    \tau &\sim \mathcal{C}^+(1) \\
    \lambda_j &\sim \mathcal{C}^+(1), \quad &\forall\; 1 \leq j \leq d, \\
     \beta_j \mid \tau, \lambda_j &\sim \mathcal{N}(0, \tau^2 \lambda_j^2),\quad &\forall\ 1 \leq j \leq d, \\
    \alpha &\sim \mathcal{N}(0, \sigma_\alpha^2) \\
    \sigma &\sim \mathcal{C}^+(1) \\
    y_i \mid \alpha, \beta, X_i, \sigma &\sim \mathcal{N}(\alpha + \beta^{\top}X_i, \sigma^2), &\forall\ 1 \leq i \leq n,
\end{align*}

\noindent where $\mathcal{C}^+$ is the half-Cauchy distribution centered at zero and having the indicated scale parameter.

\subsubsection{Additional experiments}

Additional visualizations for the attacks on the Boston housing price dataset are provided herein. Figure \ref{fig:Boston_Horseshoe_metrics} presents further performance metrics comparing the heuristics under the same setting as in Section 6.2. Additionally, Figure \ref{fig:Boston_Horseshoe_varying_B} presents attacks at three different intensities, demonstrating that with $B=50$ (10\% of the data), the shift in the marginal posterior for $\beta_{RM}$ towards the adversarial target is substantial.

\begin{figure*}[h]
\begin{multicols}{4}
    \centering
    \includegraphics[width=\linewidth]{figures/Boston_lin_reg/RM/KL_div_HS.png}
    \subcaption{KL divergence vs $B$}\par
    \includegraphics[width=\linewidth]{figures/Boston_lin_reg/RM/mean_HS_MCMC.png}
    \subcaption{$\mathbb{E}_{w^*}(\beta_{RM})$ vs $B$}\par
    \includegraphics[width=\linewidth]{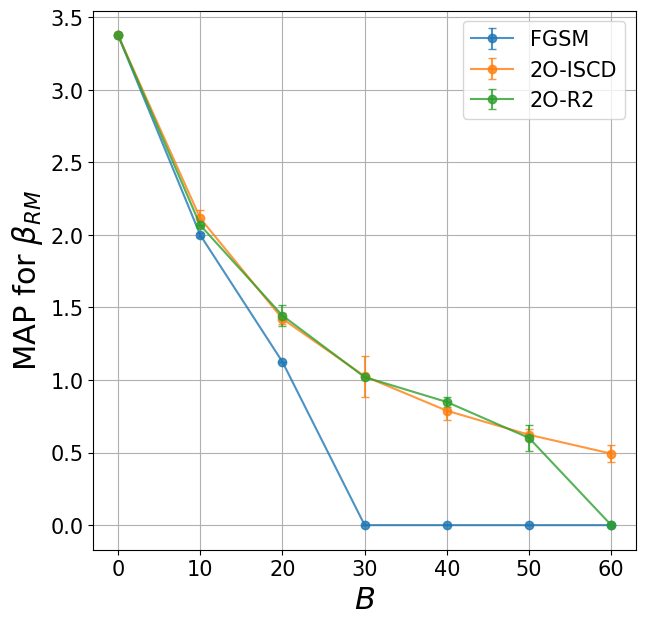}
    \subcaption{MAP of $\beta_{RM}$ vs $B$}\par
    \includegraphics[width=\linewidth]{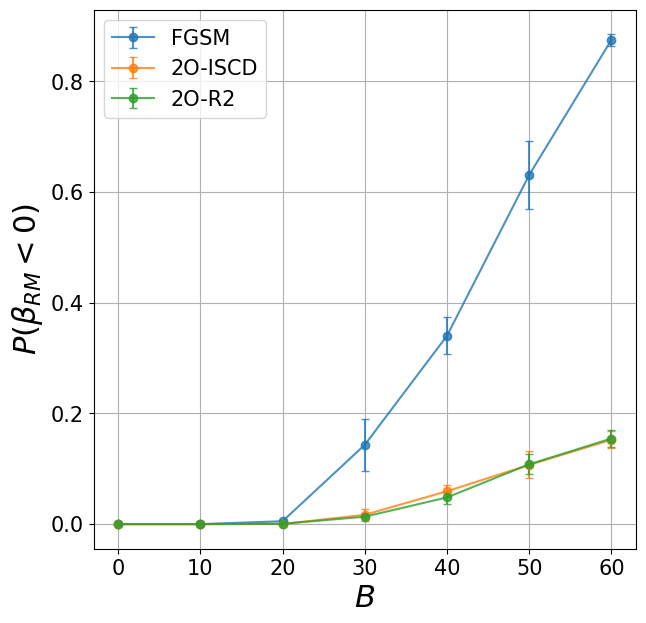}
    \subcaption{$\pi_{w^*}(\beta_{RM}<0)$ vs $B$}\par
\end{multicols}
\caption{Efficiency metrics for attacks against Horseshoe model depending on the intensity}
\label{fig:Boston_Horseshoe_metrics}
\end{figure*}

\begin{figure*}[h]
\begin{multicols}{3}
    \centering
    \includegraphics[width=\linewidth]{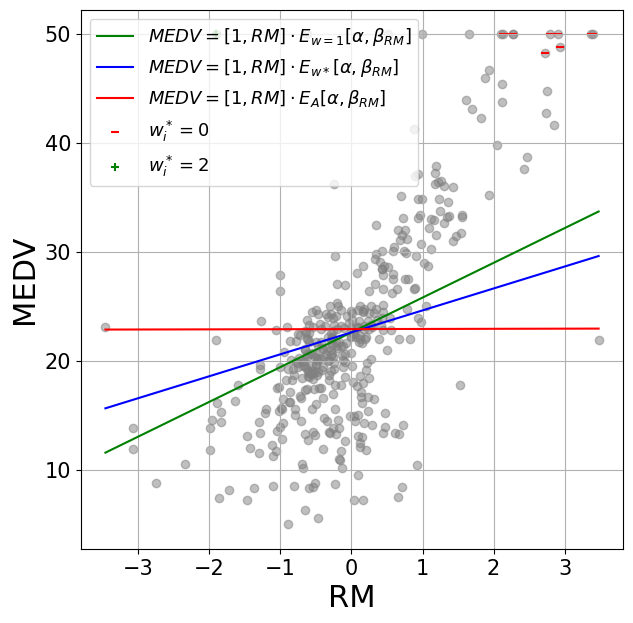}\par
    \includegraphics[width=\linewidth]{figures/Boston_lin_reg/RM/datapoints_2O-ISCD_B=30_HS.png}\par
    \includegraphics[width=\linewidth]{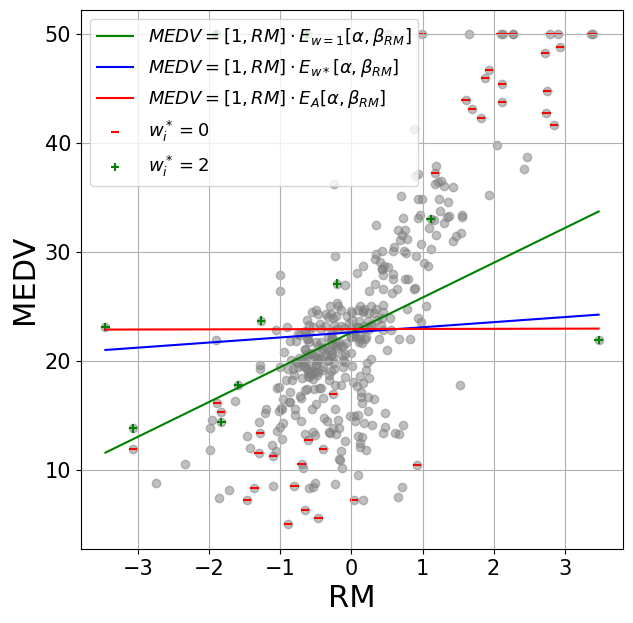}\par
\end{multicols}
\begin{multicols}{3}
    \centering
    \includegraphics[width=\linewidth]{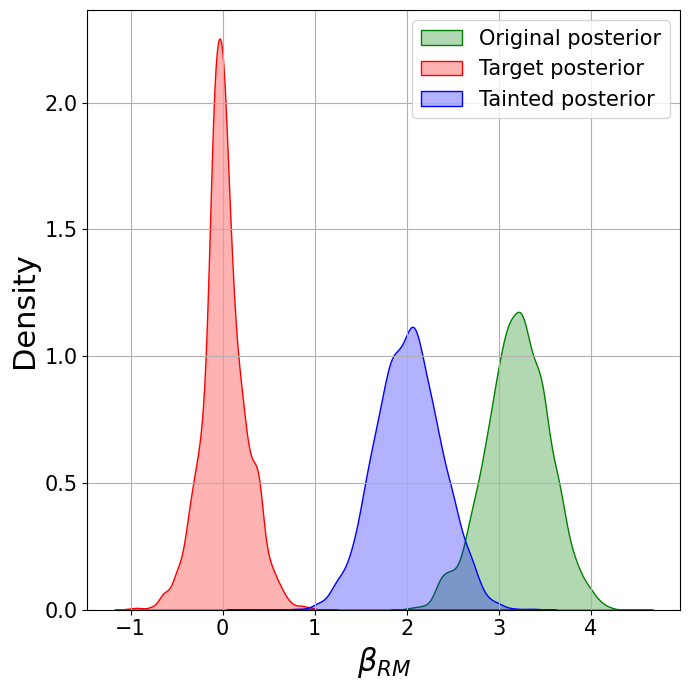}
    \subcaption{$B=10$}\par
    \includegraphics[width=\linewidth]{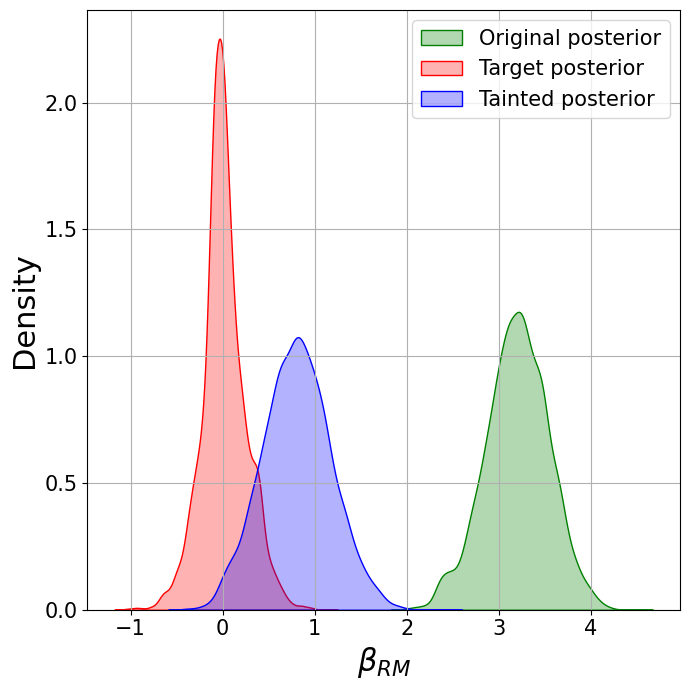}
    \subcaption{$B=30$}\par
    \includegraphics[width=\linewidth]{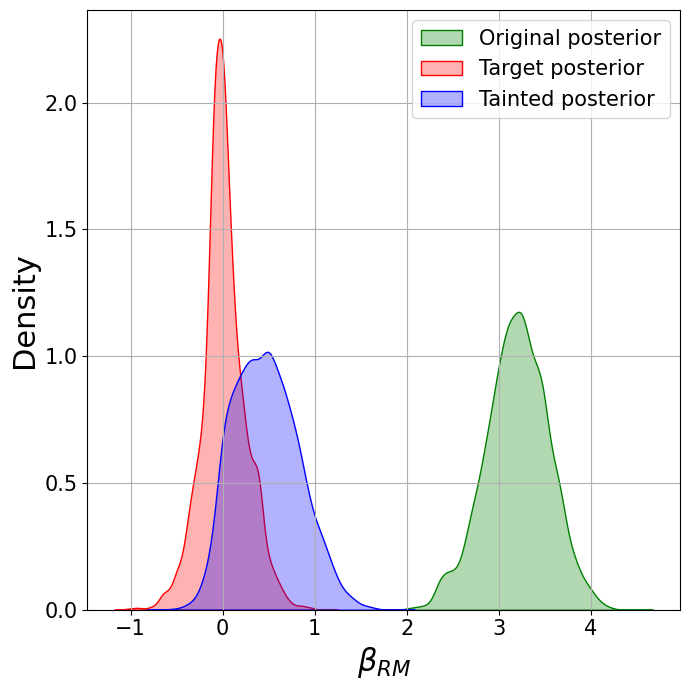}
    \subcaption{$B=50$}\par
\end{multicols}
\caption{Attacks of increasing intensities against the Horseshoe model on Boston dataset with 2O-ISCD heuristic.}x
\label{fig:Boston_Horseshoe_varying_B}
\end{figure*}

Figure \ref{fig:Horseshoe_allmargposteriors} shows all marginal posteriors induced by the 2O-ISCD attack with $B=30$, comparing them to the original and adversarial posteriors. These visualizations highlight the nuanced approach taken by the attacker to perturb the posterior, demonstrating how the attack successfully steers the tainted posterior toward the desired target. Notably, these plots reinforce the conclusion that our attack enables precise poisoning; while the marginal posterior of $\beta_{RM}$ is effectively shifted, the marginal posteriors for most of the other parameters remain largely unaffected, indicating the attacker’s intent to leave other parameters unchanged.

\begin{figure}[h!]
    \centering
    \begin{minipage}{0.15\textwidth}
        \centering
        \includegraphics[width=\textwidth]{figures/Boston_lin_reg/RM/marginals/beta_AGE.png}
        \caption*{$\beta_{\text{AGE}}$}
    \end{minipage}
    \begin{minipage}{0.15\textwidth}
        \centering
        \includegraphics[width=\textwidth]{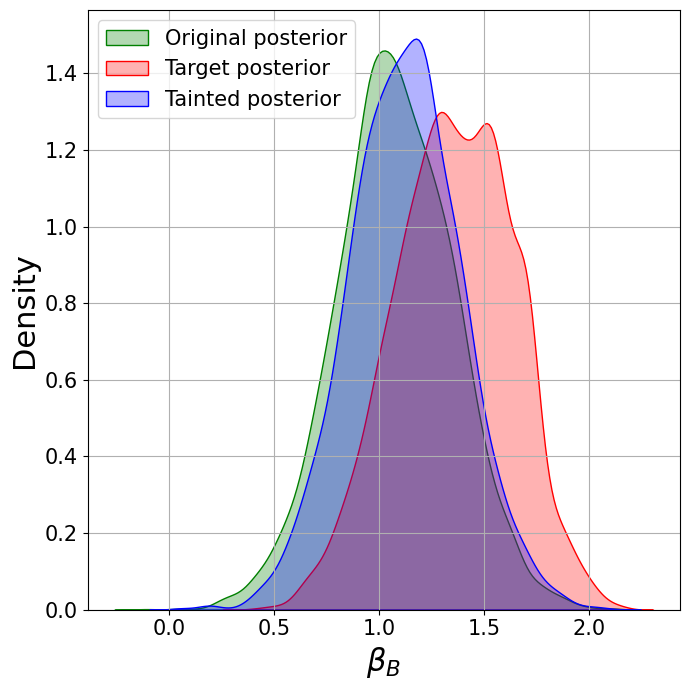}
        \caption*{$\beta_{\text{B}}$}
    \end{minipage}
    \begin{minipage}{0.15\textwidth}
        \centering
        \includegraphics[width=\textwidth]{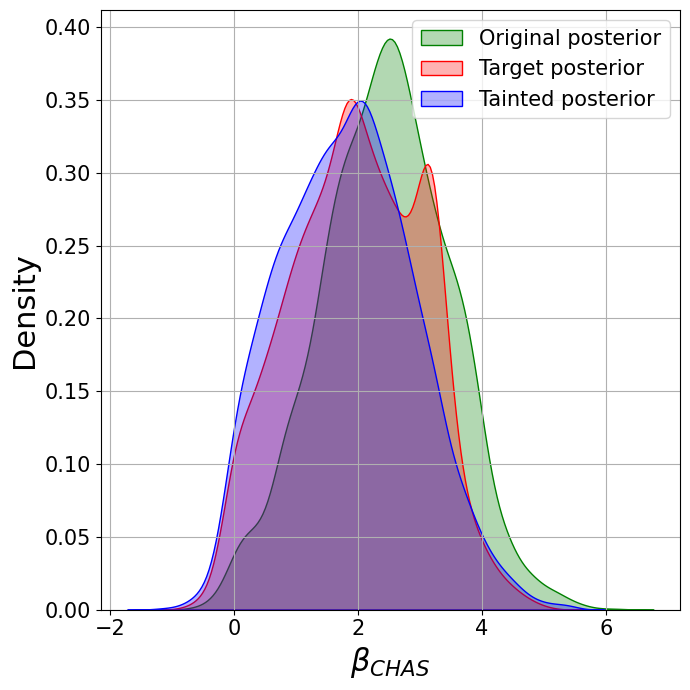}
        \caption*{$\beta_{\text{CHAS}}$}
    \end{minipage}
    \begin{minipage}{0.15\textwidth}
        \centering
        \includegraphics[width=\textwidth]{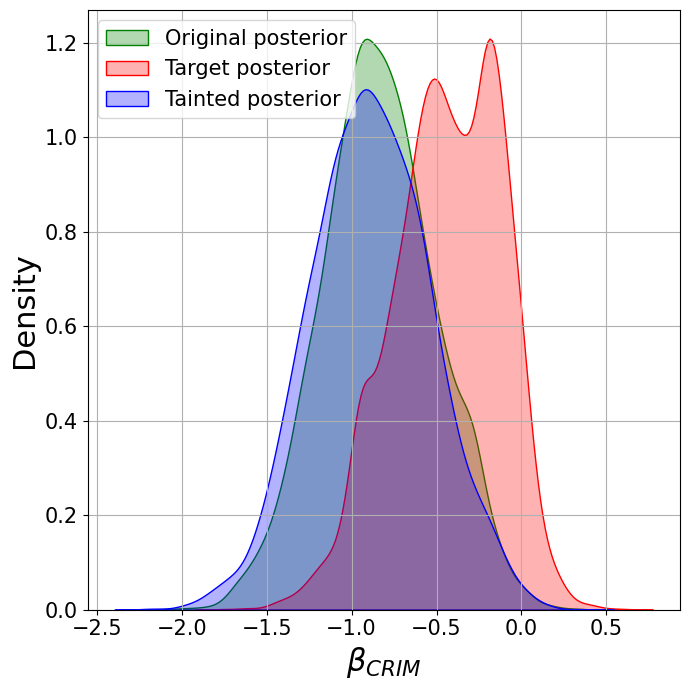}
        \caption*{$\beta_{\text{CRIM}}$}
    \end{minipage}
    \begin{minipage}{0.15\textwidth}
        \centering
        \includegraphics[width=\textwidth]{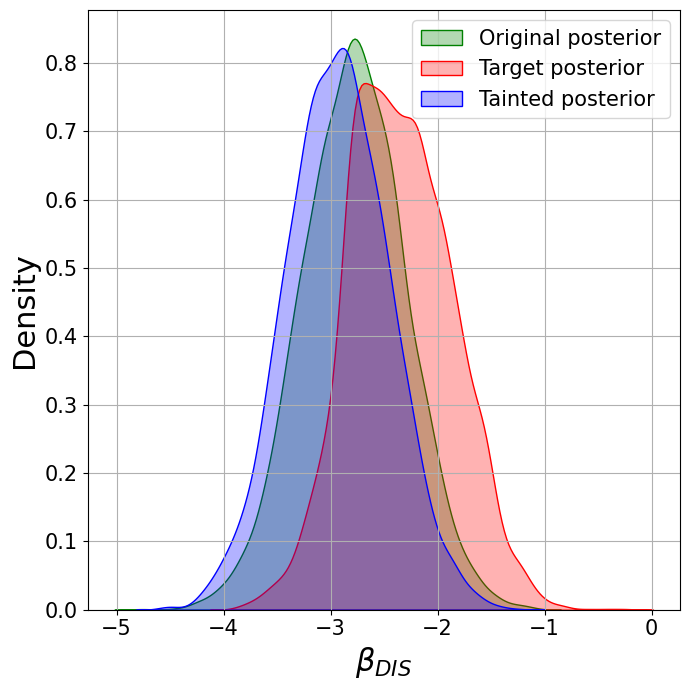}
        \caption*{$\beta_{\text{DIS}}$}
    \end{minipage}
    \begin{minipage}{0.15\textwidth}
        \centering
        \includegraphics[width=\textwidth]{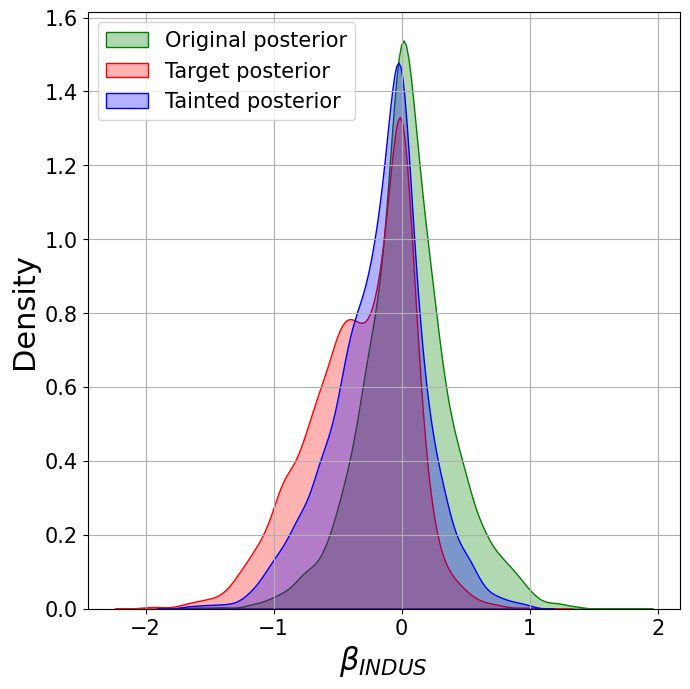}
        \caption*{$\beta_{\text{INDUS}}$}
    \end{minipage}
    
    \vspace{0.3cm}
    
    \begin{minipage}{0.15\textwidth}
        \centering
        \includegraphics[width=\textwidth]{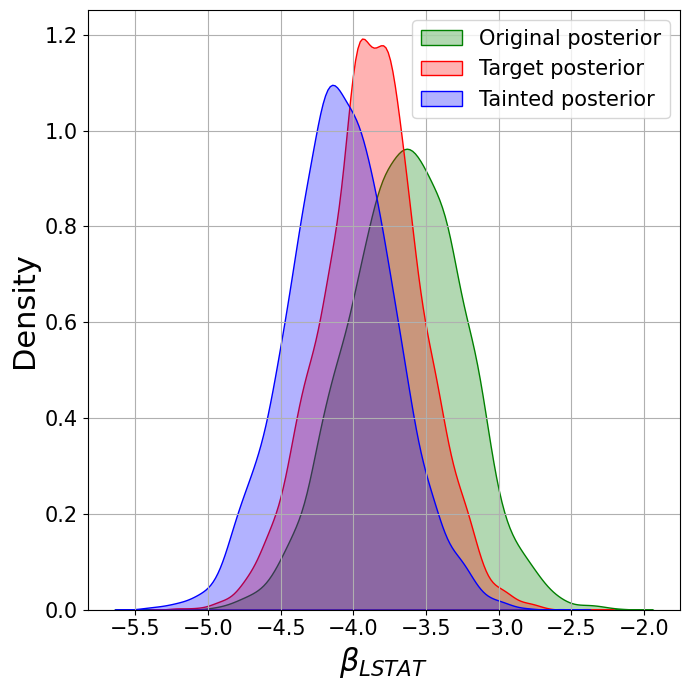}
        \caption*{$\beta_{\text{LSTAT}}$}
    \end{minipage}
    \begin{minipage}{0.15\textwidth}
        \centering
        \includegraphics[width=\textwidth]{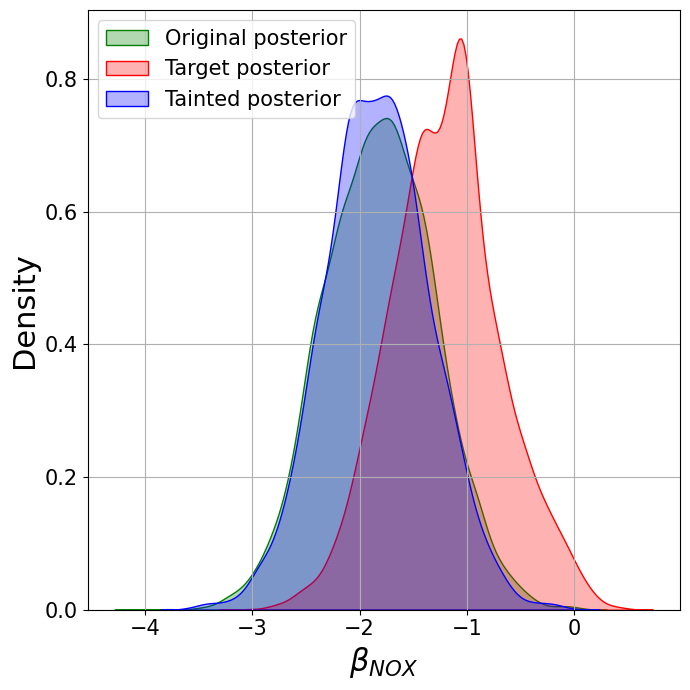}
        \caption*{$\beta_{\text{NOX}}$}
    \end{minipage}
    \begin{minipage}{0.15\textwidth}
        \centering
        \includegraphics[width=\textwidth]{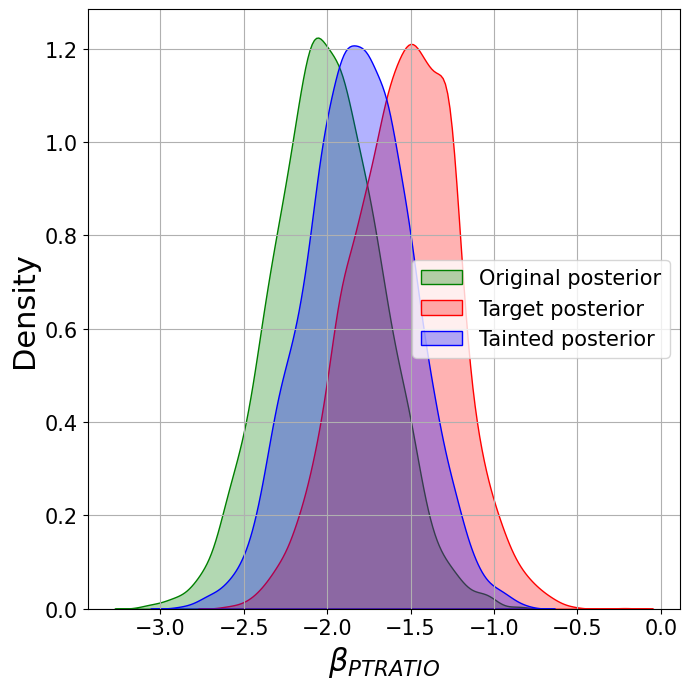}
        \caption*{$\beta_{\text{PTRATIO}}$}
    \end{minipage}
    \begin{minipage}{0.15\textwidth}
        \centering
        \includegraphics[width=\textwidth]{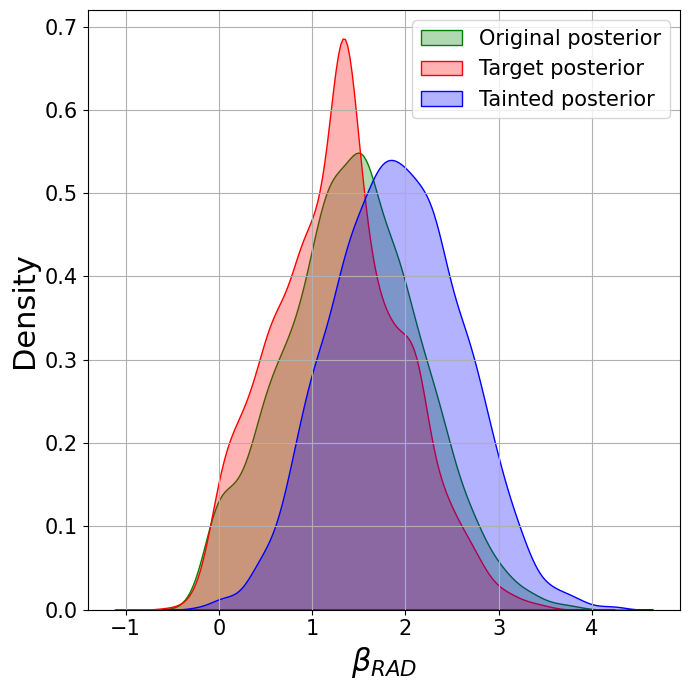}
        \caption*{$\beta_{\text{RAD}}$}
    \end{minipage}
    \begin{minipage}{0.15\textwidth}
        \centering
        \includegraphics[width=\textwidth]{figures/Boston_lin_reg/RM/marginals/beta_RM.png}
        \caption*{$\beta_{\text{RM}}$}
    \end{minipage}
    \begin{minipage}{0.15\textwidth}
        \centering
        \includegraphics[width=\textwidth]{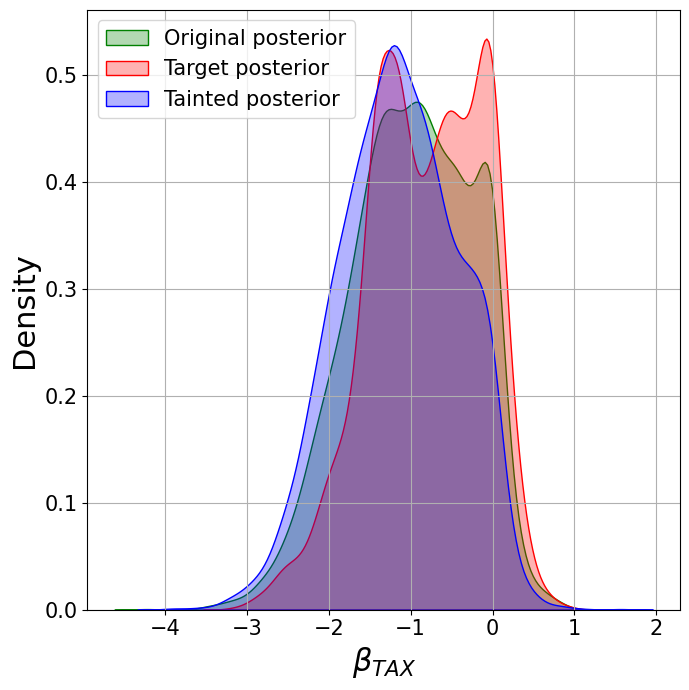}
        \caption*{$\beta_{\text{TAX}}$}
    \end{minipage}
    
    \vspace{0.3cm}
    
    \begin{minipage}{0.15\textwidth}
        \centering
        \includegraphics[width=\textwidth]{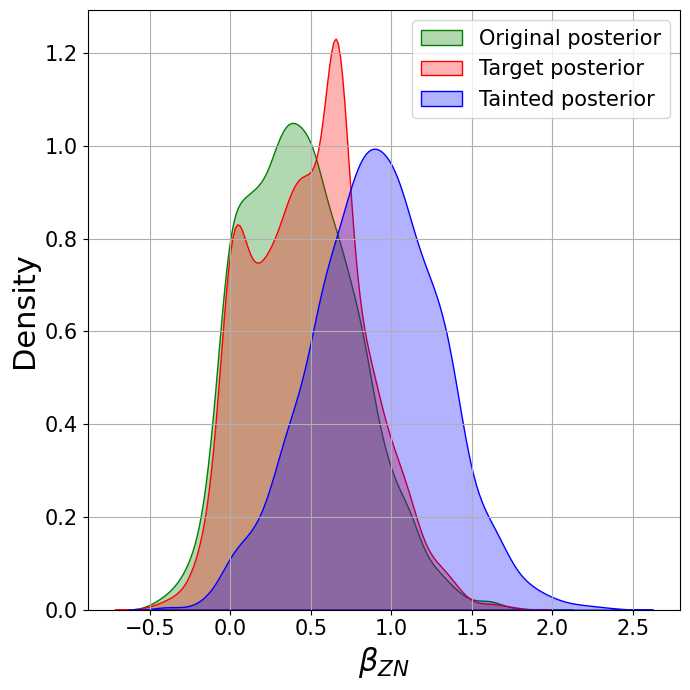}
        \caption*{$\beta_{\text{ZN}}$}
    \end{minipage}
    \begin{minipage}{0.15\textwidth}
        \centering
        \includegraphics[width=\textwidth]{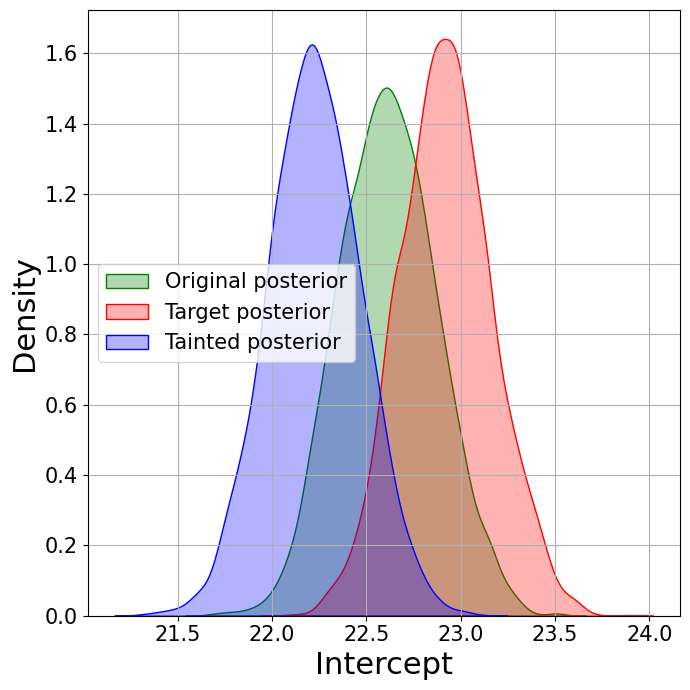}
        \caption*{Intercept}
    \end{minipage}
    \begin{minipage}{0.15\textwidth}
        \centering
        \includegraphics[width=\textwidth]{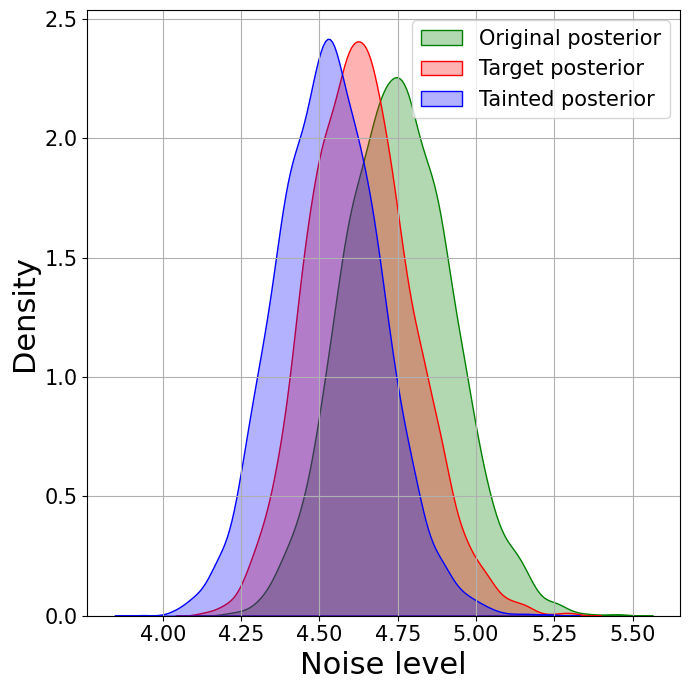}
        \caption*{$\sigma$}
    \end{minipage}
    \begin{minipage}{0.15\textwidth}
        \centering
        \includegraphics[width=\textwidth]{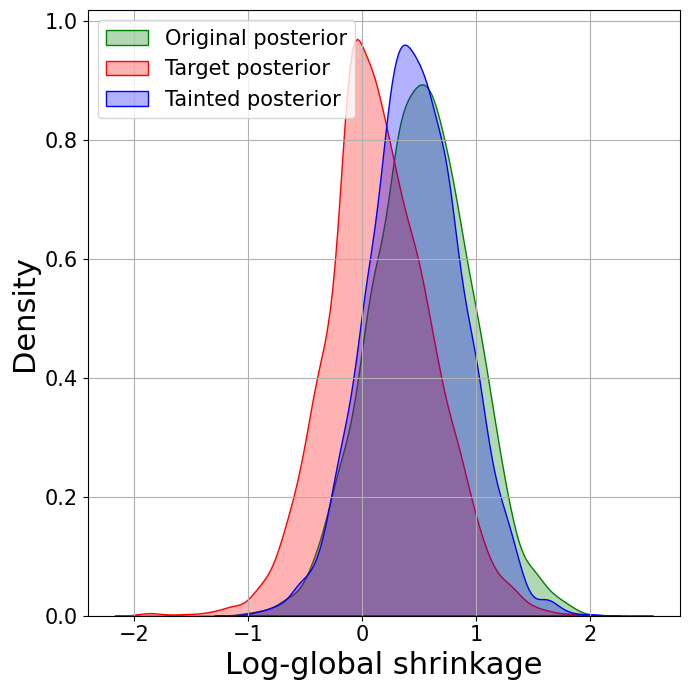}
        \caption*{\centering Log-global shrinkage}
    \end{minipage}
    \begin{minipage}{0.15\textwidth}
        \centering
        \includegraphics[width=\textwidth]{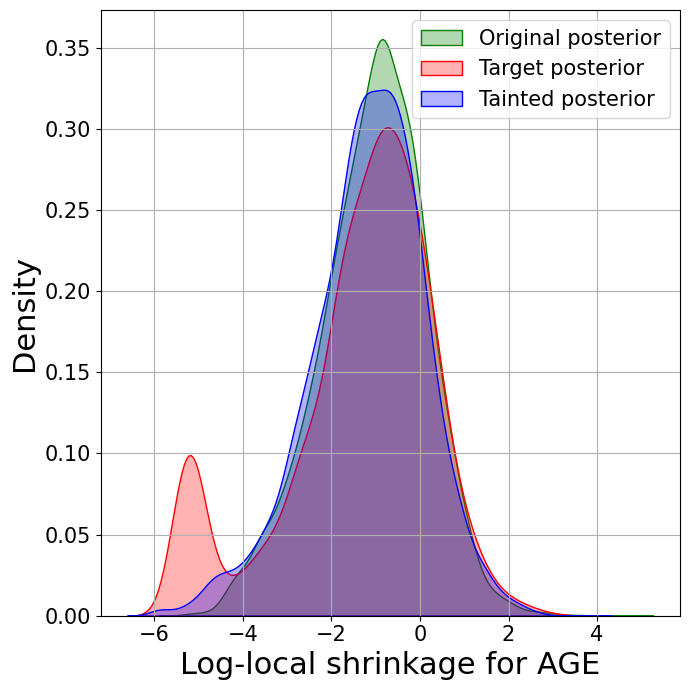}
        \caption*{\centering Log-local shrinkage for AGE}
    \end{minipage}
    \begin{minipage}{0.15\textwidth}
        \centering
        \includegraphics[width=\textwidth]{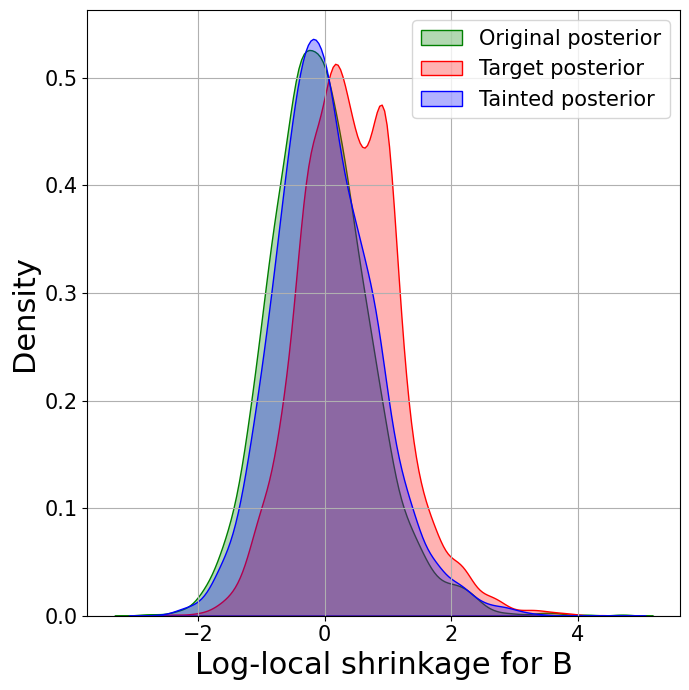}
        \caption*{\centering Local shrinkage for B}
    \end{minipage}
    
    \vspace{0.3cm}
    
    \begin{minipage}{0.15\textwidth}
        \centering
        \includegraphics[width=\textwidth]{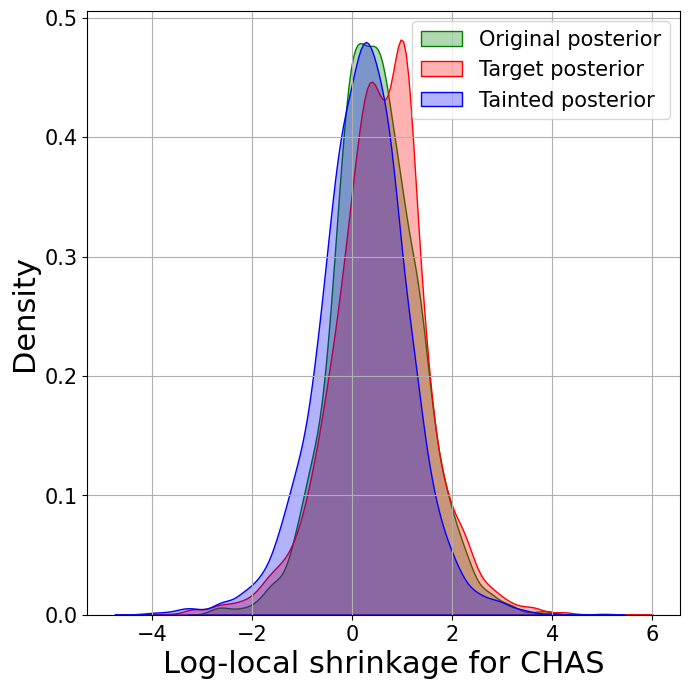}
        \caption*{\centering Log-local shrinkage for CHAS}
    \end{minipage}
    \begin{minipage}{0.15\textwidth}
        \centering
        \includegraphics[width=\textwidth]{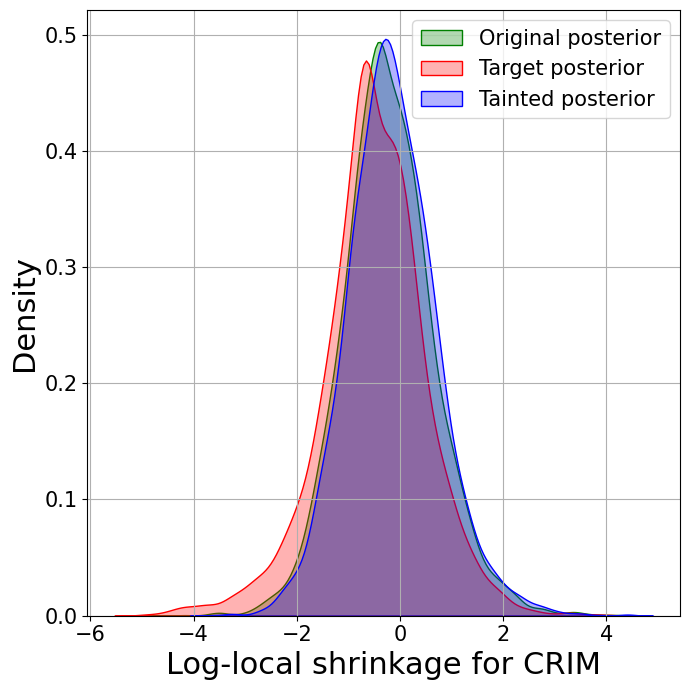}
        \caption*{\centering Log-local shrinkage for CRIM}
    \end{minipage}
    \begin{minipage}{0.15\textwidth}
        \centering
        \includegraphics[width=\textwidth]{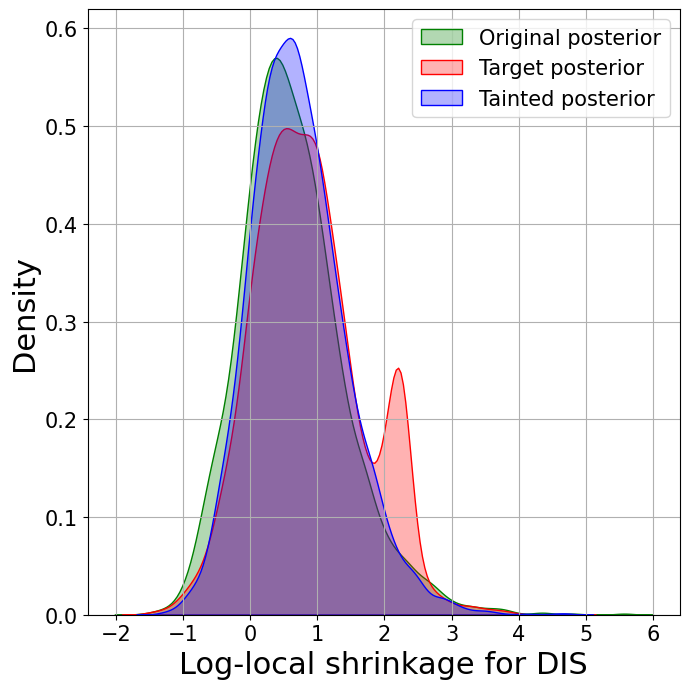}
        \caption*{\centering Log-local shrinkage for DIS}
    \end{minipage}
    \begin{minipage}{0.15\textwidth}
        \centering
        \includegraphics[width=\textwidth]{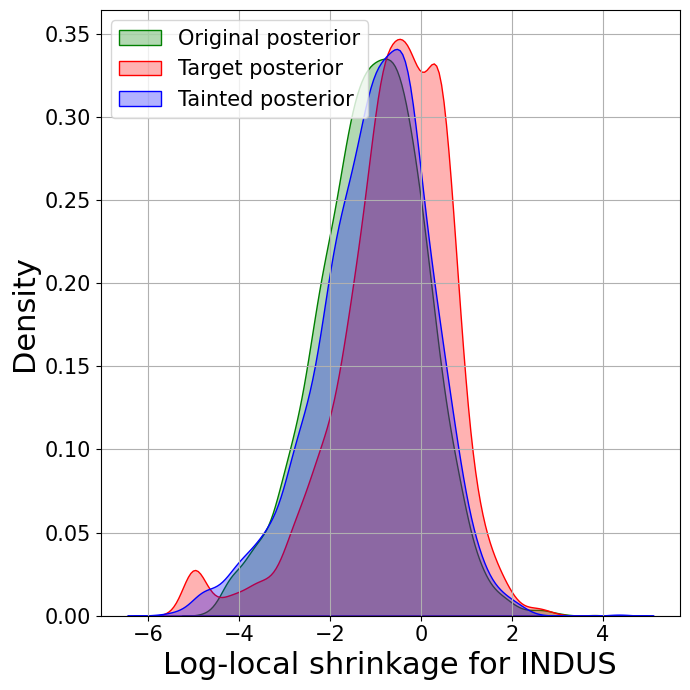}
        \caption*{\centering Log-local shrinkage for INDUS}
    \end{minipage}
    \begin{minipage}{0.15\textwidth}
        \centering
        \includegraphics[width=\textwidth]{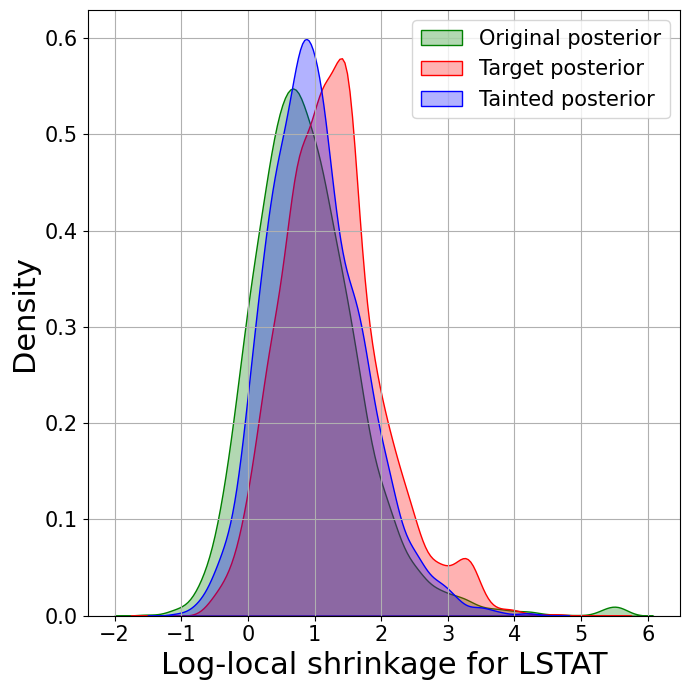}
        \caption*{\centering Log-local shrinkage for LSTAT}
    \end{minipage}
    \begin{minipage}{0.15\textwidth}
        \centering
        \includegraphics[width=\textwidth]{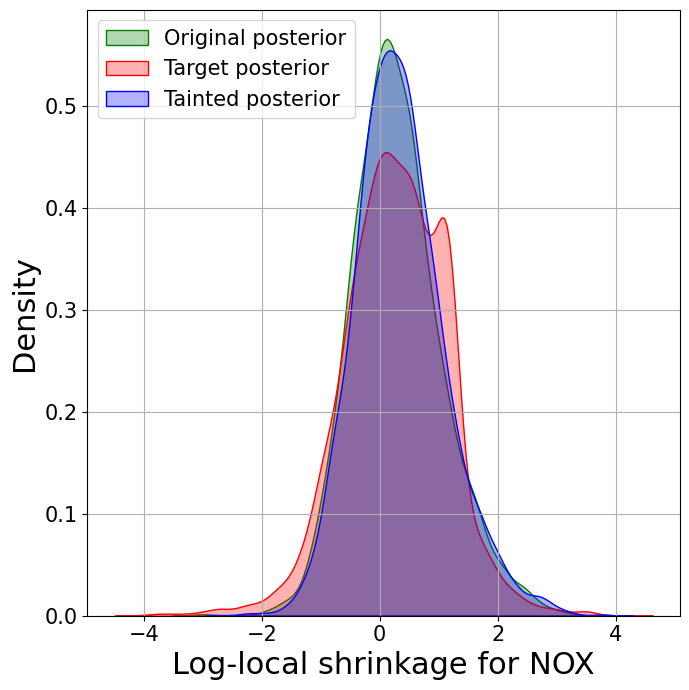}
        \caption*{\centering Log-local shrinkage for NOX}
    \end{minipage}
    
    \vspace{0.3cm}
    
    \begin{minipage}{0.15\textwidth}
        \centering
        \includegraphics[width=\textwidth]{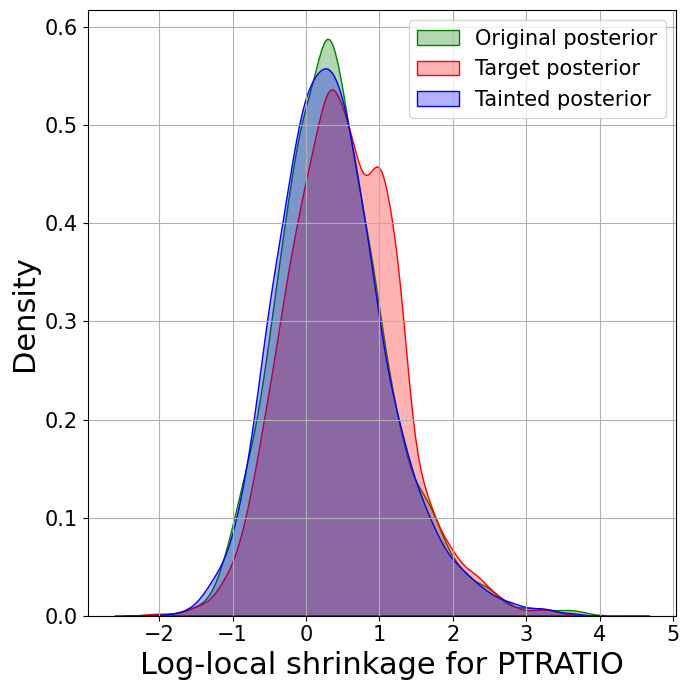}
        \caption*{\centering Log-local shrinkage for PTRATIO}
    \end{minipage}
    \begin{minipage}{0.15\textwidth}
        \centering
        \includegraphics[width=\textwidth]{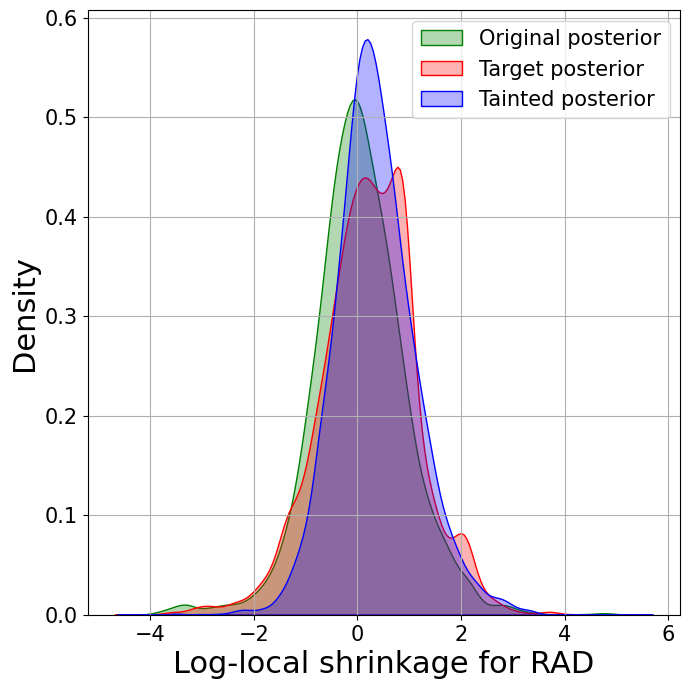}
        \caption*{\centering Log-local shrinkage for RAD}
    \end{minipage}
    \begin{minipage}{0.15\textwidth}
        \centering
        \includegraphics[width=\textwidth]{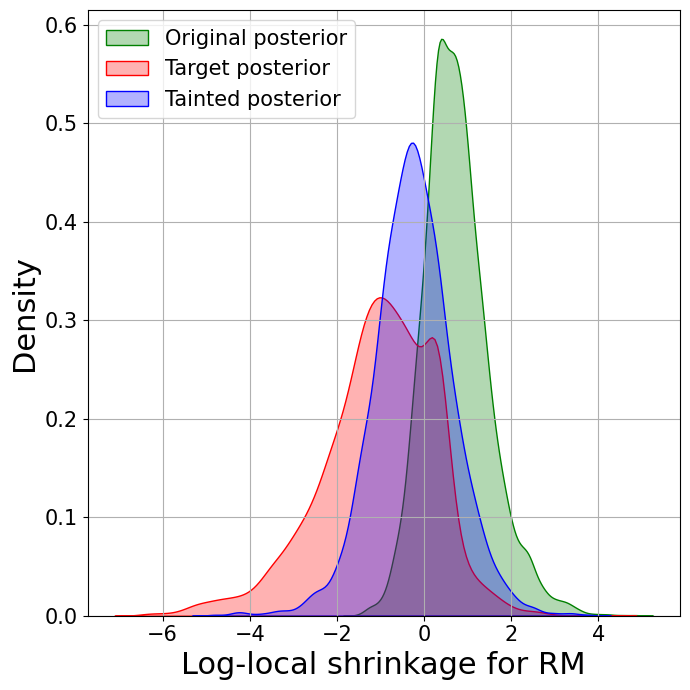}
        \caption*{\centering Log-local shrinkage for RM}
    \end{minipage}
    \begin{minipage}{0.15\textwidth}
        \centering
        \includegraphics[width=\textwidth]{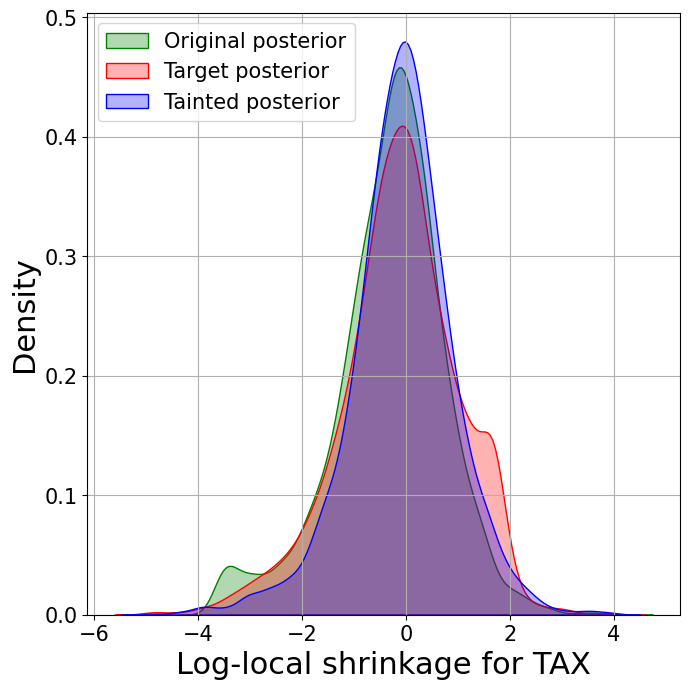}
        \caption*{\centering Log-local shrinkage for TAX}
    \end{minipage}
    \begin{minipage}{0.15\textwidth}
        \centering
        \includegraphics[width=\textwidth]{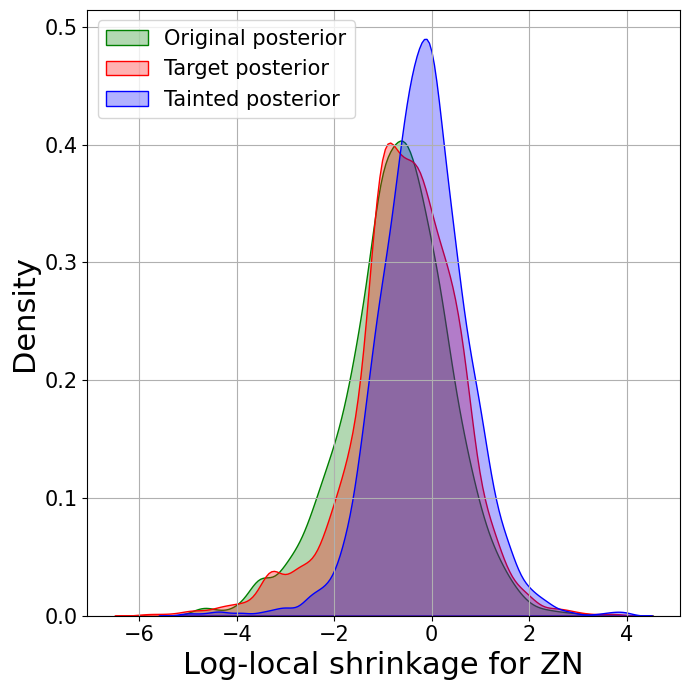}
        \caption*{\centering Log-local shrinkage for ZN}
    \end{minipage}
    
    \caption{Original, target and induced marginal posterior distributions for Horseshoe regression parameters under poisoning attack with $B=30$ calculated via 2O-ISCD heuristic.} \label{fig:Horseshoe_allmargposteriors}
    
\end{figure}

We also computed attacks against a Bayesian linear regression model with a NIG prior. Figure \ref{fig:NIG_vs_Horseshoe} compares the removed and duplicated data points, along with the original, target, and induced marginal posterior for $\beta_{RM}$, for attacks found by the ISCD heuristic with $B=30$ under both the NIG prior and the horseshoe prior. Interestingly, in both cases, similar data points were replicated and removed. This suggests that when ample data is available, the attacker may not need to know the true prior. By relying solely on the likelihood, the attacker can craft effective attacks using an assumed prior, extending the method to a gray-box setting.

\begin{figure*}[h]
\begin{multicols}{2}
    \centering
    \includegraphics[width=0.7\linewidth]{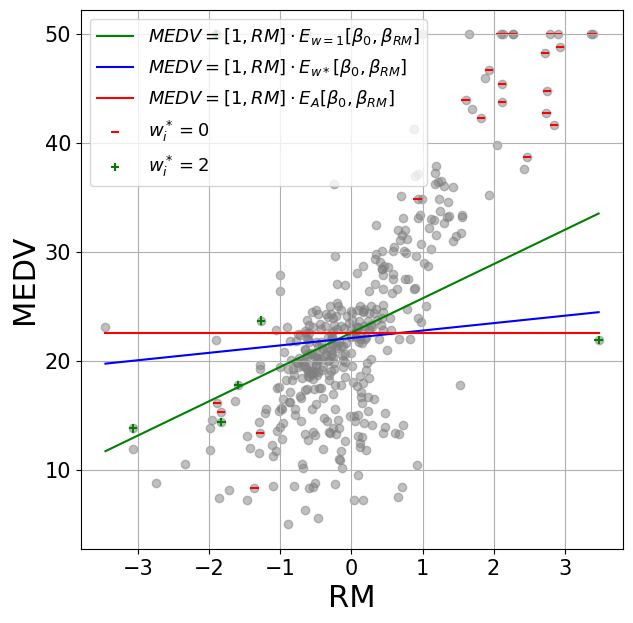}
    \subcaption{Data modifications for NIG}\par
    \includegraphics[width=0.7\linewidth]{figures/Boston_lin_reg/RM/datapoints_2O-ISCD_B=30_HS.png}
    \subcaption{Data modifications for Horseshoe}\par
\end{multicols}
\begin{multicols}{2}
    \centering
    \includegraphics[width=0.7\linewidth]{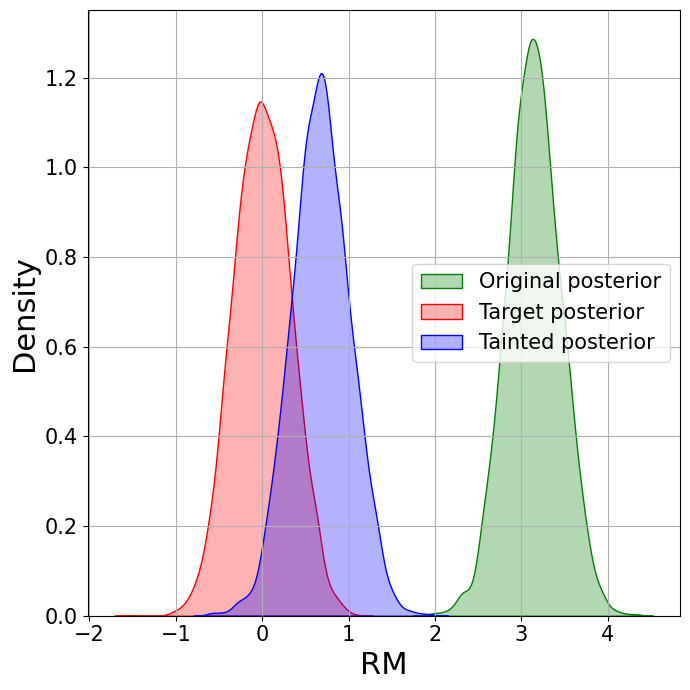}
    \subcaption{$\beta_{RM}$ posterior for NIG}\par
    \includegraphics[width=0.7\linewidth]{figures/Boston_lin_reg/RM/marginals/beta_RM.png}
    \subcaption{$\beta_{RM}$ posterior for Horseshoe}\par
\end{multicols}
\caption{Comparison between NIG and Horseshoe priors attacked with 2O-ISCD and $B=30$.}
\label{fig:NIG_vs_Horseshoe}
\end{figure*}
%



\clearpage
\subsubsection{Effects of different priors on attack performance}

In this section, we investigate the effects of different priors on various performance metrics. Using the same setting as in Section 6.2 (linear regression on the Boston housing dataset, where the poisoning goal is to shift the posterior of the parameter corresponding to the covariate ``number of rooms" toward 0), we computed attacks under different priors:

\begin{itemize}
    \item The horseshoe prior (used in Section 6.2).
    \item The ``non-informative" normal-inverse gamma prior, with a high prior variance (used in Section 6.2).
    \item An informative NIG prior, with a prior mean of 3.0 and prior variance 0.1.
\end{itemize}

For these experiments, we computed attacks using the 2O-ISCD heuristic, limiting the number of replications to two and allowing up to 30 data manipulations. We evaluated the effectiveness of the attacks using two metrics: (1) the KL divergence between the adversarial posterior and the induced posterior, and (2) the induced posterior expectation of the parameter of interest. Each attack (computed with a specific prior) was evaluated against three different models: one using the same prior as the attack (white-box setting) and two others using different priors (gray-box setting). Table \ref{tab:kl_divergence} presents the KL divergence between the adversarial and induced posteriors for each attack scenario:

\begin{table}[h]
    \centering
    \begin{tabular}{lccc}
        \toprule
        & Horseshoe & NIG & NIG Informative \\
        \midrule
        No attack & 96.05 & 93.15 & 95.06 \\
        Attack with Horseshoe prior & 22.16 $\pm$ 0.87 & 16.82 $\pm$ 0.63 & 17.63 $\pm$ 0.64 \\
        Attack with NIG prior & 23.27 $\pm$ 1.14 & 15.66 $\pm$ 0.18 & 16.32 $\pm$ 0.22 \\
        Attack with informative NIG prior & 24.03 $\pm$ 0.80 & 15.59 $\pm$ 0.29 & 16.16 $\pm$ 0.26 \\
        \bottomrule
    \end{tabular}
    \caption{KL divergence between adversarial and induced posteriors.}
    \label{tab:kl_divergence}
\end{table}

As expected, the KL divergence is lowest in the white-box setting (when the prior used to compute the attack matches the prior used to evaluate it). However, there is noticeable attack transferability in the gray-box setting: even with a misspecified prior, the attacks significantly reduce the KL divergence. Table \ref{tab:posterior_mean} shows the induced posterior mean under each attack scenario:

\begin{table}[h]
    \centering
    \begin{tabular}{lccc}
        \toprule
        & Horseshoe & NIG & NIG Informative \\
        \midrule
        No attack & 3.21 & 3.15 & 3.14 \\
        Attack with Horseshoe prior & 0.74 $\pm$ 0.07 & 0.75 $\pm$ 0.07 & 0.88 $\pm$ 0.06 \\
        Attack with NIG prior & 0.57 $\pm$ 0.06 & 0.61 $\pm$ 0.06 & 0.74 $\pm$ 0.05 \\
        Attack with informative NIG prior & 0.51 $\pm$ 0.03 & 0.54 $\pm$ 0.03 & 0.67 $\pm$ 0.03 \\
        \bottomrule
    \end{tabular}
    \caption{Induced posterior mean under different attack scenarios.}
    \label{tab:posterior_mean}
\end{table}

Two key observations are in order:

\begin{enumerate}
    \item \textbf{Attack transferability}: Regardless of the prior used to create the attack, the posterior mean of the parameter is significantly shifted toward zero. This suggests a level of generalization of the proposed attacks beyond the white-box setting.
    \item \textbf{Robustness of informative priors}: While somewhat expected, the informative prior demonstrates greater robustness to data deletion and replication attacks. This finding contributes to the ongoing debate in the Bayesian community about the utility of informative priors.
\end{enumerate}

To validate the insight in observation 2, we repeated the experiment with an even more informative prior (setting the variance to 0.01). The results for the induced posterior mean are shown in Table \ref{tab:posterior_mean_stronger}, further highlighting the robustness of more informative priors:

\begin{table}[ht]
    \centering
    \begin{tabular}{lccc}
        \toprule
        & Horseshoe & NIG & NIG Informative \\
        \midrule
        No attack & 3.21 & 3.15 & 3.10 \\
        Attack with Horseshoe prior & 0.74 $\pm$ 0.07 & 0.75 $\pm$ 0.07 & 1.59 $\pm$ 0.04 \\
        Attack with NIG prior & 0.57 $\pm$ 0.06 & 0.61 $\pm$ 0.06 & 1.49 $\pm$ 0.04 \\
        Attack with informative NIG prior & 0.43 $\pm$ 0.05 & 0.46 $\pm$ 0.05 & 1.38 $\pm$ 0.03 \\
        \bottomrule
    \end{tabular}
    \caption{Induced posterior mean under stronger informative priors.}
    \label{tab:posterior_mean_stronger}
\end{table}

\newpage

\subsection{Mexico microcredit dataset}


In this section, we include further experiments on the Mexico microcredit RCT example from Section 6.3.

Recall that, in the clean data scenario, the posterior mean for the slope parameter (ATE) $\beta_1$ is $-4.71$ with a standard deviation of $6.02$. The adversarial posterior utilized is defined using a modified dataset $\{(x_i, y_i + \lambda \mathds{1}_{\{x_i=1\}})\}_{i=1}^n$.

Using the 2O-ISCD attack with $\lambda = 100$, $B = 1$, and $L = 1$, the posterior is modified as shown in Figure \ref{fig:microcredit_B=1_lambda=100}. This attack deleted a single data point, resulting in a posterior for $\beta_1$ with a mean of $-3.67$ (slightly higher than the original) and a standard deviation of $5.62$. The most significant change was a slight decrease in the noise level $\sigma$.

\begin{figure*}[h]
\begin{multicols}{4}
    \includegraphics[width=\linewidth]{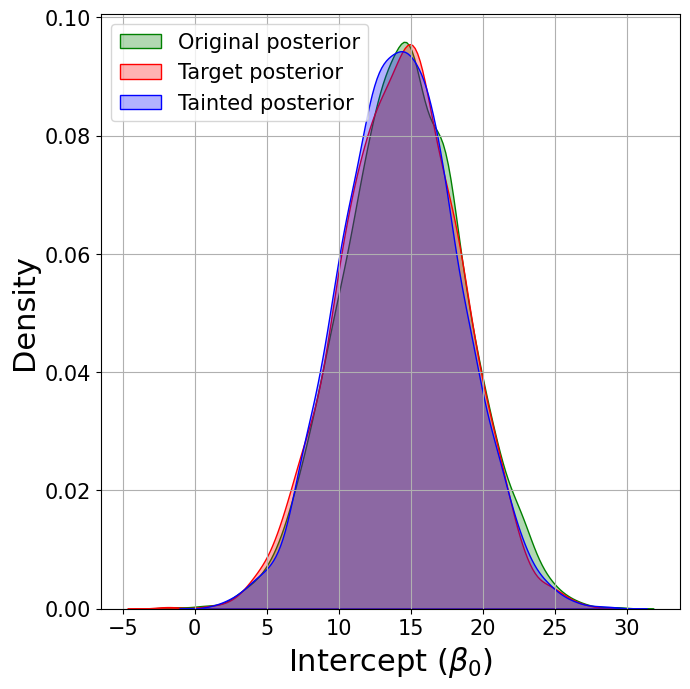}
    \subcaption{Posterior for $\beta_0$}\par
    \includegraphics[width=\linewidth]{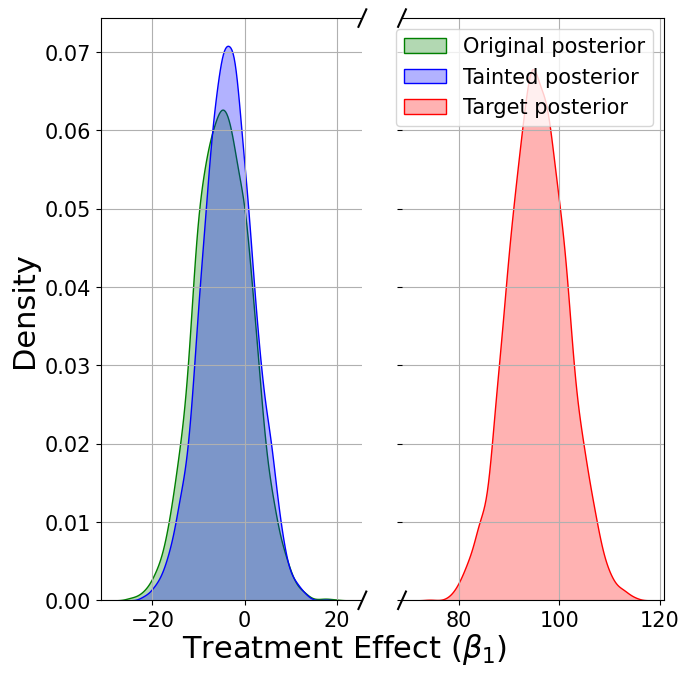}
    \subcaption{Posterior for $\beta_1$}\par
    \includegraphics[width=\linewidth]{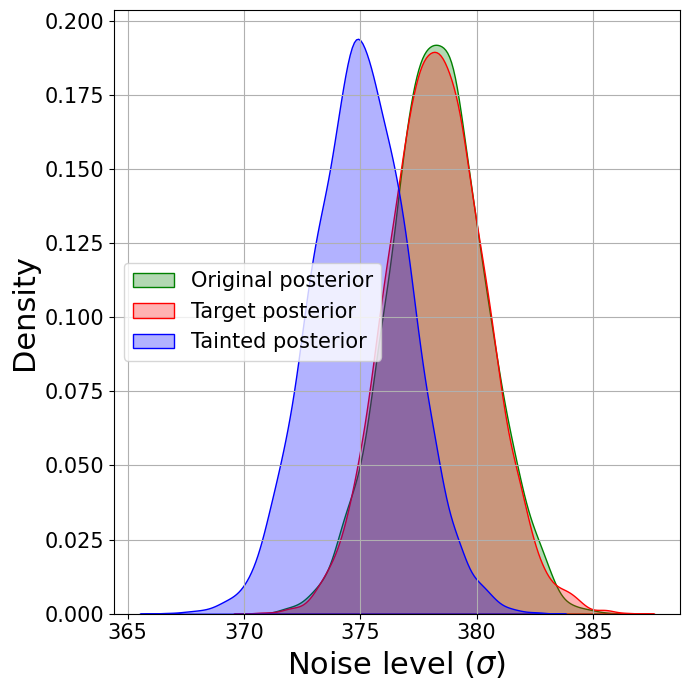}
    \subcaption{Posterior for $\sigma$}\par
    \includegraphics[width=\linewidth]{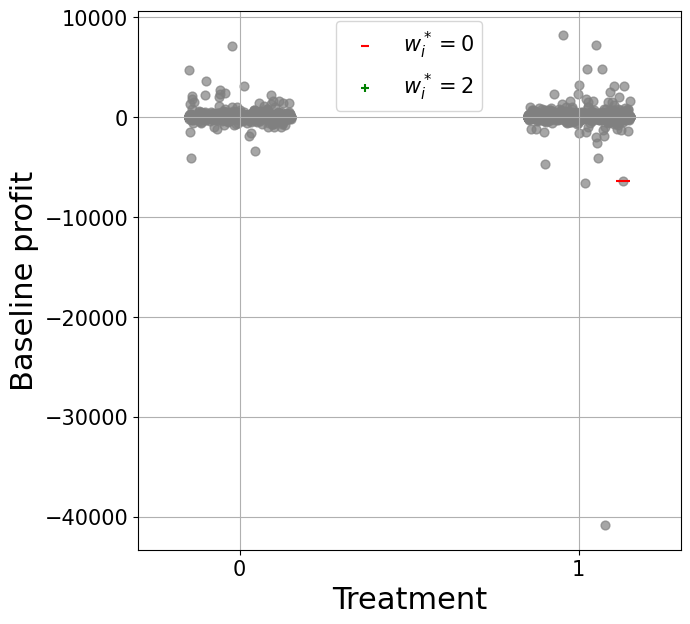}
    \subcaption{Deleted sample}\par
\end{multicols}
\caption{Attack on the microcredit linear regression for $B=L=1$, $\lambda=100$}
\label{fig:microcredit_B=1_lambda=100}
\end{figure*}

It is important to note that there is an extreme outlier in the treated group ($x_i=1$) that, if solely removed, significantly shifts the posterior for the treatment effect toward positive values. One might question why this point was not chosen by our attack. The answer is straightforward: whereas removing the outlier indeed causes a notable shift in the marginal posterior for the treatment effect in the correct direction, it also  substantially impacts the marginal posterior for $\sigma$, ultimately resulting in a higher KL divergence for the joint posteriors. However, when we increase $\lambda$ to 10,000, keeping the other parameters unchanged, the posterior is modified as shown in Figure \ref{fig:microcredit_B=1_lambda=10000}. Therein, the outlier is deleted, causing a significant impact on the marginal posterior for $\sigma$.  However, since the target marginal posterior for the treatment effect is now strongly positive, deleting the outlier becomes worthwhile in terms of KL divergence reduction, bringing the marginal posterior for $\beta_1$ closer to the adversarial target, despite the larger impact on $\sigma$.

\begin{figure*}[h]
\begin{multicols}{4}
    \includegraphics[width=\linewidth]{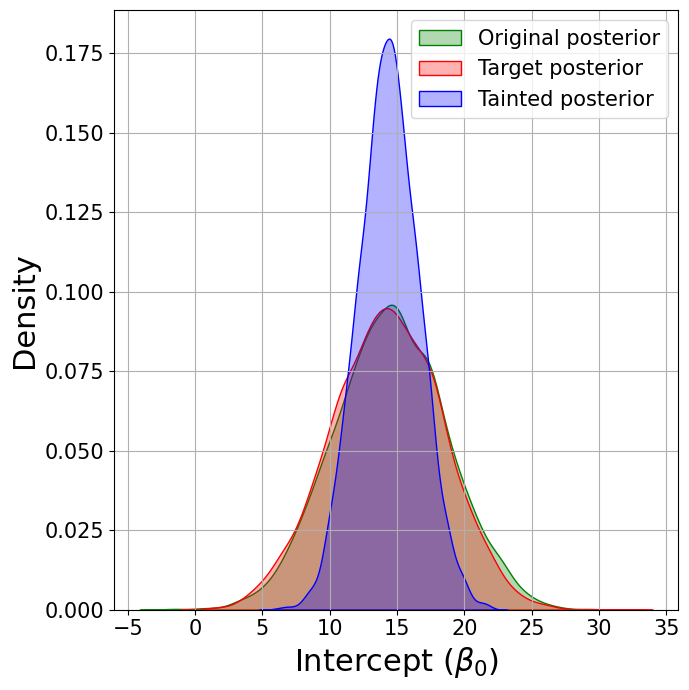}
    \subcaption{Posterior for $\beta_0$}\par
    \includegraphics[width=\linewidth]{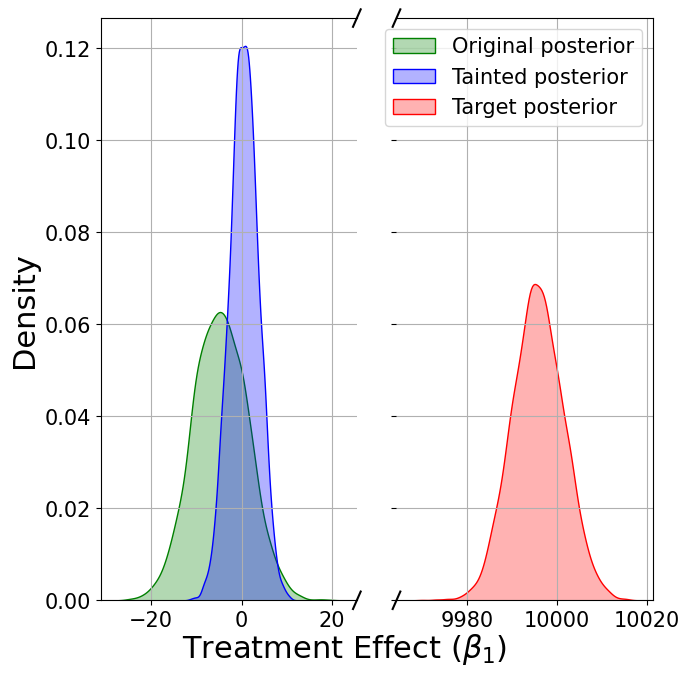}
    \subcaption{Posterior for $\beta_1$}\par
    \includegraphics[width=\linewidth]{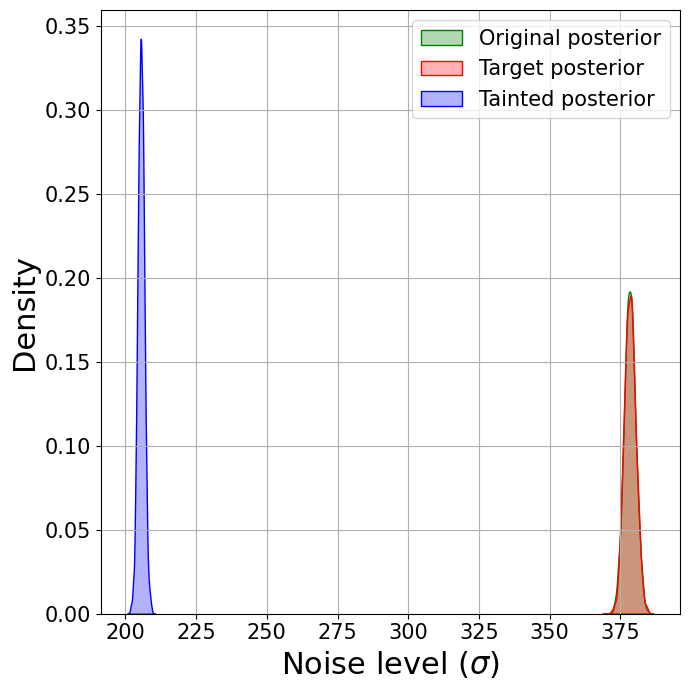}
    \subcaption{Posterior for $\sigma$}\par
    \includegraphics[width=\linewidth]{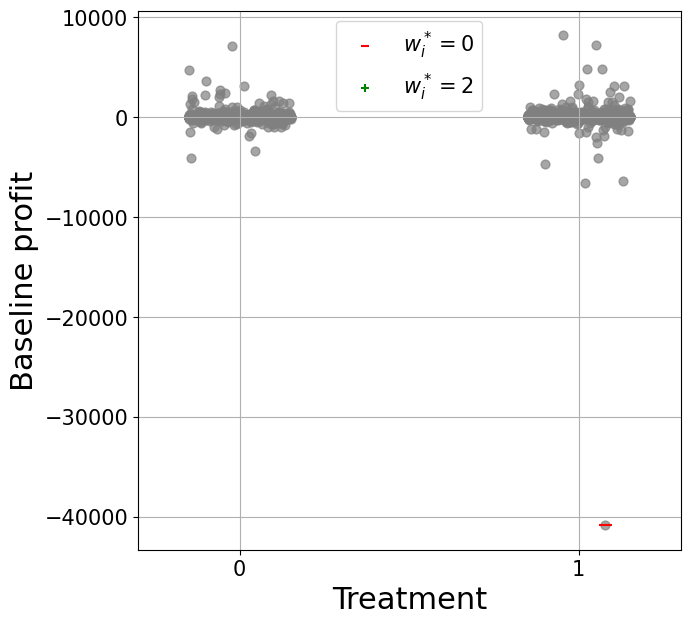}
    \subcaption{Deleted sample}\par
\end{multicols}
\caption{Attack on the microcredit linear regression for $B=L=1$, $\lambda=10000$}
\label{fig:microcredit_B=1_lambda=10000}
\end{figure*}

\newpage

Finally, Figure \ref{sfig:microcredit_B=20_lambda=10000} shows the obtained posteriors for Section 6.3 of the main text (i.e., an attack parameterized with $B = 20$, $L = 2$  and $\lambda = 10,000$). The new mean and standard deviation for $\beta_1$ are $6.28$ and $3.22$ respectively, with a 95\% credible interval of $[0.02,\ 12.43]$, that no longer includes zero. Therefore, if the defender’s decision about the impact of microcredit is based on whether the credible interval contains zero, it can be observed that, by manipulating only 0.12 \% of the data, an attacker can induce an incorrect conclusion.

\begin{figure*}[h]
\begin{multicols}{4}
    \includegraphics[width=\linewidth]{figures/mexico_microcredit/B=20_2O-ISCD/2O-ISCD_B=20_L=2_beta_0.png}
    \subcaption{Posterior for $\beta_0$}\par
    \includegraphics[width=\linewidth]{figures/mexico_microcredit/B=20_2O-ISCD/2O-ISCD_B=20_L=2_beta_1_break_intervals.png}
    \subcaption{Posterior for $\beta_1$}\par
    \includegraphics[width=\linewidth]{figures/mexico_microcredit/B=20_2O-ISCD/2O-ISCD_B=20_L=2_sigma.png}
    \subcaption{Posterior for $\sigma$}\par
    \includegraphics[width=\linewidth]{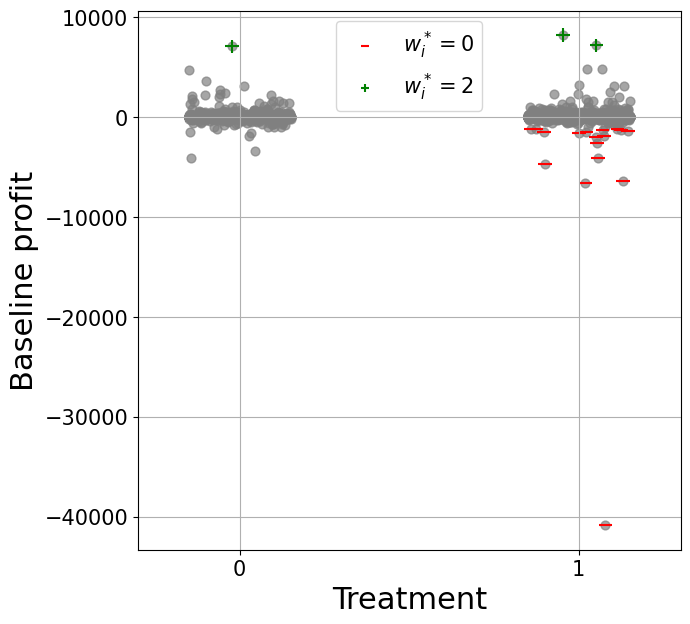}
    \subcaption{Modified samples}\par
\end{multicols}
\caption{Attack on the microcredit linear regression for $B=20$, $L=2$, $\lambda=10000$}
\label{sfig:microcredit_B=20_lambda=10000}
\end{figure*}

In summary, this example highlights the surgical precision of our attacks and their adaptability to different situations by selecting the appropriate adversarial target.

\clearpage

\subsection{Attacks to Bayesian logistic regression for spam classification}

In this section, we present attacks targeting a Bayesian logistic regression model for spam classification. We use the SMS spam classification dataset \citep{sms_spam_collection_228}, sub-sampling it to create a balanced dataset with 653 texts labeled as \textit{Spam} ($Y=1$) and 653 texts labeled as \textit{Ham} ($Y=0$). The dataset is split into a training set of 979 SMSs and a test set of 327 SMSs.
The defender uses a linear logistic regression model, where the features are binary variables indicating the presence or absence of specific words in each SMS. The set of words considered is selected from the training set (before any poisoning) by choosing words present in at least 50 SMSs, resulting in $d=48$ covariates.
To evaluate the model's performance, we estimate, via MCMC, the posterior predictive probability $\pi(Y_i | X_i, D)$ for each sample in both the training and test sets. Using a 0.5 threshold on this probability to classify an SMS as spam or ham, the model's accuracies on the training and test sets are 0.91 and 0.86, respectively.

The attacks involve poisoning the training set. Among the words significantly associated with the Spam class, the coefficients for the words \textit{your} and \textit{send} have posterior means of 1.01 and 0.58, standard deviations of 0.42 and 0.68, and probabilities of being negative of 0.006 and 0.19, respectively. These words appear in 191 and 51 SMSs in the training set. 
We perform one experiment for each of these words, with the attacker's goal being to reverse the inference on the target word, shifting the majority of the posterior mass to negative values. The adversarial target distribution $\pi_A(\beta)$ is defined as a multivariate Gaussian through the following steps:

\begin{itemize}
    \item Compute a Laplace approximation $\mathcal{N}(\mu_L, \Sigma_L)$ of the true posterior $\pi(\beta | D)$.
    \item Initialize $\mu_A \gets \mu_L$, and modify the value associated with the target word: $\mu_{A, \text{word}} \gets -\mu_{L, \text{word}}$.
    \item Define $\pi_A(\beta) \coloneq \mathcal{N}(\mu_A, \Sigma_L)$.
\end{itemize}

\begin{figure*}[h]
\begin{multicols}{3}
\centering
    \includegraphics[width=0.9\linewidth]{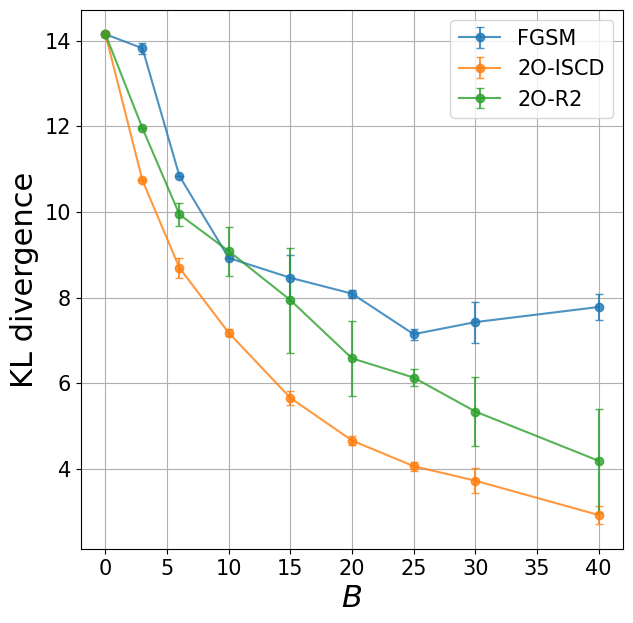}
    \subcaption{KL divergence vs $B$}\par
    
    \includegraphics[width=\linewidth]{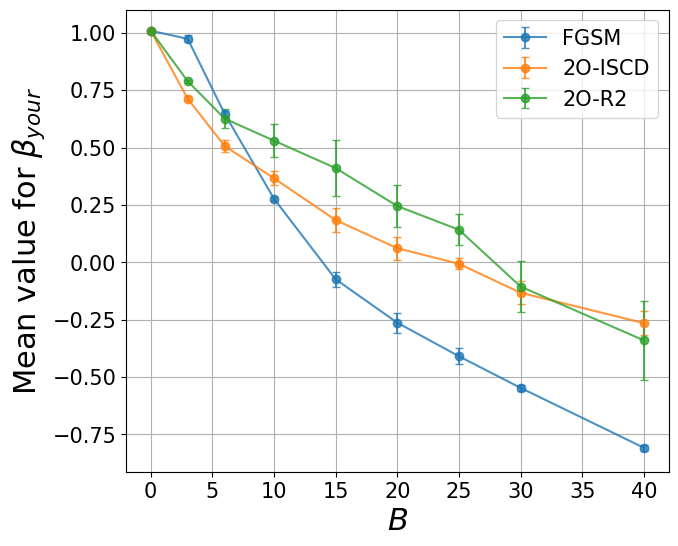}
    \subcaption{ $\mathbb{E}_{w^*}\left[\beta_{\text{your}}\right]$ vs $B$}\par
    
    \includegraphics[width=\linewidth]{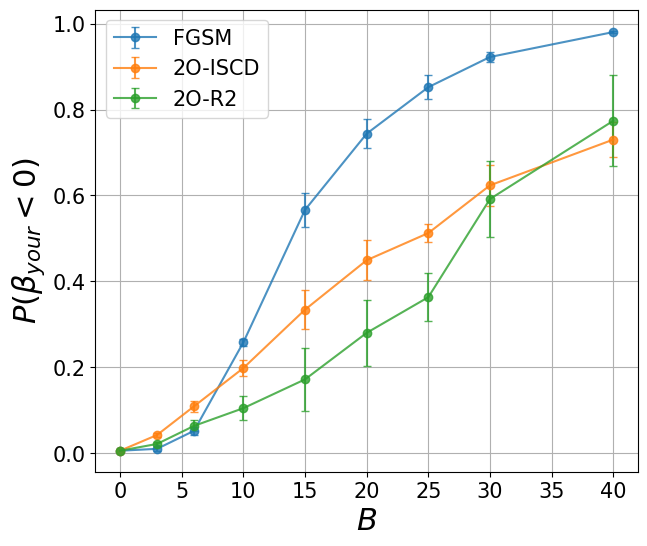}
    \subcaption{$\pi_{w^*}(\beta_{\text{your}}<0)$ vs $B$}\par
\end{multicols}
\caption{Metrics for the attack on the SMS spam classifier targeting the word \textit{your}}
\label{fig:SMS_metrics_your}
\end{figure*}

\begin{figure*}[h]
\begin{multicols}{3}
    \includegraphics[width=\linewidth]{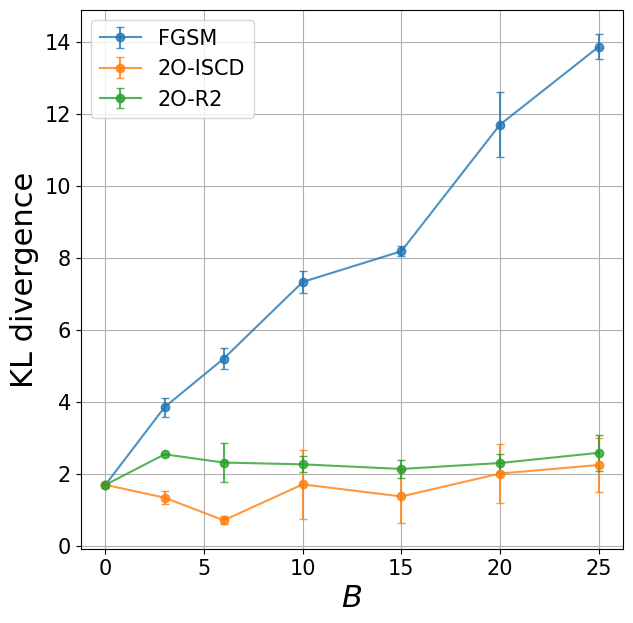}
    \subcaption{KL divergence vs $B$}\par
    \includegraphics[width=\linewidth]{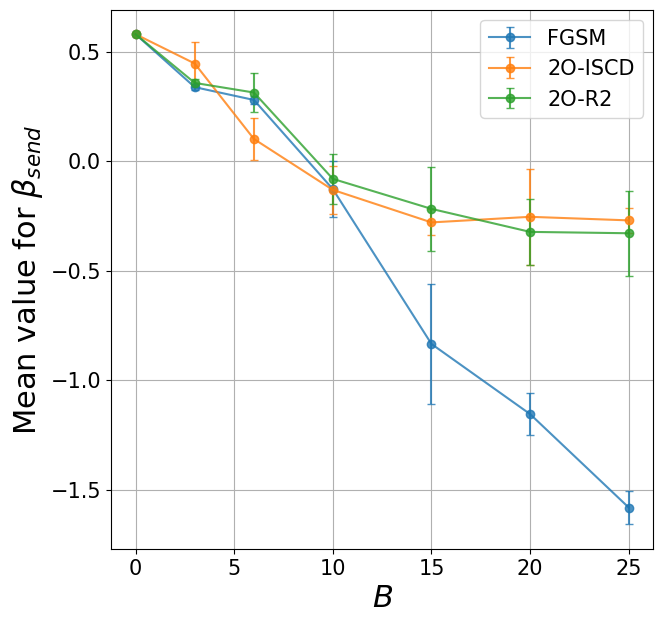}
    \subcaption{$\mathbb{E}_{w^*}\left[\beta_{\text{send}}\right]$ vs $B$}\par
    \includegraphics[width=\linewidth]{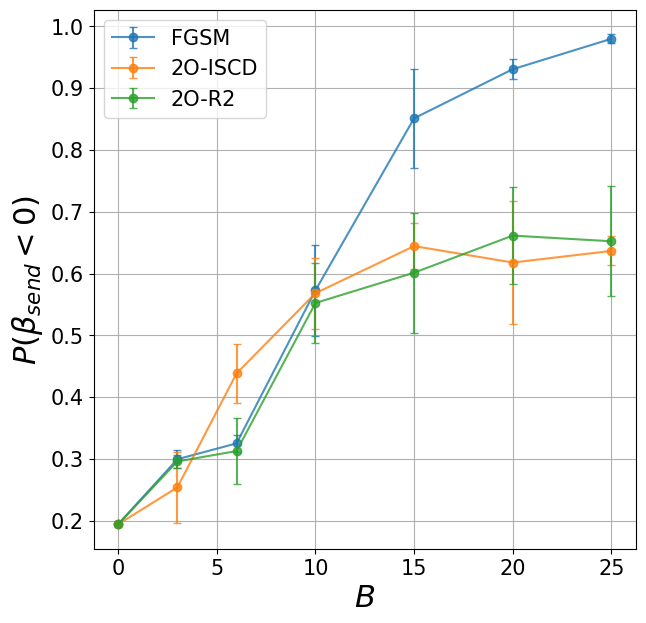}
    \subcaption{$\pi_{w^*}(\beta_{\text{send}}<0)$ vs $B$}\par
\end{multicols}
\caption{Metrics for the attack on the SMS spam classifier targeting the word \textit{send}}
\label{fig:SMS_metrics_send}
\end{figure*}

We ran the FGSM, 2O-ISCD, and 2O-R2 heuristics for both attacks across various intensities, repeating each experiment $N = 5$ times. For the target words \textit{your} and \textit{send}, Figures \ref{fig:SMS_metrics_your} and \ref{fig:SMS_metrics_send} show the KL divergence, the posterior mean of the attacked coefficient, and its probability of being negative as functions of $B$. The KL divergences are computed using Laplace approximations of the induced posterior.

In both experiments, FGSM produced the least effective attacks in terms of KL divergence. However, based on the last two metrics, FGSM outperformed the other two heuristics. To make $ \pi_{w^*}(\beta_{\text{target word}} < 0) $ exceed 0.9, FGSM required only 25 modifications for \textit{your}, corresponding to 12\% of the SMS messages containing the word and 2.2\% of the entire training set. For the word \textit{send}, 10 modifications were needed, affecting 15\% of the messages containing the word and 0.9\% of the training set. Neither 2O-ISCD nor 2O-R2 reached this threshold. 
This is likely due to the fact that, while FGSM is highly effective at shifting the marginal posterior of the parameter associated with the target word, it also significantly alters the posteriors of other parameters. As a result, this broader disruption causes an overall higher KL divergence compared to the more targeted modifications made by the other heuristics, which maintain more targeted attacks.



\begin{figure*}[h]
\begin{multicols}{2}
    \includegraphics[width=0.8\linewidth]{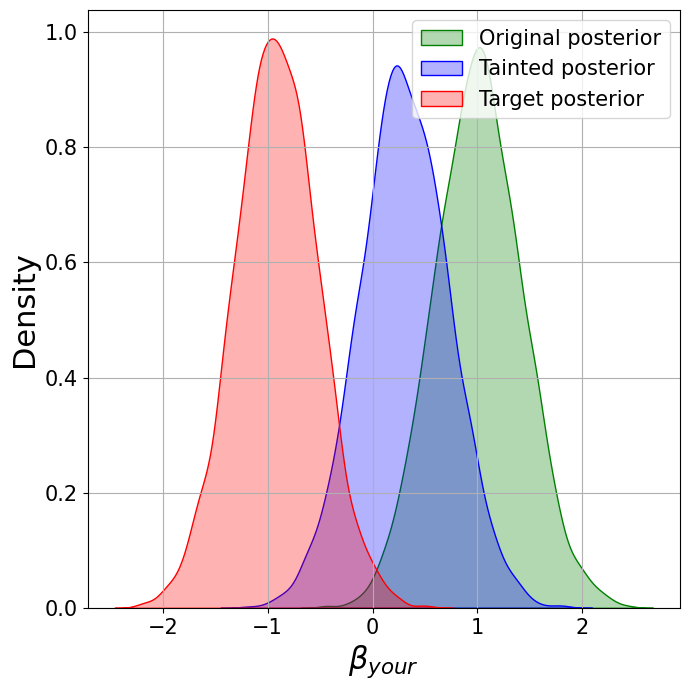}
    \subcaption{Targeting \textit{your}}\par
    \includegraphics[width=0.8\linewidth]{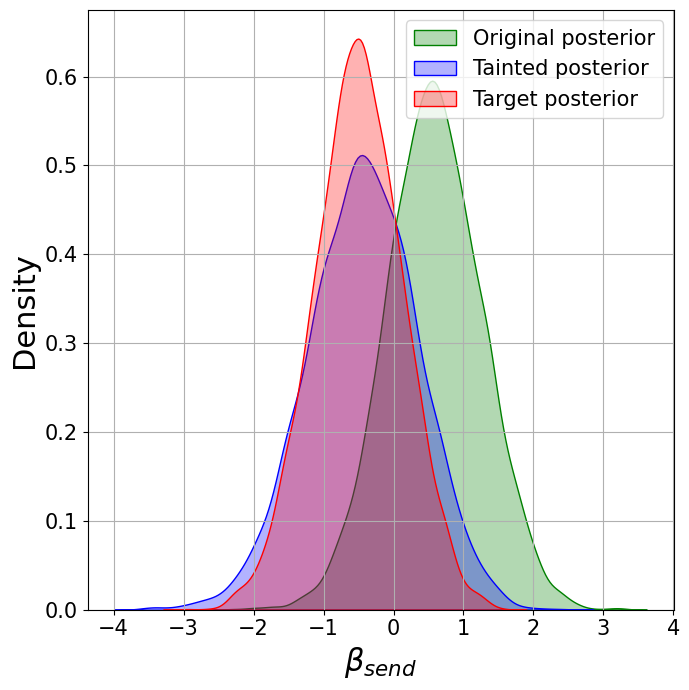}
    \subcaption{Targeting \textit{send}}\par
\end{multicols}
\caption{Tainted posteriors both obtained as attacks with 2O-R2 and $B=15$}
\label{fig:SMS_post_target_word}
\end{figure*}

\begin{table}[h!]
\centering
    \begin{multicols}{2}
    
    \begin{tabular}{|c||*{2}{c|}}
    \hline
       & with \textit{your}  & without \textit{your} \\
    \hline
    \hline
     Ham  & 30 + \textcolor{green}{2} - 0 = 32 & 466 + 0 - 0 = 466\\
    \hline
     Spam  & 161 + 0 - \textcolor{red}{8} = 153 & 322 + \textcolor{green}{1} - 0 = 323 \\
    \hline
    \end{tabular}
    \subcaption{2O-R2 with $B=15$, targeting $\beta_{your}$}
    \label{tab:words_your}
    \par
    
    \begin{tabular}{|c||*{2}{c|}}
    \hline
       & with \textit{send}  & without \textit{send} \\
    \hline
    \hline
     Ham  & 12 + \textcolor{green}{1} - 0 = 13 & 484 + 0 - 0 = 484 \\
    \hline
     Spam  & 39 + 0 - \textcolor{red}{7}  = 32 & 444 + \textcolor{green}{6} - 0 = 438\\
    \hline
    \end{tabular}
    \subcaption{2O-R2 with $B=15$, targeting $\beta_{send}$}
    \label{tab:words_send}
    \end{multicols}
\caption{SMS deletions and duplications grouped by their labels (Spam or Ham) and the presence/absence of the target word in the text. For each group, the values represent the number of texts in the training set, followed by the number of replicated SMSs, the number of deletions, and the number of SMSs in the poisoned dataset.
}
\end{table}

To provide insight into the effects of the attacks, Tables \ref{tab:words_your} and \ref{tab:words_send} detail the spam and ham SMSs containing the target word that were either duplicated or removed in two attacks with $B=15$, found using the 2O-R2 heuristic. As expected, since the attacker's goal is to shift the posterior of the coefficients associated with the target words towards negative values, leading the defender to infer that the presence of the target word indicates a ham message, most modifications involve deleting spam SMSs containing the target word and replicating some non-spam texts with the word.

\section{COMPUTATION TIMES}

This section reports the running times for the experiments presented in the main text.  
Table \ref{tab:time_simulation} shows the time required to compute attacks on the synthetic dataset with $n=100$ samples from Section 6.1.  
The small dataset size and the ability to sample directly from the posterior enable fast computations.  

\begin{table}[h]
    \centering
    \begin{tabular}{l c c c c c c}
        \toprule
        $B$ & FGSM & 1O-ISCD & 2O-ISCD & SGD-R2 & Adam-R2 & 2O-R2 \\
        \midrule
        5  & 0.0088  & 0.027  & 0.025  & 0.12  & 0.24  & 0.50  \\
        10 & 0.0046  & 0.038  & 0.038  & 0.075  & 0.16  & 0.16  \\
        15 & 0.0044  & 0.050  & 0.052  & 0.17  & 0.24  & 0.11  \\
        20 & 0.0044  & 0.065  & 0.067  & 0.20  & 0.26  & 0.11  \\
        25 & 0.0042  & 0.077  & 0.077  & 0.35  & 1.0  & 1.0  \\
        30 & 0.0042  & 0.10  & 0.077  & 0.35  & 0.12  & 1.9  \\
        35 & 0.0075  & 0.16  & 0.093  & 0.43  & 1.1  & 1.1  \\
        40 & 0.0041  & 0.16  & 0.12  & 0.43  & 0.89  & 0.89  \\
        45 & 0.0040  & 0.16  & 0.13  & 0.26  & 0.67  & 1.4  \\
        50 & 0.0039  & 0.15  & 0.14  & 0.28  & 2.6  & 3.6  \\
        55 & 0.0039  & 0.16  & 0.14  & 0.28  & 2.6  & 3.6  \\
        \bottomrule
    \end{tabular}
    \caption{
    Computation time (in seconds) for a single attack on the synthetic dataset, varying with $B$ and the chosen heuristic. Settings follow those in Section 6.1 for generating Figure 1.}
    \label{tab:time_simulation}
\end{table}

The running times for the experiments on the housing dataset (Section 6.2) are reported in Table \ref{tab:time_housing}.  
These attacks take longer than those in the simulation study due to the necessity to use MCMC to get samples from the posterior distributions. 
Additionally, the housing dataset is larger ($n=404$) than the synthetic one. 

\begin{table}[ht]
    \centering
    \begin{tabular}{lccc}
        \toprule
        B & FGSM & 2O-ISCD & 2O-R2 \\
        \midrule
        10  & 6.0   & 31.8  & 11.0   \\
        20  & 5.5 & 57.5  & 13.3 \\
        30  & 5.5 & 82.2  & 18.4 \\
        40  & 5.6 & 109.4 & 21.4 \\
        50  & 5.4 & 142.3 & 31.5 \\
        60  & 5.2 & 155.8 & 26.8 \\
        \bottomrule
    \end{tabular}
    \caption{
    Computation time (in seconds) for a single attack on the Housing dataset, varying with $B$ and the chosen heuristic. Settings follow those in Section 6.2 for generating Figure 2.}
    \label{tab:time_housing}
\end{table}

Table \ref{tab:time_microcredit} reports the running times for our experiments on the Mexico Microcredit dataset, including those from both the main text and the supplementary materials.  
The large dataset size ($n=16560$) makes MCMC sampling computationally expensive, increasing the overall runtime.

\begin{table}[ht]
    \centering
    \begin{tabular}{lcc}
        \toprule
        $B$ & $\lambda=100$ & $\lambda=10000$ \\
        \midrule
        1 & 20.3 & 19.8 \\
        20 & - & 160.1 \\
        \bottomrule
    \end{tabular}
    \caption{
    Computation time (in seconds) for a single attack on the Microcredit dataset using 2O-ISCD, varying with \( B \). Settings follow those in Section 6.3 of the main text and Section 6.2 of the supplementary materials.}
    \label{tab:time_microcredit}
\end{table}

\end{document}